\def\T{\top}
\definecolor{pinegreen}{rgb}{0.0, 0.47, 0.44}
\newcommand{\bel}{\begin{eqnarray}\label}
\newcommand{\eel}{\end{eqnarray}}
\newcommand{\bes}{\begin{eqnarray*}}
\newcommand{\ees}{\end{eqnarray*}}
\newcommand{\bei}{\begin{itemize}}
\newcommand{\beiftnt}{\begin{itemize}\footnotesize}
\newcommand{\eei}{\end{itemize}}
\def\benu{\begin{enumerate}}
\def\eenu{\end{enumerate}}
\def\complex{\mathop{{\rm I}\kern-.58em\hbox{\rm C}}\nolimits}
\def\mathbold{\boldsymbol} 
\def\ba{\mathbold{a}}
\def\bA{\mathbold{A}}
\def\bb{\mathbold{b}}
\def\bB{\mathbold{B}}
\def\bc{\mathbold{c}}
\def\bC{\mathbold{C}}
\def\bE{\mathbold{E}}
\def\bF{\mathbold{F}}
\def\bH{\mathbold{H}}
\def\bI{\mathbold{I}}
\def\bM{\mathbold{M}}
\def\bO{\mathbold{O}}
\def\bQ{\mathbold{Q}}
\def\bs{\mathbold{s}}
\def\bT{\mathbold{T}}
\def\bu{\mathbold{u}}
\def\bU{\mathbold{U}}
\def\bv{\mathbold{v}}
\def\bV{\mathbold{V}}
\def\bW{\mathbold{W}}
\def\bx{\mathbold{x}}
\def\bX{\mathbold{X}}
\def\by{\mathbold{y}}
\def\bY{\mathbold{Y}}
\def\bz{\mathbold{z}}
\def\bZ{\mathbold{Z}}
\def\bgamma{\mathbold{\gamma}}
\def\btheta{\mathbold{\theta}}
\def\bTheta{\mathbold{\Theta}}
\def\bLambda{\mathbold{\Lambda}}
\def\bmu{\mathbold{\mu}}
\def\bSigma{\mathbold{\Sigma}}
\def\cN{\mathcal{N}}
\begin{document}


\begin{titlepage}
    \begin{center}
        {\Large Nonlinear Multiple Response Regression and Learning of Latent Spaces}

        \vspace{.15in} Ye Tian,\footnotemark[1] Sanyou Wu,\footnotemark[2] and Long Feng\footnotemark[3]

        \vspace{.1in}
        \today
    \end{center}

    \footnotetext[1]{School of Computing and Data Science, The University of Hong Kong, Pokfulam Road, HKSAR, China  \\
 E-mail: ty0518@hku.hk \\.}

    \footnotetext[2]{School of Computing and Data Science, The University of Hong Kong, Pokfulam Road, HKSAR, China  \\
 E-mail: sanyouwsy@gmail.com \\.}

    \footnotetext[3]{School of Computing and Data Science, The University of Hong Kong, Pokfulam Road, HKSAR, China  \\
 E-mail: lfeng@hku.hk \\.}

    \paragraph{Abstract.}
Identifying low-dimensional latent structures within high-dimensional data has long been a central topic in the machine learning community, driven by the need for data compression, storage, transmission, and deeper data understanding. Traditional methods, such as principal component analysis (PCA) and autoencoders (AE), operate in an unsupervised manner, ignoring label information even when it is available. In this work, we introduce a unified method capable of learning latent spaces in both unsupervised and supervised settings. We formulate the problem as a nonlinear multiple-response regression within an index model context. By applying the generalized Stein's lemma, the latent space can be estimated without knowing the nonlinear link functions.
Our method can be viewed as a nonlinear generalization of PCA. Moreover, unlike AE and other neural network methods that operate as ``black boxes'', our approach not only offers better interpretability but also reduces computational complexity while providing strong theoretical guarantees. Comprehensive numerical experiments and real data analyses demonstrate the superior performance of our method. 

    \paragraph{Key words and phrases.} PCA, autoencoder, index model, latent space, Stein's Lemma, SVD.

\end{titlepage}

\section{Introduction}
\label{sec:introduction}
    \begin{figure}[b]
\centering\includegraphics[scale=0.4]{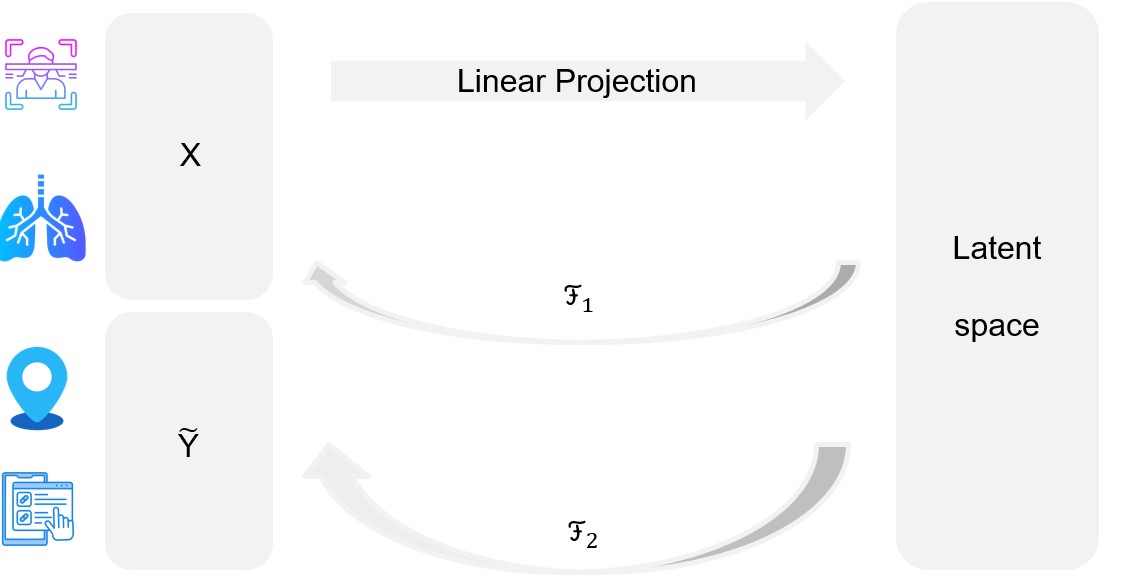}
\caption{The data we consider could be unlabeled features only, or feature-label pairs, for example,  CT and the corresponding diagnoses, or images of suspects and their positions on the surveillance screen, etc. The feature can be linearly embedded in a low-dimensional latent space, and the feature itself and possible labels can be generated from the embeddings through link functions. The goal of the work is to learn the latent space without knowing link functions.}
\end{figure}
Identifying low-dimensional latent spaces in high-dimensional data has been a longstanding focus in the statistics and machine learning community. This interest is driven not only by the goal of better understanding the data but also by practical applications such as data compression, storage, and transmission.

Principal Component Analysis (PCA, \citealp{pearson}) is arguably the simplest and most popular dimension reduction method. PCA and its extensions, such as sparse PCA \citep{Zou01062006} and robust PCA \citep{candes2009}, have found widespread applications in various fields, including computer vision, biomedical analysis, and more.
Autoencoders (AE, \citealt{vincent2008extracting}) can be viewed as a nonlinear generalization of PCA by replacing simple linear projections with complex nonlinear projections defined by neural networks. With the increasing prevalence of deep learning in recent years, AE and its extensions, such as contractive AE \citep{cae} and sparse AE \citep{makhzani2013k}, have proven effective in dimension reduction and representation learning. 

Despite their significant success, PCA and AE still encounter considerable challenges in learning the latent spaces for downstream tasks.
Firstly, both PCA and AE learn the latent space in a purely unsupervised manner. Although the low-dimensional manifold that the high-dimensional features align with reflects their inherent properties, relying solely on this information can be suboptimal when labels are available. Leveraging label information can enhance the quality of the learned latent space, making it more suitable for corresponding downstream tasks. Additionally, PCA is limited to learning the latent space in a linear setting. While autoencoders extend PCA to a nonlinear context, their ``black-box'' nature not only lacks necessary interpretability but also imposes a significant computational burden.



Rethinking the PCA, if we treat the input as the pseudo output, the data reconstruction problem can be reframed as a multiple response regression with reduced rank coefficients. 
Multiple response regression in a reduced rank setting has been extensively studied in the literature, although the majority of the work has focused on linear models. Related works include~\citet{yuan2007,mukherjee2011,chen2012rrr,lu2012convex}. With a reduced rank setup, dimension reduction can be achieved in multiple response regression, and the latent coefficient space can be learned. Beyond reduced rank regression, multiple response regression has also been considered under other settings, such as sparse matrix coefficients estimation~
\citep{LEE2012241, simon2013, wang2013block, Zou2022}. However, as these approaches are limited to linear models, complex nonlinear relationships between predictors and responses can hardly be captured and their applicability in real-world data analyses can be significantly restricted.



In the realm of nonlinear regression, the index model is a prominent statistical framework that extends linear regression to a nonlinear context. Specifically, consider the multi-index model (MIM), where the outcomes are modeled as $\EE(y|\bx) = f(\bB^{\T}\bx)$. In this formulation, $f$ represents an unknown nonlinear link function, and the matrix $\bB$ has a rank significantly smaller than the input dimension. Unlike other statistical methods that introduce nonlinearity, such as generalized linear models or spline regression, MIM inherently produces a latent embedding of the features through the introduction of $\bB$. This characteristic is particularly aligned with our objective of estimating the latent space.

Extensive research has been conducted on MIM, particularly on the estimation of the low-rank matrix $\bB$. The primary challenge lies in the fact that, without prior knowledge of the link functions, traditional methods such as least squares or maximum likelihood estimators are not applicable for estimating $\bB$. Nonetheless, the statistics community has introduced a plethora of innovative solutions over the years. For instance, in low-dimensional settings with fixed rank of $\bB$, \citet{friedman1981projection} introduced the projection pursuit regression, under which the link functions and low-rank matrix can be estimated through an alternating optimization process. 
A series of breakthroughs related to index models are proposed by Ker-Chau Li, including regression under link violation~\citep{li1989regression}, sliced inverse regression~\citep{li1991sliced}, and
principal Hessian directions~\citep{li1992principal}. 
In the high-dimensional setting where the rank of $\bB$ is allowed to grow with the sample size, the estimation of $\bB$ is considered under the sparse subspace assumption \citep{chen2010,tan2018convex}. 
 More recently, a line of research focused on using Stein's method in index models when the distribution of input is known.  For example,~\citet{yangm} proposed to learn $\bB$ in a non-Gaussian MIM through a second-order Stein’s method. Unlike previous approaches, 
 the Gaussian or elliptically symmetric assumption on the feature is not required in \citet{yangm}, thus the application of Stein's method in index models can be significantly broadened. 
Despite the success of these approaches, we shall note that most of the work concentrates on the scenario with univariate responses. To the best of our knowledge, the application of Stein's method to nonlinear settings with multiple responses has not been thoroughly explored.


In this work, we present a unified method for learning latent spaces in both unsupervised and supervised settings. We address the problem as a nonlinear multiple-response regression within the context of an index model. By leveraging the generalized Stein's lemma, our method estimates the latent space without requiring knowledge of the nonlinear link functions.
Our approach can be seen as a nonlinear extension of PCA. Furthermore, unlike neural network methods such as AEs that operate as ``black boxes'', our framework effectively manages nonlinear models without depending on gradient-based techniques. Consequently, our approach not only offers improved interpretability, but also reduces computational complexity, and provides strong theoretical guarantees.
Comprehensive numerical experiments and real data analyses demonstrate the superior performance of our method.

\textbf{Notations:} We use lowercase letters $a$, $b$ to denote scalars, bold lowercase letters $\ba$, $\bb$ to denote vectors, bold uppercase letters $\bA$, $\bB$ to denote matrices.  
For a matrix $\bA$, we let $\| \bA \|_{F}$ denote its Frobenius norm, $\sigma_{i}(\bA)$ denote the $i$-th largest singular value, $\text{SVD}_{l,i}(\bA)$ denote the top-$i$ left singular vectors corresponding to the first $i$ largest singular values, $\bA[:, :r]$ denote the sub-matrix of $\bA$ taking all rows and $1$st to $r$-th columns. For a symmetric matrix $\bSigma$, let $\bSigma^{\dagger}$ denote its pseudo-inverse, $\lambda_{i}(\bSigma)$ denote the $i$-th largest absolute values of eigenvalues and Eigen$_{i}(\bSigma)$ denote the rank-$i$ eigenvalue decomposition returning eigenvalues with first $i$ largest absolute values and corresponding top-$i$ eigenvectors or denote the top-$i$ leading eigenvectors only. 
Let $\cV_{p \times q}$ denote the set of row or column orthonormal $p\times q$ matrices, $\cV_r$ denote the set of $r\times r$ orthogonal matrices. We let $\NN_{+}$ denote positive integers. For $\forall n \in \NN_{+}$, let $[n]$ denote the set $\{1, \ldots, n\}$. For a random variable $x$, let $\|x\|_{\infty}=\sup_{\ell \in \NN_{+}} \{\EE (x^{\ell})\}^{1/\ell}$ denote its $L_{\infty}$-norm. 
In addition, given two sequences $\{x_{n}\}$ and $\{y_{n}\}$, we denote $x_{n} = \cO(y_{n})$, if $|x_{n}| \leq C|y_{n}|$
for some absolute constant $C$, denote $x_{n} = \Omega(y_{n})$ if $|x_{n}| \geq C|y_{n}|$ for some other absolute
constant $C$. We let $\nabla_{\bz}f(\bz)$ denote the derivative of $f(\bz)$ w.r.t $\bz$ and $\nabla_{\bz_{i}}f(\bz)$ the partial derivative w.r.t the $i$-th element of $\bz$. 

\section{Multiple Response Model and PCA}\label{sec:model}
\subsection{Nonlinear Multiple Response Model}
Given an input $\bx\in\mathbb{R}^p$ and an output $\by\in\mathbb{R}^q$, we consider a nonlinear multiple response model:
\begin{align}\label{model:1.2}
\by = \cF(\bB^{\top}\bx) + \bepsilon,
\end{align}
where $\bepsilon\in\mathbb{R}^q$ is a noise vector with zero-mean i.i.d. components $\epsilon_{1}, \ldots, \epsilon_{q}$, $\bB\in\mathbb{R}^{p\times r}$ is an unknown matrix,  and  $\cF=(f_{1}, \ldots, f_{q})^{\top}$ is a collection of unknown nonlinear functions with $f_{j}: \mathbb{R}^{r} \rightarrow \mathbb{R}$. We note that the nonlinear function $f_j$ can vary across different outcomes $j$. Thus, the component-wise formulation of the model~\eqref{model:1.2} is given by \begin{equation}\label{model:1} y_{j} = f_{j} (\bB^{\top}\bx) + \epsilon_{j}, \quad j \in [q]. \end{equation} 
Without loss of generality, we assume that the matrix $\bB$ has full column rank, meaning $\text{rank}(\bB) = r$. We consider a reduced rank setting where the rank $r$ is significantly smaller than $p$. 






For arbitrary nonlinear functions $f_j$, the coefficient matrix $\bB$ is unidentifiable in model~\eqref{model:1.2}. For instance, if $(\cF, \bB)$ is a solution to model~\eqref{model:1.2}, then $(\cF \circ \bO^{-\top}, \bB \bO)$ is also a solution for any invertible matrix $\bO \in \RR^{r \times r}$. Nevertheless, the column space of $\bB$ remains identifiable. Therefore, our goal is to learn the column space of the matrix $\bB$ without knowing the nonlinear functions $\cF$.



We shall emphasize that the feature-response pair $(\bx, \by)$ in model \eqref{model:1.2} is general, meaning it can include both real labels $\Tilde{\by}$ and pseudo labels $\bx$, which are the actual input features. When the response vector $\by$ corresponds solely to the input features, model~\eqref{model:1.2} transitions to the classical unsupervised setup
\begin{align}\label{model:1.3}   
\bx = \cF(\bB^{\top}\bx) + \bepsilon.
\end{align}
In this setup, the low-rank matrix $\bB$ serves a role similar to that of the leading principal components in PCA. Specifically, when $\cF$ is linear, the empirical least squares estimator of $\bB$ in model~\eqref{model:1.3} shares the same column space as the leading principal components obtained through PCA.


By allowing the response vector to be general, i.e., $\by = (\bx^{\T}, \Tilde{\by}^{\T})^{\T}$, the column space of $\bB$ can be learned by leveraging information from both the features and the labels. Given that many real-world applications have limited labeled data, learning $\bB$ in this semi-supervised manner can be especially advantageous.
On the other hand, it is important to recognize that this model essentially assumes that the pseudo output and real output share the same latent space of coefficients. While this assumption might be stringent for certain scenarios, extensive analysis of real data suggests that latent spaces estimated using this semi-supervised approach often outperform those estimated solely from features or limited labels in downstream tasks. For more details, please refer to Section~\ref{sec:rda2}.



\subsection{The Stein's Method for Latent Space Estimation}\label{sec:Stein}
In this subsection, we present two novel approaches for estimating the column space of $\bB$, utilizing both first-order and second-order Stein's methods. We begin by introducing Stein's score functions.
\begin{definition}
Let $\bx \in \mathbb{R}^{p}$ be a random vector with density $P(\bx)$. If $P(\bx)$ is differentiable, the first-order score function, denoted by $\bs(\bx): \mathbb{R}^{p} \rightarrow \mathbb{R}^{p}$, is defined as
\begin{align}\label{eq:def-fs}
        \bs(\bx) \coloneqq - \nabla_{\bx}[\ln\{P(\bx)\}] = - \nabla_{\bx}  P(\bx)  / P(\bx).
\end{align}
Moreover, let $\bs_{i}(\bx)$ denote the $i$-th element of $\bs(\bx)$ for $i=1,\ldots, p$. If $P(\bx)$ is further twice differentiable, the second-order score function $\bT(\bx): \RR^{p} \rightarrow \RR^{p \times p}$ is defined as
\begin{equation}\label{eq:def-ss}
\bT(\bx) = \nabla_{\bx}^2 P(\bx) /P(\bx) = \bs(\bx)\bs(\bx)^\T - \nabla_{\bx} \bs(\bx). 
\end{equation}
For $\forall k,l \in [p]$, let $\bT_{k,l}(\bx)$ denote the $k,l$-th entry of $\bT(\bx)$. 
\end{definition}

Under Model \eqref{model:1.2}, we have the following first-order Stein's lemma. 
\begin{lemma}(First-order Stein's Lemma)\label{lem:fs}
Suppose model~\eqref{model:1.2} holds. Assume that the expectations $\mathbb{E}\{y_{j}\bs(\bx)\}$ as well as $\mathbb{E}\{\nabla_{\bz}f_{j}(\bB^{\top}\bx)\}$ both exist and well-defined for $j \in [q]$. Further assume that $\lim_{\| \bx\| \rightarrow \infty } f_{j}(\bB^{\top}\bx) P(\bx) \rightarrow 0$.  Then, we have  
\begin{align}\label{eq:fss}
\mathbb{E}\{ y_{j} \bs(\bx)\} = \bB \mathbb{E}\{ \nabla_{\bz}f_{j} (\bB^{\top}\bx) \}.   
\end{align}
Collectively, equation~\eqref{eq:fss} suggests
\begin{align}\label{eq:fsu}
    \mathbb{E}\{ \bs(\bx) \by^{\top}   \} = \bB \bM_1, 
\end{align}
where $\bM_{1}=\mathbb{E} \{ \nabla_{\bz}f_{1} (\bB^{\top}\bx), \ldots, \nabla_{\bz}f_{q} (\bB^{\top}\bx)\}$.
\end{lemma}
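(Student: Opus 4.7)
The plan is to prove \eqref{eq:fss} by writing $\EE\{y_j \bs(\bx)\}$ as a vector-valued integral against the density $P(\bx)$, applying integration by parts componentwise, and then invoking the chain rule to factor out $\bB$.

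First I would substitute the model \eqref{model:1.2} and split
\begin{equation*}
\EE\{y_j \bs(\bx)\} = \EE\{f_j(\bB^\T \bx)\, \bs(\bx)\} + \EE\{\epsilon_j\, \bs(\bx)\}.
\end{equation*}
The noise term vanishes: since $\epsilon_j$ has mean zero and is independent of $\bx$, $\EE\{\epsilon_j \bs(\bx)\} = \EE(\epsilon_j)\, \EE\{\bs(\bx)\} = \mathbf{0}$. (Here I implicitly use that the tail condition on $P$ guarantees $\EE\{\bs(\bx)\}=\mathbf{0}$ by the standard argument that $\int \nabla_\bx P(\bx)\, d\bx = \mathbf{0}$.)

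Next I would expand the signal term using the definition \eqref{eq:def-fs}:
\begin{equation*}
\EE\{f_j(\bB^\T \bx)\, \bs(\bx)\} = -\int f_j(\bB^\T \bx)\, \nabla_\bx P(\bx)\, d\bx.
\end{equation*}
I would apply multivariate integration by parts componentwise. For the $i$-th coordinate, the boundary contribution vanishes because, by hypothesis, $f_j(\bB^\T \bx) P(\bx) \to 0$ as $\|\bx\| \to \infty$, leaving
\begin{equation*}
-\int f_j(\bB^\T \bx)\, \partial_{x_i} P(\bx)\, d\bx \;=\; \int \partial_{x_i}\!\left[f_j(\bB^\T \bx)\right] P(\bx)\, d\bx.
\end{equation*}
Stacking the coordinates and using the chain rule $\nabla_\bx\!\left[f_j(\bB^\T \bx)\right] = \bB\, \nabla_\bz f_j(\bz)\big|_{\bz=\bB^\T \bx}$ gives
\begin{equation*}
\EE\{f_j(\bB^\T \bx)\, \bs(\bx)\} = \bB\, \EE\{\nabla_\bz f_j(\bB^\T \bx)\},
\end{equation*}
which establishes \eqref{eq:fss}. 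The matrix identity \eqref{eq:fsu} then follows by assembling the $q$ column identities into
\begin{equation*}
\EE\{\bs(\bx)\by^\T\} = \bB\, \bigl[\,\EE\{\nabla_\bz f_1(\bB^\T\bx)\},\ldots,\EE\{\nabla_\bz f_q(\bB^\T\bx)\}\,\bigr] = \bB\bM_1.
\end{equation*}

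The only delicate step is justifying the componentwise integration by parts: one needs Fubini to reduce to a one-dimensional integration by parts in each coordinate $x_i$ (with the other coordinates fixed), and then the boundary vanishing must be read off from the stated tail condition $\lim_{\|\bx\|\to\infty}f_j(\bB^\T\bx)P(\bx)\to 0$. Assuming the stated integrability hypotheses ensure both sides are well-defined and that Fubini applies, the rest is bookkeeping.
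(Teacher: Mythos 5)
Your proposal is correct and follows essentially the same route as the paper's proof: split off the noise term (which vanishes by independence and zero mean), rewrite $\EE\{f_j(\bB^\T\bx)\bs(\bx)\}$ as $-\int f_j(\bB^\T\bx)\nabla_\bx P(\bx)\,d\bx$, integrate by parts coordinatewise with the boundary term killed by the stated tail condition, and apply the chain rule to extract $\bB$ before stacking over $j$. The only cosmetic difference is that the paper works one coordinate $\bs_i(\bx)$ at a time and writes the boundary term $\Delta$ explicitly, whereas you phrase the same computation in vector form and note the Fubini reduction; your parenthetical appeal to $\EE\{\bs(\bx)\}=\mathbf{0}$ is unnecessary since $\EE(\epsilon_j)=0$ already suffices.
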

Lemma~\ref{lem:fs} serves as the cornerstone of our first-order method of learning $\bB$. The key point is that by the Stein's lemma, $\bB$ can be completely split from derivatives of link functions so that we can learn $\bB$ without knowing their exact formulas.
Given $n$ samples $\{(\bx_i, \by_{i})\}^{n}_{i=1}$, we propose the following first-order estimator of $\bB$:

\begin{align}\label{eq:def-fe}
    \widehat{\bB} = \text{SVD}_{l,r}\left \{ \frac{1}{n} \sum_{i=1}^{n}  \bs(\bx_{i}) \by_{i}^{\top} \right \}.
\end{align}
We summarize our approach in Algorithm \ref{firstalg} below. Note that the score function is required in Algorithm \ref{firstalg}. When $\bs(\cdot)$ is unknown, a plug-in approach can be implemented by replacing the score function with its estimates $\hat{\bs}(\cdot)$. 
\begin{algorithm}[ht!]
\caption{First-order Method\label{firstalg}}
\begin{algorithmic}
   \STATE {\bfseries Require:} Dataset $\bx_i\in\mathbb{R}^p$, $\by_{i}\in \mathbb{R}^q$ for $i=1,\ldots,n$, rank $r$, score function $\bs(\cdot)$\\
   \quad 1: Calculate
   $\bM \leftarrow (1/n)\sum_{i=1}^{n}  \bs(\bx_{i}) \by_{i}^{\top}$\\
    \quad 2: Perform SVD:  $\left(\bU, \bSigma, \bV\right) \leftarrow \text{SVD}(\bM) $.\\
   \quad 3: {\bf return} top-$r$ left singular vectors, 
$\widehat{\bB} \leftarrow \bU[:, :r]$
\end{algorithmic}
\end{algorithm}

Beyond the first-order approach, a second-order Stein's method takes the following form.
\begin{lemma}(Second-order Stein's Lemma)\label{lem:ss} 
Suppose model~\eqref{model:1.2} holds. Assume that the expectations $\mathbb{E}\{y_{j}\bT(\bx)\}$ as well as $\mathbb{E}\{\nabla^{2}_{\bz}f_{j}(\bB^{\top}\bx)\}$ both exist and well-defined for $j \in [q]$. Further assume that $\lim_{\| \bx\| \rightarrow \infty } f_{j}(\bB^{\top}\bx) P(\bx) \rightarrow 0$ and $\lim_{\| \bx\| \rightarrow \infty } \nabla_{\bz} f_{j}(\bB^{\top}\bx) P(\bx) \rightarrow 0$.
Then, 
\begin{align}\label{eq:sss}
\mathbb{E}\{ y_{j} \bT(\bx)\} = \bB\mathbb{E}\{ \nabla_{\bz}^2f_{j} (\bB^{\top}\bx)\}\bB^\T .   
\end{align}
Collectively, equation~\eqref{eq:sss} suggests
\begin{align}\label{eq:ssu}
\mathbb{E}\left\{\frac{1}{q} \sum_{j=1}^q y_{j} \bT(\bx)\right\} = \bB\bM_2\bB^\T,
\end{align}
where $\bM_{2}=\mathbb{E}\left\{(1/q)\sum_{j=1}^q \nabla_{\bz}^2f_{j} (\bB^{\top}\bx)\right\}$.
\end{lemma}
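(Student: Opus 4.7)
The plan is to reduce the claim to a double integration by parts applied to the Hessian of $P(\bx)$, using the two decay hypotheses to kill the boundary terms, and then to convert the resulting Cartesian Hessian into the $\bB \cdot \bB^\top$ sandwich via the chain rule. Equation~\eqref{eq:ssu} will then follow from~\eqref{eq:sss} by linearity of expectation.

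First, I would strip out the noise: since $\epsilon_j$ is zero-mean and independent of $\bx$, it is independent of $\bT(\bx)$, so $\mathbb{E}\{\epsilon_j \bT(\bx)\} = \mathbf{0}$ and
\[
\mathbb{E}\{y_j \bT(\bx)\} \;=\; \mathbb{E}\{f_j(\bB^\top\bx)\,\bT(\bx)\} \;=\; \int f_j(\bB^\top\bx)\,\nabla_\bx^2 P(\bx)\,\dd\bx,
\]
using the definition $\bT(\bx) = \nabla_\bx^2 P(\bx)/P(\bx)$.

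Next, I would integrate by parts twice, entry by entry. For a fixed pair $(k,\ell)$, a first integration in $x_\ell$, with the boundary term killed by the hypothesis $\lim_{\|\bx\|\to\infty} f_j(\bB^\top\bx)P(\bx) = 0$, gives
\[
\int f_j(\bB^\top\bx)\,\frac{\partial^2 P(\bx)}{\partial x_k \partial x_\ell}\,\dd\bx \;=\; -\int \frac{\partial f_j(\bB^\top\bx)}{\partial x_\ell}\,\frac{\partial P(\bx)}{\partial x_k}\,\dd\bx.
\]
A second integration in $x_k$, with boundary term killed by $\lim_{\|\bx\|\to\infty}\nabla_\bz f_j(\bB^\top\bx) P(\bx) = \mathbf{0}$ together with the chain-rule identity $\nabla_\bx f_j(\bB^\top\bx) = \bB \,\nabla_\bz f_j(\bB^\top\bx)$, produces $\int [\partial^2 f_j(\bB^\top\bx)/\partial x_k\partial x_\ell]\,P(\bx)\,\dd\bx$. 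Reassembling the $(k,\ell)$ entries yields $\mathbb{E}\{\nabla_\bx^2 f_j(\bB^\top\bx)\}$. Applying the chain rule once more gives $\nabla_\bx^2 f_j(\bB^\top\bx) = \bB\, [\nabla_\bz^2 f_j](\bB^\top\bx)\, \bB^\top$; taking expectations and substituting proves~\eqref{eq:sss}. Averaging $(1/q)\sum_{j=1}^q$ on both sides, pulling the finite sum inside the expectation, and collecting terms into $\bM_2$ then yields~\eqref{eq:ssu}.

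The main obstacle is the careful bookkeeping for the two boundary-vanishing conditions: to execute the first integration by parts one also needs $\partial P(\bx)/\partial x_k$ to decay quickly enough that the cross term $f_j(\bB^\top\bx)\,\partial P(\bx)/\partial x_k$ vanishes at infinity, which is not stated explicitly but is implicit in the assumed existence and finiteness of $\mathbb{E}\{y_j \bT(\bx)\}$ and $\mathbb{E}\{\nabla_\bz^2 f_j(\bB^\top\bx)\}$. Under the smooth densities of interest (e.g., Gaussian or sub-Gaussian $\bx$) these tail conditions are automatic; in the general case they can be absorbed by a standard truncation-and-limit argument. The remainder of the derivation is routine differentiation and algebra.
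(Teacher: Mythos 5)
Your proposal is correct and follows essentially the same route as the paper: strip the independent zero-mean noise, perform two entrywise integrations by parts on $\int f_j(\bB^\top\bx)\,\nabla^2_{\bx_k,\bx_l}P(\bx)\,\dd\bx$ with the boundary terms killed by the stated decay conditions, and use the chain rule to obtain the $\bb_k^\top\,\EE\{\nabla^2_{\bz}f_j\}\,\bb_l$ entries before stacking. Your remark that the first boundary term actually involves $f_j(\bB^\top\bx)\,\partial P(\bx)/\partial x_l$ rather than $f_j(\bB^\top\bx)P(\bx)$, and hence needs a slightly stronger (but standard) decay condition than the one literally stated, is a fair observation — the paper's own proof simply asserts that this term vanishes.
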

Based on Lemma \ref{lem:ss}, we propose the following  second-order estimator:
\begin{align}\label{eq:def-se}
    \tilde{\bB} = \text{Eigen}_{r}\left \{ \frac{1}{nq} \sum_{i=1}^{n}\sum_{j=1}^q \by_{ij} \bT(\bx_{i}) \right \}. 
\end{align}
We summarize the approach in Algorithm \ref{secondalg} below. 

\begin{algorithm}[ht!]
\caption{Second-order Method\label{secondalg}}
\begin{algorithmic}
 \STATE {\bfseries Require:} Dataset $\bx_i\in\mathbb{R}^p$, $\by_{i}\in \mathbb{R}^q$ for $i=1,\ldots,n$, rank $r$, score function $\bT(\cdot)$\\
   \quad 1:  $\bM \leftarrow (nq)^{-1} \sum_{i=1}^{n}\sum_{j=1}^q \by_{i,j} \bT(\bx_{i})$\\
    \quad 2: Perform rank-$r$ Eigenvalue decomposition: $$\left(\bU, \bSigma\right )\leftarrow \text{Eigen}_r(\bM)  $$
   \quad 3: {\bf return} top-$r$ leading eigenvectors, 
$\Tilde{\bB} \leftarrow \bU$
\end{algorithmic}
\end{algorithm}

We note that both first-order and second-order approaches are applicable for arbitrary labels $\by$. This implies that our methods can be utilized in both supervised and unsupervised settings. Additionally, in scenarios with limited labeled data, we can estimate $\bB$ in a semi-supervised manner by making slight adjustments.
For instance, consider a situation where there are additional $N-n$ unlabeled samples, denoted as $\{\bx_{i}\}_{i=n+1}^{N}$, following the same marginal distribution. In such setups, $\bB$ can be estimated in a semi-supervised manner using the first-order method:
\begin{align}\label{eq:ssf}
    \text{SVD}_{l,r}\left \{ \frac{1}{n} \sum_{i=1}^{n}  \bs(\bx_{i}) \Tilde{\by}_{i}^{\top} + \frac{1}{N}  \sum_{i=1}^{N} \bs(\bx_{i}) \bx_{i}^{\top} \right \},
\end{align}
or the second-order method:
\begin{align*}
& \text{Eigen}_{r}\left \{ \frac{1}{n(q-p)} \sum_{i=1}^{n}\sum_{j=1}^{q-p} \Tilde{\by}_{ij} \bT(\bx_{i}) +\frac{1}{Np}\sum_{i=1}^{N}\sum_{j=1}^p \bx_{ij} \bT(\bx_{i}) \right \}.
\end{align*}

Beyond first and second-order methods, higher-order Stein's methods can also be applied. Related work includes \citet{janzamin2014score, balasubramanian2018}, although they do not consider multiple response settings. However, it is important to note that higher-order approaches may face additional challenges, such as increased sample size requirements and greater computational complexities. Extensive simulation studies and real data analyses presented in Section~\ref{sec:simulation} and Section~\ref{sec:rda} indicate that first and second-order methods would be sufficient in practice under general circumstances.

\subsection{Relationships to Classical Latent Space Learning Methods}

We recall the reconstruction task in model~\eqref{model:1.3}:
\begin{align*}
\bx = \cF(\bB^{\top}\bx) + \bepsilon.
\end{align*}
Under this model setup, when $\{\bx_i\}_{i=1}^n$ follows a Gaussian distribution, the following theorem indicates that the column space of $\bB$ learned using the first-order method is equivalent to that of PCA, even if $\cF$ is nonlinear.
\begin{theorem}\label{thm:pca}
Assume that model~\eqref{model:1.3} holds with $\{\bx_i\}_{i=1}^n$ following an i.i.d. Gaussian distribution with zero mean. For any arbitrary link function $\cF$, the column space learned from \eqref{eq:def-fe} matches the column space of the first $r$ leading principal components of~$\widehat\bSigma=(1/n)\sum_{i=1}^n \bx_i \bx_i^\T$.
\end{theorem}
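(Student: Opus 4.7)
The plan is to follow Algorithm~\ref{firstalg} verbatim under the unsupervised specialization $\by_i = \bx_i$, substitute the explicit Gaussian score, and recognize the resulting target matrix as a left $\bSigma^{-1}$-transform of the sample second-moment matrix whose top-$r$ left singular subspace ought to coincide with that of $\widehat\bSigma$.

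\emph{Why the link does not enter.} The expression defining $\widehat\bB$ in~\eqref{eq:def-fe} is a functional of the inputs $\{\bx_i\}$ and the score $\bs(\cdot)$ alone; in the unsupervised regime of~\eqref{model:1.3} the response coincides with the input, so the unknown $\cF$ never appears anywhere in the algorithm. The non-trivial content of the ``for any arbitrary $\cF$'' clause is therefore that the Stein output is \emph{invariant} to which link is supposed to generate the Gaussian $\bx_i$: whatever $\cF$ (consistent with the model or not), Algorithm~\ref{firstalg} returns the same $\widehat\bB$. Equivalently, one may contrast with Lemma~\ref{lem:fs}, which at population level would force $\bB\bM_1 = \bI_p$ and is degenerate for $r<p$; the theorem sidesteps this by working purely at the sample level.

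\emph{Explicit computation.} From~\eqref{eq:def-fs}, the score of a zero-mean Gaussian $N(\mathbf{0},\bSigma)$ is $\bs(\bx) = \bSigma^{-1}\bx$. Plugging this together with $\by_i=\bx_i$ into~\eqref{eq:def-fe} and pulling the constant matrix $\bSigma^{-1}$ outside the sum gives
\begin{equation*}
\widehat\bB \;=\; \mathrm{SVD}_{l,r}\!\left(\frac{1}{n}\sum_{i=1}^{n}\bSigma^{-1}\bx_i\bx_i^\top\right) \;=\; \mathrm{SVD}_{l,r}\!\left(\bSigma^{-1}\widehat\bSigma\right).
\end{equation*}
Since $\widehat\bSigma$ is symmetric positive semidefinite, its top-$r$ left singular vectors are exactly its top-$r$ eigenvectors and hence the first $r$ leading principal components; the remaining task is to argue that multiplication on the left by $\bSigma^{-1}$ preserves the top-$r$ left singular subspace.

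\emph{Main obstacle.} When $\bSigma$ commutes with $\widehat\bSigma$ --- in particular when $\bSigma$ is a positive scalar multiple of $\bI$ --- the preservation is immediate: $\bSigma^{-1}\widehat\bSigma$ inherits the eigenframe of $\widehat\bSigma$ and $\bSigma^{-1}$ only rescales singular values without reordering them. For a generic anisotropic Gaussian, however, the top-$r$ left singular vectors of $\bSigma^{-1}\widehat\bSigma$ are eigenvectors of $\bSigma^{-1}\widehat\bSigma^{2}\bSigma^{-1}$ and need not agree with those of $\widehat\bSigma$. I therefore expect the proof either to read ``Gaussian with zero mean'' as implicitly standardized (so $\bSigma\propto\bI$ and the claim reduces to the commuting case), or to reinterpret ``principal components of $\widehat\bSigma$'' in the $\bSigma$-metric via the generalized eigenvalue problem $\widehat\bSigma\bu=\lambda\bSigma\bu$, whose solutions are eigenvectors of $\bSigma^{-1}\widehat\bSigma$. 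A cleaner technical route is a whitening pre-processing $\bz_i := \bSigma^{-1/2}\bx_i$, which has identity covariance, reduces $\mathrm{SVD}_{l,r}(\bSigma^{-1}\widehat\bSigma)$ to PCA of the whitened second moment, and pushes the identification back to $\widehat\bSigma$ by undoing the whitening; reconciling this whitened-PCA with standard PCA of $\widehat\bSigma$ under the singular (rather than eigen) subspace of $\bSigma^{-1}\widehat\bSigma$ is the point I would expect to require the most care.
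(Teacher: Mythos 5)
Your computation is the natural one and your diagnosis of the obstacle is exactly right, but the obstacle is fatal for the route you chose, and the paper's proof avoids it by a reading of \eqref{eq:def-fe} that none of your three proposed escapes matches. Substituting the \emph{population} score $\bs(\bx)=\bSigma^{-1}\bx$ gives $\widehat\bB=\text{SVD}_{l,r}(\bSigma^{-1}\widehat\bSigma)$, and, as you observe, for anisotropic $\bSigma$ the top-$r$ left singular subspace of $\bSigma^{-1}\widehat\bSigma$ (the leading eigenspace of $\bSigma^{-1}\widehat\bSigma^{2}\bSigma^{-1}$) need not coincide with the leading eigenspace of $\widehat\bSigma$; along this route the stated identity is simply false in general, and neither standardization, nor $\bSigma$-metric PCA, nor whitening rescues the claim as written. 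What the paper actually does is interpret ``the column space learned from \eqref{eq:def-fe}'' as the plug-in estimator in which the unknown covariance in the Gaussian score is replaced by the sample covariance, i.e.\ $\hat\bs(\bx)=\widehat\bSigma^{\dagger}\bx$. The Stein matrix then collapses to $\widehat\bSigma^{\dagger}\widehat\bSigma$, the orthogonal projector onto the range of $\widehat\bSigma$, whose left singular frame is by construction the eigenframe $\bU$ of $\widehat\bSigma$, and the algorithm returns $\bU[:,:r]$, the first $r$ leading principal components. The missing idea is therefore not a commutation or change-of-metric argument at all: it is that the left factor and the second-moment matrix are built from the \emph{same} $\widehat\bSigma$, so they share an eigenbasis automatically.

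A secondary remark: even the paper's resolution leaves a wrinkle you would also have had to confront, namely that all nonzero singular values of the projector $\widehat\bSigma^{\dagger}\widehat\bSigma$ are equal to one, so ``top-$r$ left singular vectors'' is not uniquely determined and the identification with $\bU[:,:r]$ is a convention tied to the ordering inherited from the eigendecomposition of $\widehat\bSigma$. That ambiguity is much milder than, and independent of, the eigenbasis mismatch that blocks your population-score argument, but it is worth flagging if you rewrite the proof.
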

Theorem~\ref{thm:pca} suggests that PCA can be viewed as a special case of the first-order approach with Gaussian design. For other distributions with nonlinear $\cF$, our approach offers a general framework for learning the latent space of $\bB$.  



Furthermore, model~\eqref{model:1.3} can be seen as a special case of an AutoEncoder, where the encoder is a single linear map and the decoder is a general nonlinear function. With this design, the encoder can be learned directly without needing to know the decoder. By avoiding gradient-descent-based methods for learning $\bB$, our approach is not only more computationally efficient but also comes with strong theoretical guarantees, as demonstrated in Section \ref{sec:theory}.


\section{Theoretical Analysis}\label{sec:theory}
In this section, we provide theoretical analyses of both the first-order and second-order approaches. Since only the column space of $\bB$ can be identified as discussed earlier, we use the following distance measure for two matrices $\bTheta_{1}, \bTheta_{2} \in \RR^{p \times r}$: 
\begin{align}\label{dist}
\text{dist}\left(\bTheta_{1}, \bTheta_{2}\right)=\inf_{\bV\in\mathcal{V}_r}\|\bTheta_{1}-\bTheta_{2} \bV\|_F.    
\end{align}
It should be noted that when $r=1$, this column space distance reduces to the vector distance $\|\bTheta_{1}-\bTheta_{2}\|_2$ when $\|\bTheta_{1}\|_2=\|\bTheta_{2}\|_2$. We prove that the learned column space is consistent with the true column space with the distance (\ref{dist}) converging to 0. In the following discussion, when the context is clear, we denote $\bz$ as $\bB^\T\bx$ and $\bz_{i}$ as $\bB^\T\bx_{i}$.

\subsection{Theoretical Guarantees for the First-order Method}\label{sec:fs}
To establish the convergence properties, we impose the following assumptions for the first-order method.



\begin{assumption}\label{assum:1}
   Suppose $\bx_i \in \RR^p$ are i.i.d. observations. For any $i \in [n]$, $r \in [p]$, assume that the score $\bs_k(\bx_i)$ is a sub-Gaussian random variable, i.e., $\sup_{\ell \in \NN_{+}}\ell^{-1/2}[\EE\{|\bs_{k}(\bx_{i})|^{\ell}\}]^{1/\ell} \leq C $, where $C$ is a positive constant.
\end{assumption}

\begin{assumption}\label{assum:2}
   Assume that there exists a constant $C >0$  such that $\sup_{\ell \in \NN_{+}}[\EE\{ 
   f_j(\bz)^{2\ell}\}]^{1/2\ell} \leq C$ {for all $j \in [q]$.} 
\end{assumption}
\begin{assumption}\label{assum:3}
    For $j \in [q]$ and  $ \ell \in [r] $, assume that $\EE[\nabla_{\bz_\ell}f_j(\bz)] = \Omega(1/\sqrt{r})$.
\end{assumption}

\begin{assumption}\label{assum:4}
Assume that there exists an absolute constant $C>0$ such that $\sigma_{r}(\bM_{1})\geq C\sqrt{q/r}$.
\end{assumption}

Assumption~\ref{assum:1} is a relatively mild condition and can be satisfied by a wide range of distributions, especially when $\bx$ consists of i.i.d. entries, which is widely adopted in the literature~\citep{yangs,balasubramanian2018}. It can be shown that even if $\bx$ are independent $t$, exponential such heavier-tailed distribution, $\bs(\bx)$ can still be element-wise sub-Gaussian. Moreover, it is easy to show that, if $\bx$ is multivariate normal, $\bs(\bx)$ would also be multivariate normal, therefore, each $\bs_{k}(\bx)$, $k \in [p]$, would be sub-Gaussian. It is important to note that we focus on the case of sub-Gaussian scores. Theoretical results for other types of scores, such as heavy-tailed scores, can be derived using a truncation approach. We omit the details due to space constraints.

Assumption~\ref{assum:2} requires that $f_j(\bz)$ are essentially bounded, which would be satisfied when, for instance, the link functions are bounded, $\bx$ is bounded or link functions are Lipschitz continuous and $\bx$ is sub-Gaussian.  

Assumption~\ref{assum:3} requires that link functions are sensitive enough at the true parameter $\bB$ under the distribution $\bx$. Assumption~\ref{assum:4} further requires that the non-zero singular values of  $\bM_{1}$ are of the same order. 

\begin{theorem}\label{thm:1}
   Suppose model (\ref{model:1.2}) holds. Under Assumptions \ref{assum:1}-\ref{assum:4},  the following holds with probability at least $1 - \delta$,  
    \begin{align*}                  \inf_{\bV\in\mathcal{V}_r}\|\bB-\widehat{\bB} \bV\|_F = \cO\left(\sqrt{\frac{rp}{n}}\sqrt{\ln\frac{pq}{\delta}}\right).        
    \end{align*} 
\end{theorem}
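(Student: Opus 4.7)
Set $\bM \coloneqq n^{-1}\sum_{i=1}^n \bs(\bx_i)\by_i^{\top}$, so that $\widehat{\bB}$ is the top-$r$ left singular basis of $\bM$. Lemma~\ref{lem:fs} identifies the population counterpart $\bM^{\ast} \coloneqq \bB\bM_1 = \EE\{\bM\}$. Since $\bM_1 \in \RR^{r\times q}$ has rank $r$ (guaranteed by Assumption~\ref{assum:4}) and $\bB$ is column-orthonormal under the natural normalization implicit in the distance \eqref{dist}, the column space of $\bM^{\ast}$ coincides with that of $\bB$, and $\sigma_r(\bM^{\ast}) = \sigma_r(\bM_1) \gtrsim \sqrt{q/r}$. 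The plan is a standard two-step argument: first control $\|\bM - \bM^{\ast}\|_F$ by concentration, then invoke a Wedin-type $\sin\Theta$ bound to pass from matrix perturbation to subspace perturbation.

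For the concentration step, I would decompose $\bs_k(\bx_i)\by_{i,j} = \bs_k(\bx_i) f_j(\bB^{\top}\bx_i) + \bs_k(\bx_i)\epsilon_j$. Assumption~\ref{assum:1} makes $\bs_k(\bx_i)$ sub-Gaussian with a uniform constant, while Assumption~\ref{assum:2} bounds all even moments of $f_j(\bz)$ uniformly in $j$; a Cauchy--Schwarz argument on the Orlicz-$1$ norm then shows that $\bs_k(\bx_i)f_j(\bB^{\top}\bx_i)$ is sub-exponential with constants that depend on neither $p$, $q$, nor $r$. The noise cross-term has mean zero by independence of $\bx_i$ and $\bepsilon_i$ and is sub-exponential under any mild moment hypothesis on $\epsilon_j$. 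Applying Bernstein's inequality entry-wise and taking a union bound over the $pq$ coordinates yields, with probability at least $1-\delta$,
\begin{equation*}
\max_{k,j}\big|\bM_{k,j} - \bM^{\ast}_{k,j}\big| = \cO\!\left(\sqrt{\tfrac{\ln(pq/\delta)}{n}}\right),
\end{equation*}
whence $\|\bM - \bM^{\ast}\|_F = \cO\!\left(\sqrt{pq\ln(pq/\delta)/n}\right)$.

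For the perturbation step, Wedin's $\sin\Theta$ theorem applied to the left singular subspaces of $\bM$ and $\bM^{\ast}$ gives
\begin{equation*}
\inf_{\bV\in\cV_r}\|\bB - \widehat{\bB}\bV\|_F \;\lesssim\; \frac{\|\bM - \bM^{\ast}\|_F}{\sigma_r(\bM^{\ast})} \;=\; \cO\!\left(\frac{\sqrt{pq\ln(pq/\delta)/n}}{\sqrt{q/r}}\right) \;=\; \cO\!\left(\sqrt{\tfrac{rp\ln(pq/\delta)}{n}}\right),
\end{equation*}
valid in the regime $n \gtrsim rp\ln(pq/\delta)$ where the perturbation is dominated by the spectral gap. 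The main obstacle is the sub-exponential control in the first step: the separate moment assumptions on the score and on the link functions must be combined carefully so that no hidden dimensional factor enters the Bernstein constants, since any extra $\sqrt{q}$ here would cancel the $\sqrt{q/r}$ gain from the denominator and break the claimed rate. Assumptions~\ref{assum:3} and~\ref{assum:4} together rule out a degenerate $\bM_1$, so the $\sqrt{q/r}$ lower bound on $\sigma_r(\bM^{\ast})$ is indeed genuine.
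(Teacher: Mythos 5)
Your proposal is correct and follows essentially the same route as the paper's proof: entry-wise sub-exponential control of $\bs_k(\bx_i)\by_{i,j}$ via the sub-Gaussian score and bounded-moment link assumptions, Bernstein's inequality with a union bound over the $pq$ entries to get $\|\bM-\bM^{\ast}\|_F=\cO(\sqrt{pq\ln(pq/\delta)/n})$, and then a Wedin/Davis--Kahan-type bound divided by $\sigma_r(\bM_1)\gtrsim\sqrt{q/r}$ from Assumption~\ref{assum:4}. Your explicit remark that the spectral gap must dominate the perturbation, and your correct attribution of the $\sqrt{q/r}$ lower bound to Assumption~\ref{assum:4} (the paper's proof cites Assumption~\ref{assum:3} at that step, which appears to be a typo), are minor refinements of the same argument.
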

\begin{remark}
For the univariate case with $q = r = 1$,  the $L_{2}$-convergence rate  in Theorem~\ref{thm:1}  reduces to $\cO \left\{ \sqrt{p/n} \cdot \sqrt{\ln\left(p/\delta\right)}\right\}$. It matches the optimal $L_{2}$-convergence rate $\cO(\sqrt{p/n})$ up to a logarithmic factor. In fact, consider
the simplest scenario where $f$ is linear, the $L_{2}$-convergence rate of an ordinary least squares estimator of $\bB$ is $\sqrt{p/n}$. 
\end{remark}

Beyond the assumption of known score functions, we extend our analysis to the case with unknown score functions. Specifically, we assume $\bx$ follows a Gaussian distribution with mean zero and an unknown variance $\bSigma$. 
We replace $\bs(\cdot)$ in equation~\eqref{eq:def-fe} with $\hat{\bs}(\bx)$  obtained by substitute $\bSigma$ in $\bs(\cdot)$ with its moment estimator and let $\check{\bB}$ denote the corresponding estimator of $\bB$. 
To analyze the properties of $\check{\bB}$, we introduce the following assumption on the link functions. 
\begin{assumption}\label{assum:taylor}
Suppose the first-order Taylor expansions of the link functions exist and are denoted as
$f_j(\bz)=f_j(\boldsymbol{0}_{p\times r})+\langle\nabla f_j(\boldsymbol{0}_{p\times r}), \bz\rangle + h_j(\bz)\|\bz\|_2$ with $\lim_{\|\bz\|_2\rightarrow \boldsymbol{0}} h_j(\bz)=0$, for $j \in [q]$. 
Assume that the remainder term is bounded, i.e., $|h_j(\bz_{i})| \le C $ for $j \in [q]$ and $i \in [n]$, where $C > 0$ is an absolute constant.
\end{assumption}
The following theorem quantifies the discrepancy in the column space between $\check{\bB}$ and the original estimator $\widehat{\bB}$, which is dependent on the true variance $\bSigma$.

\begin{theorem}\label{thm:2} 
 Suppose model \ref{model:1.2} holds. Assume that $\bx \sim \cN(\boldsymbol{0}, \bSigma)$ with $\lambda_{\min}(\bSigma) >0 $. Then, under Assumption \ref{assum:taylor}, the following holds with the probability approaching 1: 
\begin{align*} \inf_{\check{\bV}\in\cV_r}\|\widehat{\bB}-\check{\bB}\check{\bV}\|_F = \cO_{p}\left(\sqrt{\frac{r^2p}{n}}\right).  
\end{align*}   
\end{theorem}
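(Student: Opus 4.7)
Since $\bx\sim\cN(\boldsymbol{0},\bSigma)$, the first-order score equals $\bs(\bx)=\bSigma^{-1}\bx$, while its plug-in version is $\widehat{\bs}(\bx)=\widehat{\bSigma}^{-1}\bx$ with $\widehat{\bSigma} = n^{-1}\sum_{i=1}^n \bx_i\bx_i^\T$. Setting $\bW = n^{-1}\sum_{i=1}^n \bx_i\by_i^\T$, the two estimators $\widehat{\bB}$ and $\check{\bB}$ are the top-$r$ left singular vectors of
\[
\bM \;=\; \bSigma^{-1}\bW \qquad\text{and}\qquad \check{\bM} \;=\; \widehat{\bSigma}^{-1}\bW,
\]
so that $\check{\bM}-\bM = (\widehat{\bSigma}^{-1}-\bSigma^{-1})\,\bW$. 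My plan is to apply a Wedin/Davis--Kahan-type perturbation bound of the form
\[
\text{dist}(\widehat{\bB},\check{\bB}) \;\lesssim\; \frac{\|\check{\bM} - \bM\|_F}{\sigma_r(\bM)-\sigma_{r+1}(\check{\bM})},
\]
and to control the numerator and denominator separately.

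For the covariance piece, standard concentration for Gaussian sample covariances yields $\|\widehat{\bSigma}-\bSigma\|_{op}=\cO_p(\sqrt{p/n})$. Combined with $\lambda_{\min}(\bSigma)>0$ and the resolvent identity $\widehat{\bSigma}^{-1}-\bSigma^{-1} = -\widehat{\bSigma}^{-1}(\widehat{\bSigma}-\bSigma)\bSigma^{-1}$, this upgrades to $\|\widehat{\bSigma}^{-1}-\bSigma^{-1}\|_{op}=\cO_p(\sqrt{p/n})$ on the high-probability event where $\widehat{\bSigma}$ is well conditioned.

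To bound $\|\bW\|_F$ I use Assumption~\ref{assum:taylor}. Expanding each coordinate of the response,
\[
y_{ij} \;=\; f_j(\boldsymbol{0}) + \langle\nabla f_j(\boldsymbol{0}),\bz_i\rangle + h_j(\bz_i)\|\bz_i\|_2 + \epsilon_{ij},
\]
I write $\bY = \mathbf{1}\bfc^\T + \bX\bA + \bR$, where $\bA=\bB\bN$ is $p\times q$ of rank $r$ (with $\bN=[\nabla f_1(\boldsymbol{0}),\ldots,\nabla f_q(\boldsymbol{0})]$) and $\bR$ collects the remainder $h_j(\bz_i)\|\bz_i\|_2$ together with the noise $\epsilon_{ij}$. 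Consequently,
\[
\bW \;=\; \bar{\bx}\,\bfc^\T + \widehat{\bSigma}\bA + \tfrac{1}{n}\bX^\T\bR.
\]
Since $\bA$ has rank $r$ with entrywise $\cO(1)$ magnitude, $\|\widehat{\bSigma}\bA\|_F\le\|\widehat{\bSigma}\|_{op}\|\bA\|_F=\cO_p(\sqrt{rq})$; the other two pieces are of smaller order ($\|\bar{\bx}\bfc^\T\|_F=\cO_p(\sqrt{pq/n})$ by Gaussian concentration of $\bar{\bx}$, and $\|n^{-1}\bX^\T\bR\|_F$ is controlled via $|h_j|\le C$, $\|\bz_i\|_2=\cO_p(\sqrt{r})$, and sub-Gaussianity of $\bx_i$). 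Altogether $\|\bW\|_F=\cO_p(\sqrt{rq})$, which gives $\|\check{\bM}-\bM\|_F=\cO_p(\sqrt{rpq/n})$.

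Finally, Assumption~\ref{assum:4} and $\EE[\bM]=\bB\bM_1$ yield $\sigma_r(\EE[\bM])\ge C\sqrt{q/r}$ (working WLOG with $\bB$ having orthonormal columns); concentration of $\bM$ around $\EE[\bM]$ plus Weyl's inequality then makes the gap $\sigma_r(\bM)-\sigma_{r+1}(\check{\bM})$ of the same order $\sqrt{q/r}$ with probability approaching one. Dividing,
\[
\text{dist}(\widehat{\bB},\check{\bB}) \;=\; \cO_p\!\left(\frac{\sqrt{rpq/n}}{\sqrt{q/r}}\right) \;=\; \cO_p\!\left(\sqrt{\tfrac{r^2 p}{n}}\right).
\]
The main obstacle is the tight Frobenius bound on $\bW$: a naive treatment of $\bW$ as a generic $p\times q$ matrix would yield only the much weaker $\cO_p(\sqrt{pq})$. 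Exploiting the rank-$r$ structure of the leading piece $\widehat{\bSigma}\bA$, and showing that the nonlinear remainder $h_j(\bz_i)\|\bz_i\|_2$ is uniformly negligible relative to it, is what allows the numerator to match the gap scale $\sqrt{q/r}$ from Assumption~\ref{assum:4} and produces the final $r\sqrt{p/n}$ rate rather than something worse.
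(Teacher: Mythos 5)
Your overall architecture is the same as the paper's: you factor the difference of the two empirical matrices as (error in the inverse covariance) times the cross-moment matrix $\bW=n^{-1}\sum_i\bx_i\by_i^\T$, use the Taylor expansion from Assumption~\ref{assum:taylor} to isolate a rank-$r$ linear part of the response, bound the numerator by $\sqrt{rpq/n}$, and divide by the $\sigma_r$ gap of order $\sqrt{q/r}$ (Assumption~\ref{assum:4} plus concentration and Weyl) in a Wedin-type bound. The only structural difference is bookkeeping: the paper splits the linear part off and uses the exact cancellation $(\bSigma^{\dagger}-\widehat{\bSigma}^{\dagger})\widehat{\bSigma}\bA=(\bSigma^{\dagger}\widehat{\bSigma}-\bI_p)\bA=\bSigma^{\dagger}(\widehat{\bSigma}-\bSigma)\bA$, handling the noise-plus-remainder piece separately with a perturbation-of-inverse lemma, whereas you bound $\|\widehat{\bSigma}^{-1}-\bSigma^{-1}\|_{2}\cdot\|\bW\|_F$ wholesale.

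The one step you should not wave through is the claim that $n^{-1}\bX^\T\bR$ is ``of smaller order'' than the leading piece $\widehat{\bSigma}\bA$. The noise part $n^{-1}\bX^\T\bE$ is indeed $\cO_p(\sqrt{pq/n})$ by mean-zero concentration, but the nonlinear remainder $h_j(\bz_i)\|\bz_i\|_2$ is \emph{not} mean zero and is correlated with $\bx_i$, so $n^{-1}\sum_i x_{ik}h_j(\bz_i)\|\bz_i\|_2$ converges to a nonzero limit; a crude entrywise Cauchy--Schwarz bound gives $\|n^{-1}\bX^\T\bR\|_F=\cO_p(\sqrt{rpq})$, which exceeds your target $\sqrt{rq}$ by a factor of $\sqrt{p}$ and, paired with your (correct) $\|\widehat{\bSigma}^{-1}-\bSigma^{-1}\|_2=\cO_p(\sqrt{p/n})$, would inflate the final rate by $\sqrt{p}$. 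To close this you need the extra structure that for Gaussian $\bx$ the limit $\EE[\bx\,\bH(\bz)^\T\|\bz\|_2]$ equals $\bSigma\bB(\bB^\T\bSigma\bB)^{-1}\EE[\bz\,\bH(\bz)^\T\|\bz\|_2]$ and hence lives in a rank-$r$ subspace with Frobenius norm controlled by $\sqrt{q}$ times moments of $\|\bz\|_2$ — i.e., exactly the kind of explicit treatment the paper gives this term (its $T_2$). As written, your ``smaller order'' assertion is the one genuine gap; the rest of the argument matches the paper's.
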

Theorem \ref{thm:2} demonstrates that the column space of the plug-in estimator converges to that of the truth at a rate of $\mathcal{O}\left(\sqrt{r^2p/n}\right)$. The effectiveness of this plug-in estimator, $\check{\bB}$, is further supported by numerical analyses across various distribution types, as illustrated in Section \ref{sec:simulation}.

\subsection{Theoretical Guarantees for the Second-order Method}\label{sec:second-order}
Now we prove the theoretical properties of the second-order estimator $\Tilde{\bB}$. We shall consider the case with sub-exponential second-order scores. To aid in this analysis, we make the following assumptions.

\begin{assumption}\label{ass:second-order-subexp-1} 
Assume that $T_{k,l}(\bx)$ is sub-exponential for any $k,l$, i.e., $\sup _{m \in \NN_{+}} m^{-1}\EE\{|T_{k,l}(\bx)|^m\}^{1/m} \leq C $ for certain constant $C$.
\end{assumption}

\begin{assumption}\label{ass:second-order-subexp-2}
    For $j=1,\ldots, q$,  assume that $\sup_{\ell \in \NN_{+}}[\EE\{ 
    |f_j(\bz) |^{\ell}\}]^{1/\ell} = \| f_j(\bz) \|_{\infty}  \leq C$ for certain constant $C >0$.
\end{assumption}
\begin{assumption}\label{ass:second-order-subexp-3}
    Assume that $\sup_{\ell \in \NN_{+}}\{\EE (|\epsilon|^{\ell})\}^{1/\ell} \leq C$ for certain constant $C$. 
\end{assumption}
\begin{assumption}\label{ass:second-order-subexp-4}
Assume that $\lambda_{r}(\bM_{2})\ge C_1/r$ for some constant $C_1$. Further assume that there exists an absolute constant $C_2 > 0$, such that $ \|1/q \sum^{q}_{j=1} f_j(\bz) \|_{\infty} > C_2$. 
\end{assumption}

Assumptions~\ref{ass:second-order-subexp-1} to~\ref{ass:second-order-subexp-4} serve as the second-order equivalents to Assumptions~\ref{assum:1} to~\ref{assum:4} of the first-order method. Additional discussions of these assumptions are deferred to Section~\ref{sec:app-as} in the Supplement.

\begin{theorem}\label{thm:second-order}
Suppose model~\eqref{model:1.2} holds. Under Assumptions~\ref{ass:second-order-subexp-1} to~\ref{ass:second-order-subexp-4}, the following holds for the second-order estimator with probability at least $1 - \delta$, 
\begin{align}\label{eq:sec-ub}
\inf_{\bV\in\mathcal{V}_r}\|\bB-\Tilde{\bB} \bV\|_F = \cO \left\{\frac{p r}{\sqrt{n}}\sqrt{\ln\left(\frac{p}{\delta}\right)}\right\}.
\end{align}
\end{theorem}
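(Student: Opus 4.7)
The proof follows the template used for Theorem~\ref{thm:1}: identify a symmetric population matrix whose top-$r$ eigenspace coincides with $\mathrm{col}(\bB)$, bound the deviation of its empirical counterpart in an appropriate norm, and close the argument with a Davis--Kahan sin--theta inequality that exploits the eigengap supplied by Assumption~\ref{ass:second-order-subexp-4}. Since only $\mathrm{col}(\bB)$ is identifiable one may absorb any $r\times r$ invertible factor into $\cF$ and assume without loss of generality that $\bB\in\cV_{p\times r}$; Lemma~\ref{lem:ss} then identifies the population matrix as
\begin{equation*}
\bar{\bA}\;=\;\EE\!\left[\tfrac{1}{q}\sum_{j=1}^{q} y_{j}\bT(\bx)\right]\;=\;\bB\bM_{2}\bB^{\T},
\end{equation*}
which is symmetric of rank $r$, has $\mathrm{col}(\bB)$ as its top-$r$ eigenspace, and carries eigengap $\lambda_{r}(\bar{\bA})-\lambda_{r+1}(\bar{\bA})=\lambda_{r}(\bM_{2})\geq C_{1}/r$.

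Let $\widehat{\bA}=(nq)^{-1}\sum_{i,j}y_{ij}\bT(\bx_{i})$ denote the empirical matrix, so that $\Tilde{\bB}$ collects its top-$r$ eigenvectors. The Frobenius Davis--Kahan bound then gives
\begin{equation*}
\inf_{\bV\in\cV_{r}}\|\bB-\Tilde{\bB}\bV\|_{F} \;\leq\; \frac{\sqrt{2}\,\|\widehat{\bA}-\bar{\bA}\|_{F}}{\lambda_{r}(\bM_{2})}\;\lesssim\; r\,\|\widehat{\bA}-\bar{\bA}\|_{F},
\end{equation*}
so the task reduces to proving $\|\widehat{\bA}-\bar{\bA}\|_{F}\lesssim p\sqrt{\log(p/\delta)/n}$ with probability at least $1-\delta$. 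The plan is to upper bound the Frobenius norm by $p\cdot\max_{k,l}|(\widehat{\bA}-\bar{\bA})_{k,l}|$ and to control each entry by concentration. Writing $y_{ij}=f_{j}(\bz_{i})+\epsilon_{ij}$, the $(k,l)$ entry of $\widehat{\bA}-\bar{\bA}$ is an average over $i$ of i.i.d.\ mean-zero summands of the form $\bT_{k,l}(\bx_{i})\bigl\{q^{-1}\sum_{j}f_{j}(\bz_{i})+q^{-1}\sum_{j}\epsilon_{ij}\bigr\}$. Assumption~\ref{ass:second-order-subexp-2} makes $q^{-1}\sum_{j}f_{j}(\bz_{i})$ essentially bounded and, combined with the sub-exponential control on $\bT$ from Assumption~\ref{ass:second-order-subexp-1}, keeps the signal part sub-exponential; Assumption~\ref{ass:second-order-subexp-3} makes $q^{-1}\sum_{j}\epsilon_{ij}$ sub-exponential, so the noise cross-term is a product of two sub-exponentials and is therefore sub-Weibull of order $1/2$. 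A Bernstein inequality for the signal part together with a sub-Weibull concentration bound (or, equivalently, truncation at a polylogarithmic threshold followed by Bernstein) yields the entrywise tail
\begin{equation*}
\P\!\bigl(|(\widehat{\bA}-\bar{\bA})_{k,l}|>t\bigr)\leq 2\exp\bigl\{-cn\min(t^{2},\sqrt{nt})\bigr\},
\end{equation*}
and a union bound across the $p^{2}$ entries delivers $\max_{k,l}|(\widehat{\bA}-\bar{\bA})_{k,l}|=\cO\bigl(\sqrt{\log(p/\delta)/n}\bigr)$ in the mild regime $\log(p/\delta)\lesssim n^{1/3}$. Multiplying by $p$ and substituting into the Davis--Kahan bound recovers the stated rate $\cO\bigl(pr\sqrt{\log(p/\delta)}/\sqrt{n}\bigr)$.

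The main obstacle is the noise$\times$score product $\epsilon_{ij}\bT_{k,l}(\bx_{i})$: it is sub-Weibull$(1/2)$ rather than sub-exponential, so classical Bernstein cannot be applied directly and one must either invoke a tail inequality for sums of sub-Weibull variables or truncate at a level growing like $\log^{2}(pn/\delta)$, verify that the truncation bias is negligible, and then apply Bernstein to the bounded surrogate while keeping the logarithmic factors from overwhelming the $1/\sqrt{n}$ rate. A secondary point is that the passage from Frobenius to maximum entry costs a factor $p$; one might hope to replace it with $\sqrt{p}$ via matrix Bernstein, but Assumption~\ref{ass:second-order-subexp-1} supplies only entrywise sub-exponentiality of $\bT(\bx)$ rather than a trace-variance bound, so the factor $p$ in the final rate appears intrinsic to this route and matches the statement of the theorem.
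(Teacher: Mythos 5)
Your proposal follows the same route as the paper: identify the population matrix $\bB\bM_{2}\bB^{\T}$ via Lemma~\ref{lem:ss}, bound $\|\widehat{\bA}-\bar{\bA}\|_{F}$ by $p$ times the worst entry, control each entry with Bernstein plus a union bound over the $\cO(p^{2})$ entries, and finish with the Davis--Kahan/Yu--Wang--Samworth inequality using $\lambda_{r}(\bM_{2})=\Omega(1/r)$. The structure, the source of the factor $p$, and the source of the factor $r$ all match the paper's Lemma~\ref{lem:second-order-concentrate} and its proof of Theorem~\ref{thm:second-order}.

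The one place you diverge is the noise--score cross term $\epsilon_{ij}\bT_{k,l}(\bx_{i})$, which you classify as a product of two sub-exponentials, hence sub-Weibull of order $1/2$, and for which you budget extra machinery (truncation or a sub-Weibull tail bound). This is unnecessary under the paper's hypotheses: Assumption~\ref{ass:second-order-subexp-3} is not sub-exponentiality of $\epsilon$ but essential boundedness, $\sup_{\ell\in\NN_{+}}\{\EE(|\epsilon|^{\ell})\}^{1/\ell}\leq C$, i.e.\ all moments are bounded uniformly in $\ell$ (the supplement explicitly reads this as ``$\epsilon$ is essentially bounded,'' e.g.\ truncated normal). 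Consequently $\epsilon\,\bT_{k,l}(\bx)$ is (bounded)$\times$(sub-exponential), its $\ell$-th moment grows like $\ell$ by Cauchy--Schwarz exactly as in the paper's moment computation, and classical Bernstein applies directly; the same remark applies to the signal term via Assumption~\ref{ass:second-order-subexp-2}. Your truncation route would of course still deliver the theorem (and would do so under a genuinely weaker noise assumption, at the cost of extra logarithmic factors and the side condition $\log(p/\delta)\lesssim n^{1/3}$ you mention), so this is a difference in economy rather than a gap. Your closing observation --- that the factor $p$ rather than $\sqrt{p}$ is forced by having only entrywise control of $\bT(\bx)$ --- is accurate and matches what the paper does.
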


Theorem \ref{thm:second-order} illustrates the convergence properties of the second-order method. We shall note that the sample size $n$ must be greater than $p^2$ to achieve convergence. In contrast, a sample size of $n=\Omega(p)$ is sufficient for the first-order method, indicating that higher-order methods require a larger sample size.
For the case with unknown score functions, the convergence can be obtained using a method similar to that described for the first-order approach in Section~\ref{sec:fs}. The details are omitted due to space limitations.

\section{Simulation Studies}\label{sec:simulation} 
In this section, we conduct extensive simulation studies to assess the finite sample performances of the proposed estimators in a supervised setup. Unsupervised and semi-supervised settings will be considered in Section \ref{sec:rda}.
  
\subsection{Data Generating Process}
We consider a nonlinear model of the following form:
\begin{align*}
    \by_{i,j} = f_{j}(\bB^{\T}\bx_{i}) + \epsilon_{i,j}, \ j \in [q], i \in [n].
\end{align*}
Throughout the experiment, we fix the rank of $\bB$ to be $r$ and assume that $r$ is known. The matrix $\bB$ is generated in two steps: first, we generate $\bB_{o} \in \mathbb{R}^{p \times q}$, with entries that are i.i.d. samples from a normal distribution $\mathcal{N}(\mu_{o}, \sigma^{2}_{o})$; second, we set $\bB = \text{SVD}_{l,r}\left( \bB_{o} \right)$.
We consider three different multivariate distributions for the input vectors $\bx$:  
 multivariate normal distribution $\mathcal{N}(0, \bSigma_{\mathcal{N}})$,  multivariate hyperbolic distribution $\cH_{\chi, \psi}(0, \bSigma_{\cH})$\footnote{Multivariate hyperbolic distribution is often used in economics, with particular application in the fields of modeling financial markets and risk management. We refer to Section in the Supplement~\ref{sec:app-mhyp} for more details.} and 
multivariate t-distribution $t_{\nu}(0, \bSigma_{t})$.
Furthermore, we assume that the distributions of $\bx$ are non-degenerate, meaning the dispersion matrices $\bSigma_{\cN}$, $\bSigma_{\cH}$, and $\bSigma_{t}$ are all positive definite. The random errors $\epsilon_{i,j}$ are independently drawn from $\cN(0, \sigma_{\epsilon}^{2})$.

We consider three different mechanisms for generating the link functions. In the first case, we consider linear link functions. Specifically, we let $f_{j}(\bz) = \ba^{\top}_{j} \bz, j \in [q]$.

Then, we investigate two ways of generating nonlinear link functions. We let $f_{j}(\bz) =  \ba^{\top}_{j} m_{j}(\bz), j \in [q]$\footnote{For simplicity, we suppeose $q$ is even. }, where each $m_{j}(\cdot)$ represents a certain element-wise nonlinear function, such as $\sin(x-1)$ and $(x - 1)^{3}$. 
In the second case, for the first half of $q$ functions, we select various functions $m_{1} \ldots, m_{q/2}$. For the second half of $q$ functions, we define $m_{q/2 + j }$ as $m_{j \text{ mod } (q/2)} + m_{{j\text{ mod } (q/2)}+1}$, for $j \in [q/2]$.

Finally, we consider a generalized case of the second one. For the first half of $q$ functions, we choose different functions $m_{1} \ldots, m_{q/2}$ as in the second case. For the second half of $q$ functions $m_{q/2+j}$, $j 
    \in [q/2]$, we sample $j_{1}$ uniformly from $[q/2]$ and $j_{2}$ uniformly from $[q/2] \setminus \{j_{1}\}$, then we let $m_{q/2+j} \coloneqq m_{j_{1}} + m_{j_{2}}$. 
    
For details on the parameters and other elementary functions, please refer to Section~\ref{sec:app-pln} of the Supplement.

\subsection{Performance Comparison }\label{sec:exp-dis}
We compare the performance of our approach with that of the reduced rank regression and neural networks, which is evaluated by the distance defined in~\eqref{dist} between $\bB$ and the estimates. To ensure robustness, we report the medians of distances calculated over 100 repetitions, with sample sizes $n$ ranging from 300 to 9000 and the input dimension $p$ fixed at 30. The results from three different mechanisms for generating link functions are plotted in Figure~\ref{fig:main}.

We note that for linear link functions, $\sum^{q}_{j=1}\nabla^{2}_{\bz}f_{j}(\bz) = 0$ holds, rendering the second-order method inapplicable. Therefore, we only report the performance of the first-order method in this case. We observe that for $\bx$ following all three distributions, the performances of our first-order method and the reduced rank estimators are very similar. The neural networks perform worse due to model mis-specification.

For nonlinear link functions, our second-order method can achieve better space recovery performance with smaller distances compared to the reduced rank or neural network approach, provided the sample sizes are sufficiently large.

Our first-order method performs similarly to the reduced rank estimator for $\bx \sim \cN(0, \bSigma_{\cN})$ and $\bx \sim \cH_{\chi, \psi}(0, \bSigma_{\cH})$, while outperforms the reduced rank estimator significantly for $\bx \sim t_{\nu}(0, \bSigma_{t})$ in the second and third cases, indicating that it is robust to the nonlinearity and distribution of $\bx$.

The performance of neural network estimators does not consistently improve as sample sizes increase. In the second and third cases, neural networks are mis-specified as our link functions are ``independent'', while neural networks approximate these functions sharing the same parameters before the final linear layer. As our link functions vary significantly from each other, the approximation cannot be accurate enough, leading to poor estimates of $\bB$. In contrast, our methods estimate $\bB$ without requiring the exact nonlinear format and thus, are more robust than the neural network estimator.

For extended experiments and further discussions, please refer to Section~\ref{sec:app-ee} of Supplement.

\begin{figure}[p]
\begin{subfigure}{\textwidth}
\centering    \includegraphics[width=\textwidth,height=0.33\linewidth]{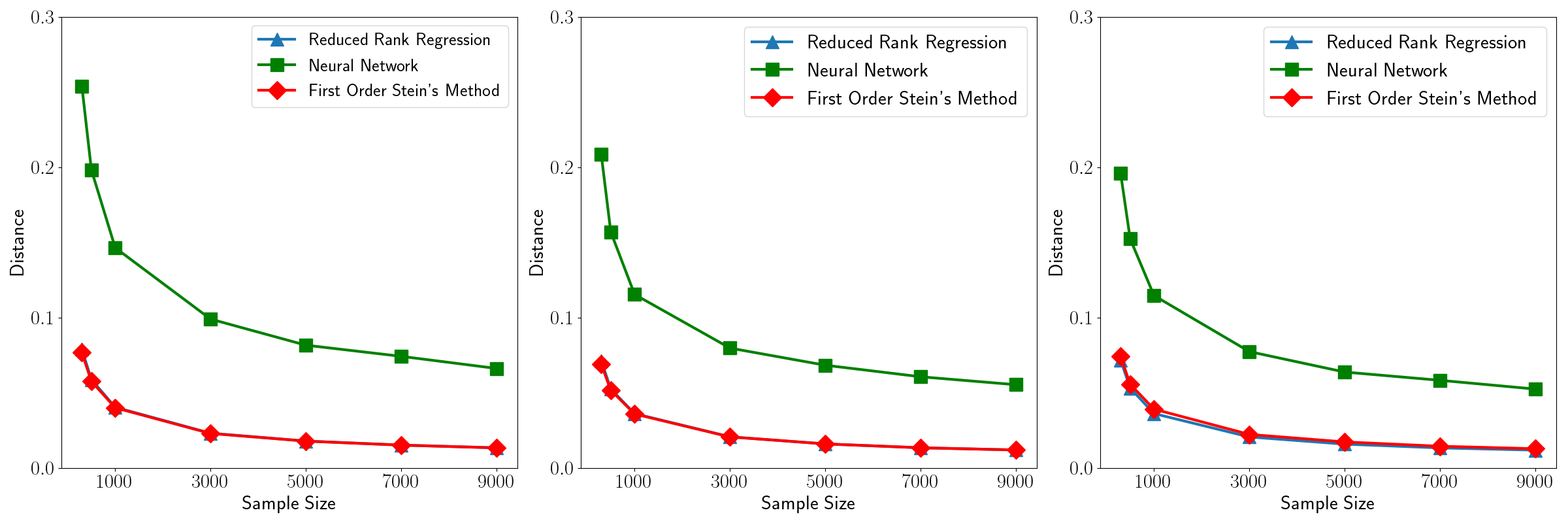}
\caption{linear link functions }
\end{subfigure}
\begin{subfigure}{\textwidth}
\centering    \includegraphics[width=\textwidth,height=0.33\linewidth]{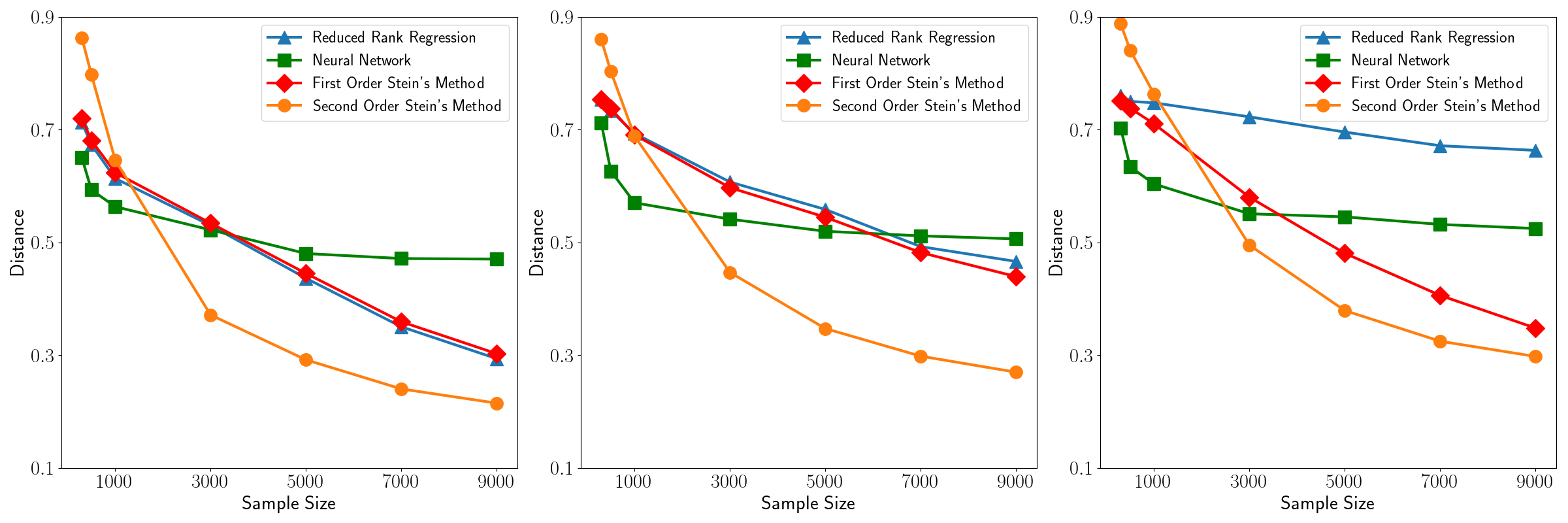}
\caption{the $1$st mechanism of generating nonlinear link functions}
\end{subfigure}
\begin{subfigure}{\textwidth}
\centering    \includegraphics[width=\textwidth,height=0.33\linewidth]{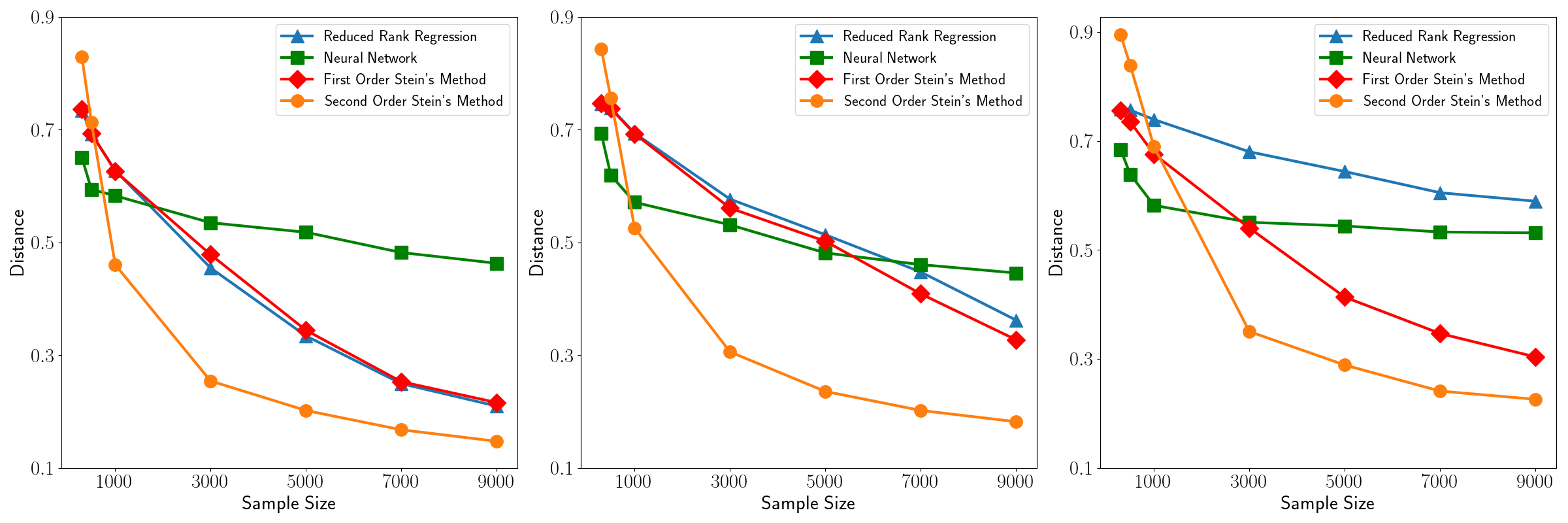}
\caption{the $2$nd mechanism of generating nonlinear link functions}
\end{subfigure}
\caption{The figure demonstrates finite sample performances of competing methods when $p = 30$. From left to right, $\bx \sim \mathcal{N}(0, \bSigma_{\mathcal{N}})$, $ \mathcal{H}_{\chi,\psi}(0, \bSigma_{\mathcal{H}})$ and $t_{\nu}(0, \bSigma_{t})$, respectively. }\label{fig:main}
\end{figure}
\section{Real Data Analysis}\label{sec:rda}
We apply  the proposed methods on two real-world datasets. On the MNIST dataset, we assess the finite sample performance of our methods in learning the latent space in model~\eqref{model:1.3}, i.e., the classical unsupervised setup. On the M1 Patch-seq dataset, we verify our conjecture in Section~\ref{sec:model} that the semi-supervised estimator would outperform those estimated solely from features or limited
labels especially in the corresponding downstream task.      

\subsection{MNIST Dataset}
In this part, we consider the reconstruction task
\begin{align*}
\bx = \cF(\bB^{\top}\bx) + \bepsilon
\end{align*}
on MNIST dataset. To make the task harder, we randomly choose $n = 10000$ samples from the original training set for training and use the full test set for testing. 

Since there is no universal metric for evaluating the quality of the estimated latent space when the truth is unknown, we provide the following three metrics. We first consider normalized root squared error (NRSE) and  structural similarity index measure (SSIM),  which measure the performances of latent embeddings in the reconstruction task. Since these measures incorporate the influence of nonlinear decoders, to eliminate such effects, we also introduce the classification accuracy of the embedding on a multi-class classification task to measure the qualities of the estimated spaces. We compare our methods with baseline methods PCA and AE. For details of definitions of the measures and applications of the methods, please refer to Section~\ref{sec:app-measures} and Section~\ref{sec:app-app-rda} of the Supplement, respectively. 

Figure \ref{fig:mes} summarizes the performances of competing methods under three different measures against embedding dimensions $h$ ranging from 10 to 18. To ensure robustness, we report the medians of these metrics over 100 repetitions. As seen in the figure, our methods consistently outperform PCA on NRSE and SSIM for embedding dimensions from 10 to 18, and on classification accuracy from 12 to 18. This is because our methods can capture nonlinear relationships among pixel values, which allows our methods to outperform the best linear approximation.

The AE performs slightly worse than our methods on NRSE and SSIM. Even though it is specifically designed for the reconstruction task, it has more parameters to estimate. Therefore, our methods perform better in cases with smaller sample sizes. For the classification task, the AE ranks last, which is also due to limited samples. Moreover, the latent space learned by the AE is specifically designed for the reconstruction task, while our estimators are task-free.

Comparing our first-order and second-order methods, they perform very close, although the second-order method is slightly inferior. One possible reason is that the multivariate normal assumption used for the second-order scores makes the second-order model mis-specified. Additionally, the relatively small sample size could also affect the performance of the second-order method. 
\begin{figure}[h]
\centering    \includegraphics[width=\textwidth]{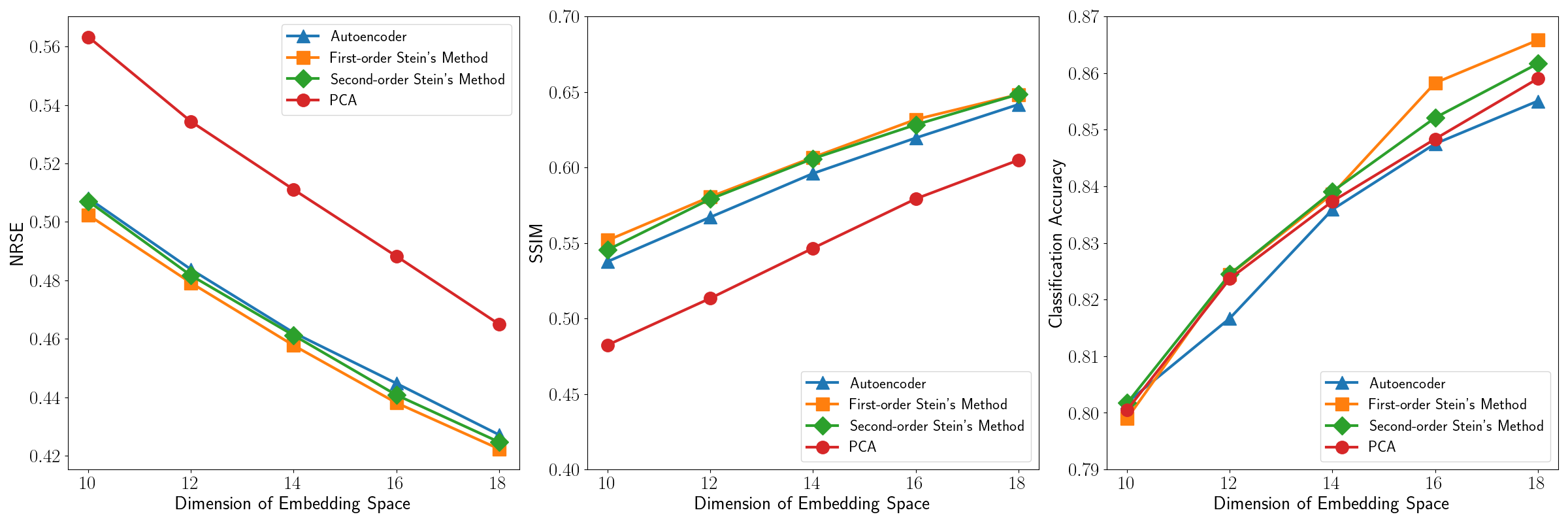}
\caption{Comparison of qualities of learned latent spaces of competing methods based on three different metrics. The values are calculated empirically on the testing set and medians over 100 repetitions are reported. From Left to right, the metrics are RMSE, SSIM and classification accuracy, respectively.}\label{fig:mes}
\end{figure}

\subsection{M1 Patch-seq
Dataset}\label{sec:rda2}
M1 dataset is one of the largest
existing Patch-seq
datasets~\citep{scala2021phenotypic}. It contains $n = 1213$ neurons from the primary motor
cortex of adult mice and spans all types of neurons, both excitatory and inhibitory. Each
cell was described by $q = 16$ electrophysiological properties and we used the $p = 1000$ most variable
genes that were selected in the original publication. We first performed sequencing depth normalization by converting the counts to counter per million. We then transformed
the data using $\log_{2}(x+1)$
transformation. Finally, we standardized all gene expression values and electrophysiological properties
to zero mean and unit variance.

We assume the general labels $\by = (\bx^{\T}, \Tilde{\by}^{\T})^{\T}$, where $\bx$ are most genes and $\Tilde{\by}$ electrophysiological properties, satisfy the model:   
\begin{align*}
\by = \cF(\bB^{\top}\bx) + \bepsilon.
\end{align*}
We separate the samples in the following steps:

1. We randomly select $n_{1}$ samples with equal probability as the testing set, which is denoted as $\{(\bx_{i}, \Tilde{\by}_{i})\}^{n_{1}}_{i = 1}$;

2. We randomly choose $n_{2}$
samples uniformly from the remaining to be the training set, denoted as $\{(\bx_{i}, \Tilde{\by}_{i})\}^{n_{1}+n_{2}}_{i = n_{1}+1}$.

3. We randomly choose $n_{3}$
samples  uniformly from the remaining to be the labeled set denoted  by
$\{(\bx_{i}, \Tilde{\by}_{i})\}^{n_{1}+n_{2}+n_{3}}_{i = n_{1}+n_{2}+1}$.

To evaluate the quality of an estimator of $\bB$, $\bB^{\ddag}$, since $\cF$ is unknown, we can first obtain an estimater of  $\cF$, $\hat{\cF}$, by minimizing $ \sum^{n_{1} + n_{2}}_{i = n_{1}  +1}\|\Tilde{\by}_{i} - \cF(\bB^{\ddag\top}\bx_{i})\|^{2}_{2}$ within a function class. Then, the quality of $\bB^{\ddag}$ can be evaluated by the prediction mean squared error (PMSE) $(1/n_{1}) \sum^{n_{1}}_{i=1}\|\Tilde{\by}_{i} - \hat{\cF}(\bB^{\ddag\top}\bx_{i})\|^{2}_{2}$. We choose $\cF$ belonging to the linear function class. More complex nonlinear function classes like those defined by neural networks can also be applied, but due to the limited samples, they do not perform well. Previous studies~\citep{scala2021phenotypic,kobak2021sparse} on the dataset show that linear functions are sufficient.       

We consider following variations of our first-order estimator:

1. semi-supervised estimator defined by equation~\eqref{eq:fss}, estimated on $\{\bx_{i}\}^{n}_{i = n_{1}+1} \cup \{  \Tilde{\by}_{i}\}^{n_{1}+n_{2}+n_{3}}_{i = n_{1}+n_{2}+1}$;

2. supervised estimator defined by equation~\eqref{eq:def-fe} with $\by_{i} = \Tilde{\by}_{i}$, estimated on $\{  (\bx_{i}, \Tilde{\by}_{i})\}^{n_{1}+n_{2}+n_{3}}_{i = n_{1}+n_{2}+1}$;

3. unsupervised estimator defined by equation~\eqref{eq:def-fe} with $\by_{i} = \bx_{i}$ and estimated on $\{ \bx_{i} \}^{n}_{i=n_{1}+1}$.

To make full use of the limited samples, our semi-supervised design is in a transductive learning way, i.e., $\{\bx_{i}\}^{n_{1}+n_{2}}_{i=n_{1}+1}$ is also accessible when estimate $\bB$. We also compare our estimators with PCA estimator. For choices of parameters and applications of competing methods, please refer to Section~\ref{sec:rda2} of the Supplement.

The performances of competing methods are depicted in Figure~\ref{fig:rda2}. 
We plot the medians of PMSE over 200 repetitions against embedding dimensions $h$ ranging from 4 to 8. The semi-supervised estimator consistently overwhelms the unsupervised estimator. It would also surpass the supervised method as the latent dimension increases. The two phenomenon jointly demonstrate that semi-supervised estimator could overwhelm the ones estimated solely from the feature or limited labels, respectively, in downstream tasks. Moreover, our unsupervised estimator consistently exceeds the PCA estimator due to leveraging nonlinear relationship among features.              
\begin{figure}[h]
\centering    \includegraphics[width=\textwidth]{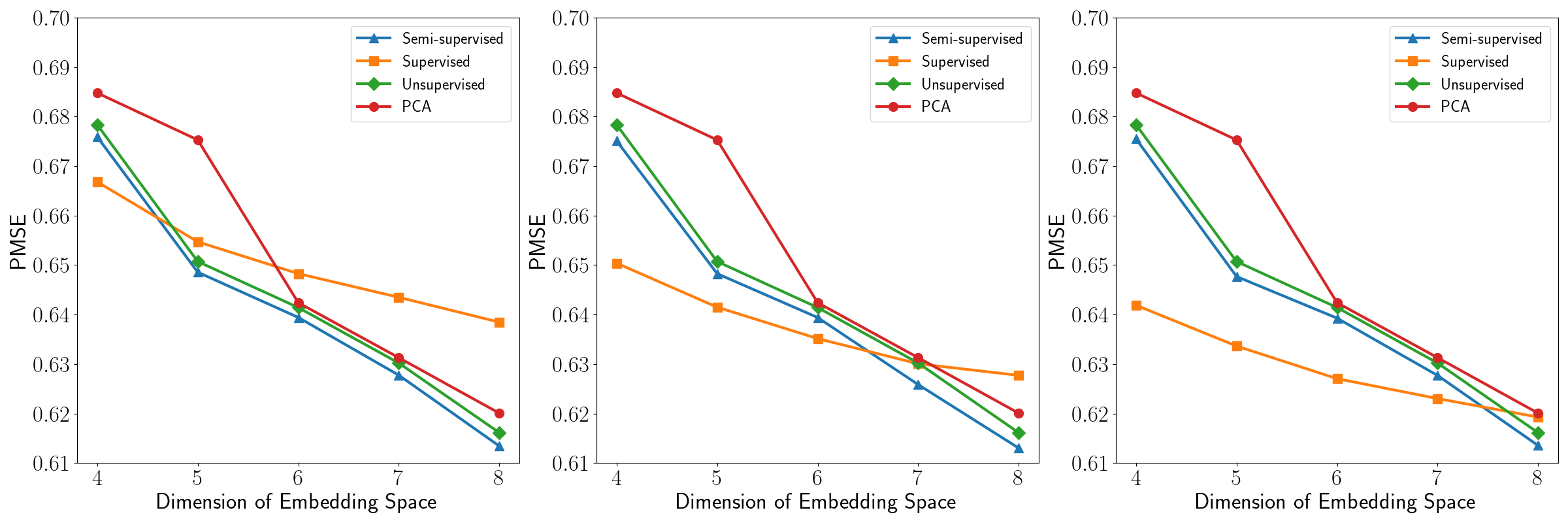}
\caption{PMSE on M1 Patch-seq Dataset, from left to right, $n_{2}$ equals to 50, 75 and 100, respectively. From left to right, the label size $n_{2}$ are 50, 75, 100, respectively.}\label{fig:rda2}
\end{figure}

\section{Discussion and Future Work}
In this paper, we introduced a general framework for learning the latent space within a nonlinear multiple response model setting. Our approach encompasses PCA as a special case when applied to Gaussian design. By eschewing gradient-descent-based methods for latent space learning, our approach is not only computationally more efficient but also offers strong theoretical guarantees. As a result, downstream tasks such as data compression and data transmission can greatly benefit from this framework.

It is crucial to recognize that Stein's lemmas necessitate knowledge of the score function. When the distribution family is known, we can estimate the unknown parameters, such as the mean and covariances, and employ a plug-in approach. However, in more general settings, accurately estimating the score function for both first-order and second-order methods remains a significant area of research. Additionally, this paper focuses on continuous inputs and outputs. To handle discrete labels, one possible solution is to use approximation methods to convert discrete labels into continuous ones. For instance, in the context of one-hot labels in a multi-class classification problem, the Gumbel-softmax trick can be employed for this conversion. Exploring how Stein's method can be adapted for general discrete scenarios will also be a major focus of our future investigations.

\bibliography{main.bib}
\bibliographystyle{ims}

\clearpage

\setcounter{page}{1}

\setcounter{section}{0}
\setcounter{equation}{0}

\setcounter{figure}{0}
\setcounter{table}{0}

\renewcommand{\theequation}{S\arabic{equation}}
\renewcommand{\thesection}{\Roman{section}}
\renewcommand*{\theHsection}{\Roman{section}}

\renewcommand\thefigure{S\arabic{figure}}
\renewcommand\thetable{S\arabic{table}}

\begin{center}
{\Large Supplementary Material for}

{\Large ``Nonlinear Multiple Response Regression and Learning of Latent Spaces"}

\vspace{.1in} {\large Ye Tian, Sanyou Wu and Long Feng}
\end{center}

\section{Some Further Discussions}
\subsection{Notations}\label{sec:app-notation}
We introduce some extra notations which would be used in the Supplement. For vector $\ba$, let $\| \ba 
 \|_{2} = \sqrt{\sum_{i}\ba^{2}_{i}} $ denote the vector Euclidean norm. For matrix $\bA$, let $\| \bA 
 \|_{2} = \sigma_{1}(\bA)$ denote the spectral norm, and  $\text{trace}(\bA)$ denote the trace of $\bA$. We let $\bI_{k}$ denote the identity matrix of dimension $k$.  We let $\text{vec}(\cdot)$ denote the vectorization operator, $\text{vec}^{-1}(\cdot)$ denote its inverse. For observations of random vectors $\bx_{1}, \ldots, \bx_{n} \overset{\text{i.i.d.}}{\sim} \bx$, let var($\bx$) denote its sample variance matrix.

\subsection{Discussion of the Validity of Assumptions in Section~\ref{sec:second-order}}\label{sec:app-as} 
Assumption~\ref{ass:second-order-subexp-1} is in line with Assumption~\ref{assum:1}, since if $\bs(\bx)$ is element-wise sub-Gaussian, then the corresponding $\bT(\bx)$ would be element-wise sub-exponential. 

Assumption~\ref{ass:second-order-subexp-2} is akin to Assumption~\ref{assum:2}, which posits that for all $j \in [q]$, $f_{j}(\bz)$ is essentially bounded.

Assumption~\ref{ass:second-order-subexp-3} also requires that $\epsilon$ is essentially bounded, which could achieved, for example, by assuming that $\epsilon$ follows truncated normal. Both Assumption~\ref{ass:second-order-subexp-2} and~\ref{ass:second-order-subexp-3} require essential boundedness. They are technical requirements to facilitate the analyses, and are not difficult to satisfy in practice when the sample size is finite. It is also demonstrated in our experiments in Section~\ref{sec:simulation} that even though $\epsilon$ or $f_{j}(\bz)$ are unbounded, second-order methods also exhibit good performances in many cases.  

Assumption~\ref{ass:second-order-subexp-4} naturally follows the fact that the second-order method can work only if $\bM_{2}$ is full-rank. While more link functions may cause fluctuations in $\lambda_{r}$, at the true value $\bB$, $\lambda_{r}(\bM_{2})$ still needs to be lower bounded by some absolute constant independent of $q$ for the second-order method to work. We assume that $\lambda_{r}(\bM_{2}) = \Omega(1/r)$ to ensure that the total signal level from eigenvalues can be at least $\Omega(1)$. Moreover, $ \|1/q \sum^{q}_{j=1} f_j(\bz) \|_{\infty} > C$ guarantees that $1/q \sum^{q}_{j=1} f_j(\bz)$ would not degenerate to a constant as more link functions are introduced.  

\subsection{Score Models}\label{sec:app-sm}

There are two main streams of methods of estimating the first-order score function: one, we call direct methods is to estimate $\bs(\bx)$ directly by minimizing certain measures of difference between $\bs(\bx)$ and its estimator $\hat{\bs}(\bx)$; the other, we call noisy methods, is to estimate some noisy version of $\bs(\bx)$ and hopefully, as noise levels get smaller, we can obtain a good estimator of $\bs(\bx)$.  
The direct methods can be divided into two categories: parametric methods and non-parametric kernel methods. We first introduce parametric methods. Let $\bs(\bx;\btheta)$ denote a parametric model of $\bs(\bx)$  parametrized by $\btheta$, the score macthing method \citep{hy2005} estimates $\btheta$ by minimizing the loss function 
\begin{subequations}
\begin{align}\label{eq:def-sm}
    \ell_{\text{sm}}(\btheta) = \frac{1}{2}\EE\left \{ \left \| \bs(\bx;\btheta) - \bs(\bx) \right\|^{2}_{2}\right \},
\end{align}
by applying integral by part, \citet{hy2005} show that minimizing \eqref{eq:def-sm} is equivalent to minimize 
\begin{align}\label{eq:def-smeq}
  J_{\text{sm}}(\btheta) = \EE\left[\text{tr}\left \{ \nabla_{\bx}\bs(\bx;\btheta) \right \} + \frac{1}{2} \left \|\bs(\bx;\btheta)  \right \|^{2}_{2}\right].  
\end{align}
\end{subequations}
Suppose we have observations $\bx_{1}, \ldots, \bx_{n} \overset{\text{i.i.d.}}{\sim} \PP(\bx)$, the score matching estimator $\hat{\btheta}_{\text{sm}}$ can be obtained by minimizing the empirical version of \eqref{eq:def-smeq}, i.e., 
\begin{align*}
\hat{\btheta}_{\text{sm}} = \text{argmin} \frac{1}{n} \sum_{i=1}^{n} \left[\text{tr}\left \{ \nabla_{\bx}\bs(\bx_{i};\btheta) \right\} + \frac{1}{2} \left \|\bs(\bx_{i};\btheta)  \right\|^{2}_{2}\right].
\end{align*}
Instead of measuring the squared $L_{2}$--norm between $\bs(\bx)$ and its estimator, sliced-score matching \citep{song2020} measures the difference by their projections on random directions from certain distributions. Specifically, they consider the loss function
\begin{subequations}
\begin{align}\label{eq:def-ssm}
    \ell_{\text{ssm}}\{\btheta, \PP(\bv)\} = \frac{1}{2}\EE_{\bv}\EE_{\bx}[ \bv^{\top}\{ \bs(\bx;\btheta) - \bv^{\top}\bs(\bx) \}^{2}],
\end{align}
by applying integral by part, \citet{song2019generative} show that minimizing \eqref{eq:def-ssm} is equivalent to minimize 
\begin{align}\label{eq:def-ssmeq}
  J_{\text{ssm}}\{\btheta, \PP(\bv)\} = \EE_{\bv} \EE_{\bx}\left[ \bv^{\top}\{\nabla_{\bx}\bs(\bx;\btheta)\}\bv  + \frac{1}{2}\{\bv^{\top}\bs(\bx;\btheta)  \}^{2}\right] 
\end{align}
\end{subequations}

Suppose for each $\bx_{i}$, we sample $\bv_{i1} \ldots, \bv_{im} \overset{\text{i.i.d.}}{\sim} \PP(\bv)$, the score matching estimator $\hat{\btheta}_{\text{ssm}}$ can be obtained by minimizing the empirical version of \eqref{eq:def-ssmeq}, i.e., 
\begin{align*}
\hat{\btheta}_{\text{ssm}} = \text{argmin} \frac{1}{nm} \sum_{i=1}^{n} \sum^{m}_{j=1} \left[ \bv_{ij}^{\top}\{\nabla_{\bx}\bs(\bx_{i};\btheta)\}\bv_{ij}  + \frac{1}{2}\{\bv_{ij}^{\top}\bs(\bx_{i};\btheta)  \}^{2}\right].
\end{align*}
In general, $\bs(\bx;\btheta)$ is chosen be some deep neural network $\cF_{\btheta}(\bx):\RR^{p} \rightarrow \RR^{p}$. 

Except for parametric models, there is another line of work considering the kernel-based score estimator, and many of these methods can be unified by the following framework\citep{zhou2020}. 

Let $\cX \subset \RR^{p}$ denote the support of $\bx$, $\cK$ denote a matrix-valued kernel $\cK: \cX \times \cX \rightarrow \RR^{p \times p}$ satisfying for any $\bx, \bx' \in \cX$, $\cK(\bx,\bx') = \cK(\bx',\bx)$ and for any $m \in N_{+}$, $\{\bx\}^{m}_{i=1} \subset \cX$ and $\{ \bc_{i} \}^{m}_{i=1}$, $\sum^{m}_{i,j=1}\bc^{\T}_{i}\cK(\bx_{i},\bx_{j})\bc_{j} \geq 0 $. We denote a vector-valued reproducing kernel
Hilbert space (RKHS) associated to $\cK$ by $\cH_{\cK}$. Then, the kernel estimator of $\bs(\bx)$ can be obtained by solving the regularized  vector-valued regression problem 
\begin{align}
\min_{\bs_{r}\in \cH_{\cK}} \frac{1}{n} \sum_{i=1}^{n} \|\bs(\bx_{i}) - \bs_{r}(\bx_{i})\|^{2}_{2} + R_{\lambda}(\bs_{r}),   
\end{align}
where $R_{\lambda}(\cdot)$ represents a regularizer belonging to the spectral class of regularization \citep{bauer2007} with tuning parameter $\lambda$ corresponding to the estimator of $\bs(\bx)$ defined as $g_{\lambda}(\hat{\cL}_{\cK})\hat{\cL}_{\cK}\bs(\bx)$, where $\hat{\cL}_{\cK} = \frac{1}{n} \sum^{n}_{i = 1}\cK(\bx_{i}, \cdot)$ and $g_{\lambda}(\hat{\cL}_{\cK})$ approximate the inverse of operator  $\hat{\cL}_{\cK}$.
Note that $\hat{\cL}_{\cK}:\cH_{\cK} \rightarrow \cH_{\cK}$ is positive semidefinite, suppose it has the eigen decomposition $\hat{\cL}_{\cK} = \sum_{i=1}^{np}\sigma_{i} \langle \cdot, U_{i} \rangle_{\cH_{\cK}} U_{i}$, then $g_{\lambda}(\hat{\cL}_{\cK})$ can be represented by $ g_{\lambda}(\hat{\cL}_{\cK}) = \sum_{i=1}^{np} g^{\sigma}_{\lambda}(\sigma_{i})\langle \cdot, U_{i} \rangle_{\cH_{\cK}} U_{i}$, where $g^{\sigma}_{\lambda}: \RR_{+} \rightarrow \RR$ is a scalar function. For example, the Tikhonov regularization corresponds to $g^{\sigma}_{\lambda}(\sigma) = 1/(\lambda + \sigma)$. In reality, $\bs(\bx)$ is unknown, so the solution is infeasible, but by integral by part, \citet{zhou2020} prove that $\cL_{\cK}\bs(\bx) = -\EE\{\text{div}_{\bx}\cK(\bx, \cdot)^{\T}\}$, therefore, empirically, we can estimate $\bs(\bx)$ by 
$\hat{\bs}_{r}(\bx) = - g_{\lambda}(\hat{\cL}_{\cK})\hat{\xi}$,  
where $\hat{\xi} = (1/n)\sum_{i=1}^{n}\text{div}_{\bx}\cK(\bx_{i}, \cdot)^{\T}$.
Many kernel score estimators like kernel exponential family estimator~\citep{strathmann2015gradient}, Stein gradient estimator~\citep{li2017gradient}, spectral Stein gradient estimator~\citep{shi2018spectral}, etc., can be included in the framework with different choices of the function $g^{\sigma}_{\lambda}$ and hypothesis space $\cH_{\cK}$. For more details please refer to~\citet{zhou2020} and references therein.  

All those direct methods mentioned above use integral by parts to avoid the unknown function $\bs(\bx)$. 

In contrast, noisy methods circumvent the unknown $\bs(\bx)$ by estimating a noisy version of $\bs(\bx)$. Specifically, denoising score matching \citep{vincent2011} considers the estimator of $\btheta$ 
minimizing the following loss function 
\begin{subequations}
\begin{align}\label{eq:def-dsm-1}
\EE_{(\bx, \Tilde{\bx})} \left \| \bs(\Tilde{\bx};\btheta) - \frac{\partial \ln{P_{\sigma}(\Tilde{\bx}|\bx)}}{\partial\Tilde{\bx}} \right  \|^{2}_{2},   
\end{align}
where $P_{\sigma}(\Tilde{\bx}|\bx)$ is some smooth density estimator with bandwidth $\sigma$, \citet{vincent2011} prove that under mild conditions minimizing \eqref{eq:def-dsm-1} is equivalent to minimizing 
\begin{align*}
\EE_{\Tilde{\bx}} \left \| \bs(\Tilde{\bx};\btheta) - \frac{\partial \ln{P_{\sigma}(\Tilde{\bx})}}{\partial\Tilde{\bx}} \right  \|^{2}_{2},   
\end{align*}
where $P_{\sigma}(\Tilde{\bx}) = (1/n)\sum_{i=1}^{n} \ln{P_{\sigma}(\Tilde{\bx}|\bx_{i})}$. Hopefully, as $\sigma$ gets close to $0$, the noisy model $\bs(\bx;\hat{\btheta}_{\text{dsm}})$ would be a good approximation of $\bs(\bx)$. Empirically, suppose for each $\bx_{i}$, we generate  $\Tilde{\bx}_{i,1}, \ldots \Tilde{\bx}_{i,m} \overset{\text{i.i.d.}}{\sim} P_{\sigma}(\Tilde{\bx}|\bx_{i})$, then we can get the estimator $\hat{\btheta}_{\text{dsm}}$ by minimizing the following empirical loss function:
\begin{align*}
\frac{1}{nm}\sum_{i=1}^{n}\sum_{j=1}^{m} \left \| \bs(\Tilde{\bx}_{i,j};\btheta) - \frac{\partial \ln{P_{\sigma}(\Tilde{\bx}_{i,j}|\bx_{i})}}{\partial\Tilde{\bx}_{i,j}} \right  \|^{2}_{2}.
\end{align*}
\end{subequations}
$\bs(\bx;\btheta)$ is also generally chosen to be a deep neural network $\cF_{\btheta}(\bx):\RR^{p} \rightarrow \RR^{p}$. Some modifications and extensions were proposed. For example, \citet{song2019generative} proposed to train the score model using a geometric series of levels of $\sigma$.

Compared with the rapid development of first-order score models due to the prevalence of diffusion models, there is much limited work on second-order score models. For kernel methods, there's no clear function space corresponding to second order gradients; for neural networks, directly taking gradient of first-order score models with respect to predictors not only leads to error accumulation but also in general, cannot generate symmetric output. Moreover, building a neural network for the second-order score also faces up constrains like symmetry, which makes the problem difficult. To the best of our knowledge,~\citet{ meng2021estimating} is the only work that propose a deep neural network model for the second-order score function. However, their method needs to estimate a first-order score model at the same time, which leads to error accumulation.

\subsection{Reduced Rank Estimator, Neural Network Estimator and Ours}\label{sec:app-rno}

The reduced rank regression addresses the following multiple-response linear regression model:\begin{align}\label{eq:def-lin-fac-reg}
    \by = \bC^{\top}\bx + \varepsilon,
\end{align}
where $\bC \in \mathbb{R}^{p\times q}$, and $\text{rank}(\bC) \leq r$ for some $r \leq \min(p,q)$. 
Let $\bX$ denote the data matrix $(\bx_{1}, \ldots, \bx_{n})^{\top}$ and $\bY$ denote the response matrix $(\by_{1}, \ldots, \by_{n})^{\top}$, reduced rank regression estimates the coefficient matrix $\bC$ by solving the constrained optimization problem: $
    \hat{\bC} = \text{argmin}_{\text{rank}(\bC) \leq r} \| \bY  - \bX \bC   \|^{2}_{F}$.

Under low-dimensional setting, i.e., $\text{rank}(\bC)$ is fixed, it is well known that if $r$ is given, $\hat{\bC}$ has the closed form \citep{mukherjee2011}: $\hat{\bC} = \hat{\bC}_{ols}\bV_{r}\bV_{r}^{\top}$. $\hat{\bC}_{ols}$ is the ordinary least squares estimator, i.e., 
$\hat{\bC}_{ols} = \text{argmin} \| \bY  = \bX \bC   \|^{2}_{F}$. $\bV_{r} = \text{SVD}_{l,r}\{(\bX \hat{\bC}_{ols})^{\T}\}$, i.e., the matrix consisting of the first $r$ leading right singular vectors of $\bX \hat{\bC}_{ols}$.   

Notably, in our model \eqref{model:1}, if we additionally require that $r < q$, and$ f_{j}(\bv) = \ba^{\top}_{j}\bv$ for certain $\ba_{j} \in \RR^{r}, j \in [q]$, it reduces to model \eqref{eq:def-lin-fac-reg} with $\bC = \bB\bA$, where $\bA = (\ba_{1}, \ldots, \ba_{q})$. Therefore,  model \eqref{eq:def-lin-fac-reg} is a special case of our model \eqref{model:1}, and $\hat{\bC}$ can be used for estimating $\bB$. The reduced rank regression estimator of $\bB$ is defined as $\widehat{\bB}_{R} = \text{SVD}_{l,r}\left( \hat{\bC}\right )$.   
\begin{remark}\label{rem:f-r}
Under the assumption that $\bx \sim \mathcal{N}(0, \bSigma_{\mathcal{N}})$, and $\bSigma_{\mathcal{N}}$ is non-degenerate, by Lemma~\ref{lem:gaussian-score}, $\bs(\bx) = \bSigma^{-1}_{\mathcal{N}}\bx$, if we let $\hat{\bs}(\bx) = (1/n)(\bX^{\top}\bX)^{-1}\bx$, then $\widehat{\bB} = \text{SVD}_{l,r}\left( \hat{\bC}_{ols}\right )$. The difference between $\widehat{\bB}_{R}$ and $\widehat{\bB}$ lies in the projection matrix $\bV_{r}\bV_{r}^{\top}$, which can be seen as the benefit of $\widehat{\bB}_{R}$ taking advantage of the extra linear information of link functions in model \eqref{eq:def-lin-fac-reg}. Results of simulations in Section~\ref{sec:simulation} demonstrate that performances of $\widehat{\bB}$ and $\widehat{\bB}_{R}$ are almost the same when $\bx \sim \mathcal{N}(0, \bSigma_{\mathcal{N}})$, the difference above is almost negligible. 
\end{remark}   

The multi-index model has a close relationship with NNs, and many works try to show that NNs can be used to estimate MIM and construct the representation space in the low-dimensional setting~\citep{Benedikt18, damian2022neural, mousavi2022neural}. Similarly, for our multi-response extension, the matrix 
$\bB$ can also be estimated by NNs. Let $\cF_{\btheta}(\cdot) : \mathbb{R}^{r} \rightarrow \mathbb{R}^{q}$ be a neural network parametrized by $\btheta$, the neural network estimator $\widehat{\bB}_{N}$ can be obtained by 
\begin{align}\label{eq:def_esbn}
    (\hat{\btheta}_{N}, \widehat{\bB}_{N}) = \text{argmin}_{(\btheta, \bB)} \frac{1}{n}\sum_{i=1}^{n}\| \by_{i} - \cF_{\btheta}(\bB^{\top}\bx_{i}) \|^{2}_{2}.
\end{align}
We solve the optimization problem by mini-batch gradient descent. For details of the neural network structure and training procedures, please see Section~\ref{sec:app-nn} of the Supplement.

To apply our method, by densities of $\bx$, first-order and second-order score functions can be derived in closed forms. Please refer to Section~\ref{sec:app-sf} of Supplement for details. Then, for $\bx \sim \mathcal{N}(0, \bSigma_{\mathcal{N}})$, parameters are estimated by maximum likelihood estimation. For $\bx \sim t_{\nu}(0, \bSigma_{t})$ and $\bx \sim \cH_{\chi, \psi}(0, \bSigma_{\cH})$, parameters are estimated by a multi-cycle, expectation, conditional estimation algorithm~\citep{breymann2013ghyp}. Then we use the plug-in estimators $\hat{\bs}(\cdot)$ and $\hat{\bT}(\cdot)$ to calculate $\widehat{\bB}$ and $\Tilde{\bB}$ defined by equation~\eqref{eq:def-fe} and equation~\eqref{eq:def-se}, respectively. 

\subsection{Extended Experiments and Further Discussion}\label{sec:app-ee}
In this subsection, we provide results of experiments on additional choices of $p$ in $\{ 50, 80, 100 \}$, and $p = 30$ in the main paper, which are depicted in Figures~\ref{fig:app-0} to~\ref{fig:app-3}. Except results similar to those discussed in Section~\ref{sec:exp-dis}, comparing Figures~\ref{fig:app-0} to~\ref{fig:app-3}, we see that, while the second-order method tend to overwhelm other methods as the sample size increases, the minimum sample size that it needs to overtake others increases as $p$ increases, which coincides with our analysis of the higher order dependency of its convergence rate on $p$ in Section~\ref{sec:second-order}. 

\begin{figure}[h]
\begin{subfigure}{\textwidth}
\centering    \includegraphics[width=\textwidth,height=0.3\linewidth]{figures/ln_30.png}
\caption{linear link functions }
\end{subfigure}
\begin{subfigure}{\textwidth}
\centering    \includegraphics[width=\textwidth,height=0.3\linewidth]{figures/nln_m_30.png}
\caption{the first mechanism of generating nonlinear link functions}
\end{subfigure}
\begin{subfigure}{\textwidth}
\centering    \includegraphics[width=\textwidth,height=0.3\linewidth]{figures/nln_r_30.png}
\caption{the second mechanism of generating nonlinear link functions}
\end{subfigure}
\caption{The figure demonstrates finite sample performances of competing methods when $p = 30$. From left to right, $\bx \sim \mathcal{N}(0, \bSigma_{\mathcal{N}})$, $ \mathcal{H}_{\chi,\psi}(0, \bSigma_{\mathcal{H}})$ and $t_{\nu}(0, \bSigma_{t})$, respectively.}\label{fig:app-0}
\end{figure}

\begin{figure}[p]
\begin{subfigure}{\textwidth}
\centering    \includegraphics[width=\textwidth,height=0.3\linewidth]{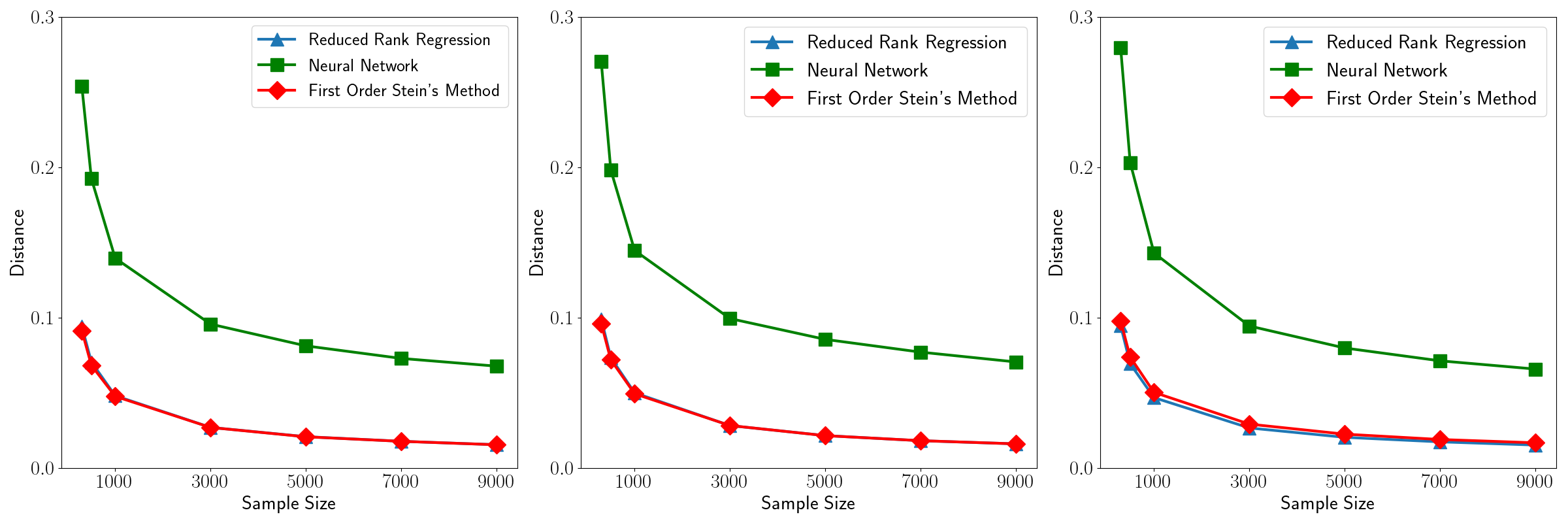}
\caption{linear link functions}
\end{subfigure}
\begin{subfigure}{\textwidth}
\centering    \includegraphics[width=\textwidth,height=0.3\linewidth]{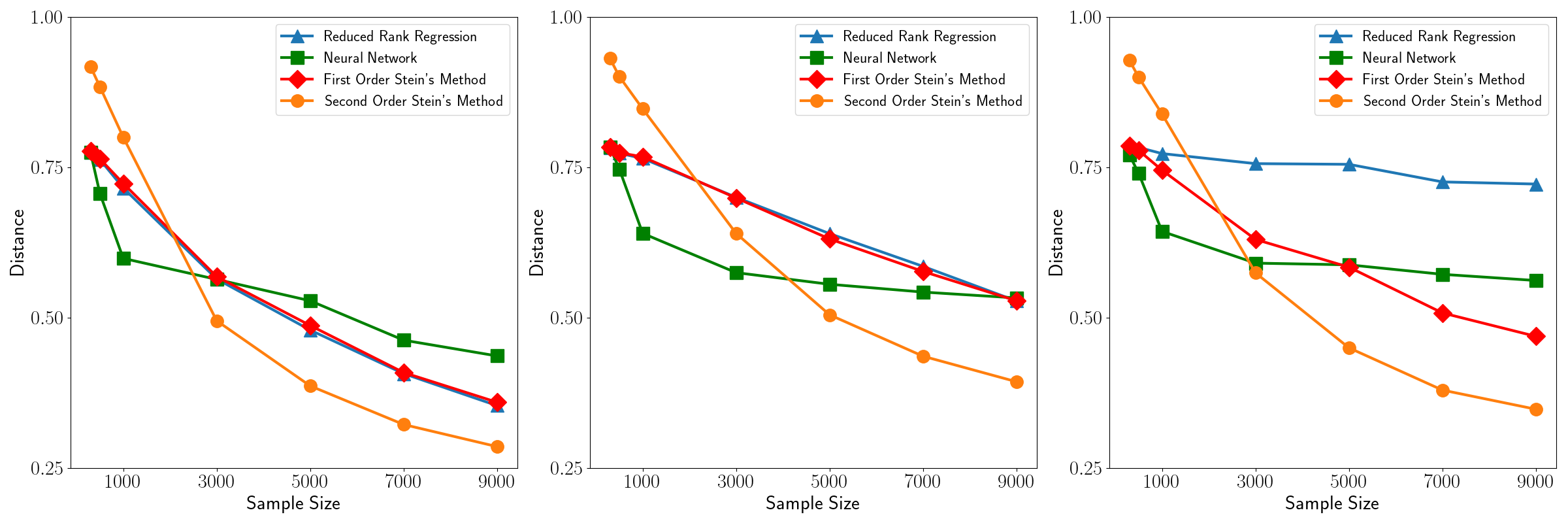}
\caption{the first mechanism of generating nonlinear link functions}
\end{subfigure}
\begin{subfigure}{\textwidth}
\centering    \includegraphics[width=\textwidth,height=0.3\linewidth]{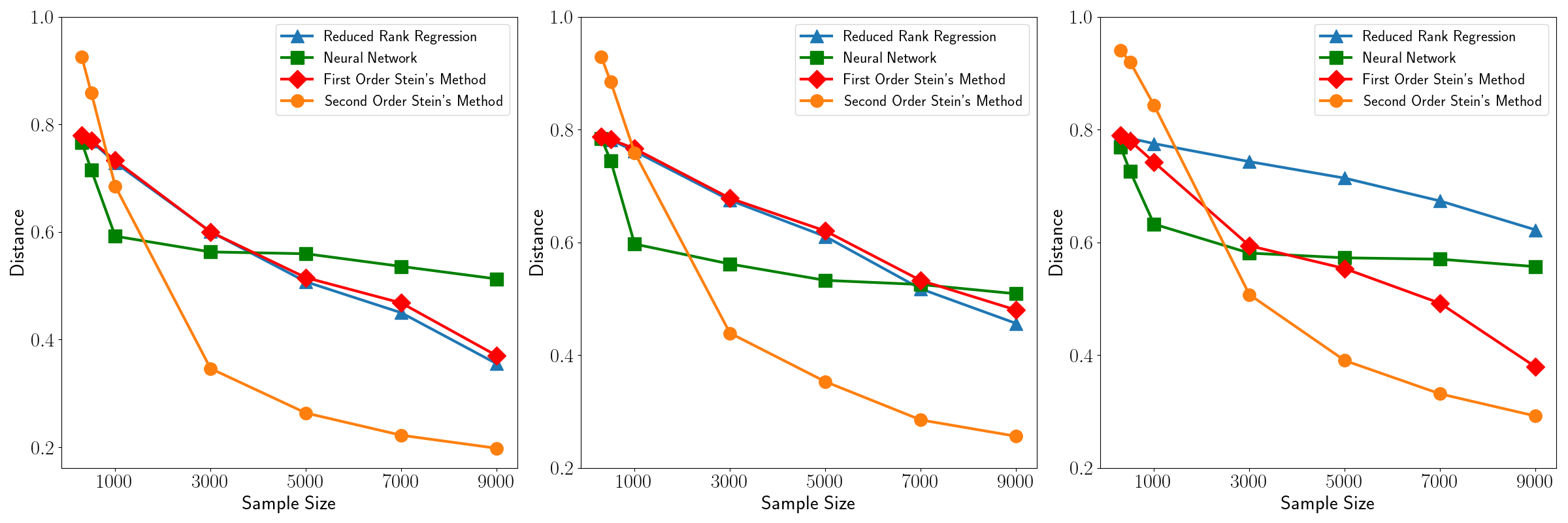}
\caption{the second mechanism of generating nonlinear link functions}
\end{subfigure}
\caption{The figure demonstrates finite sample performances of competing methods when $p = 50$, from left to right, $\bx \sim \mathcal{N}(0, \bSigma_{\mathcal{N}})$, $ \mathcal{H}_{\chi,\psi}(0, \bSigma_{\mathcal{H}})$ and $t_{\nu}(0, \bSigma_{t})$, respectively. }\label{fig:app-1}
\end{figure}

\begin{figure}[p]
\begin{subfigure}{\textwidth}
\centering    \includegraphics[width=\textwidth,height=0.3\linewidth]{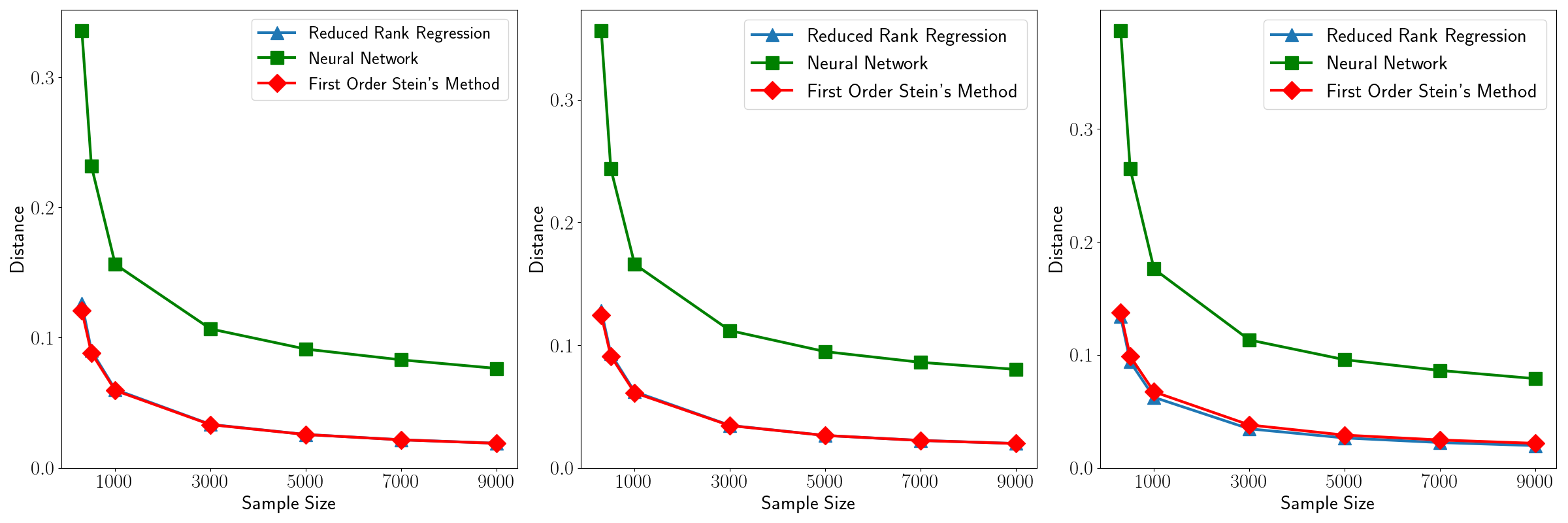}
\caption{linear link functions}
\end{subfigure}
\begin{subfigure}{\textwidth}
\centering    \includegraphics[width=\textwidth,height=0.3\linewidth]{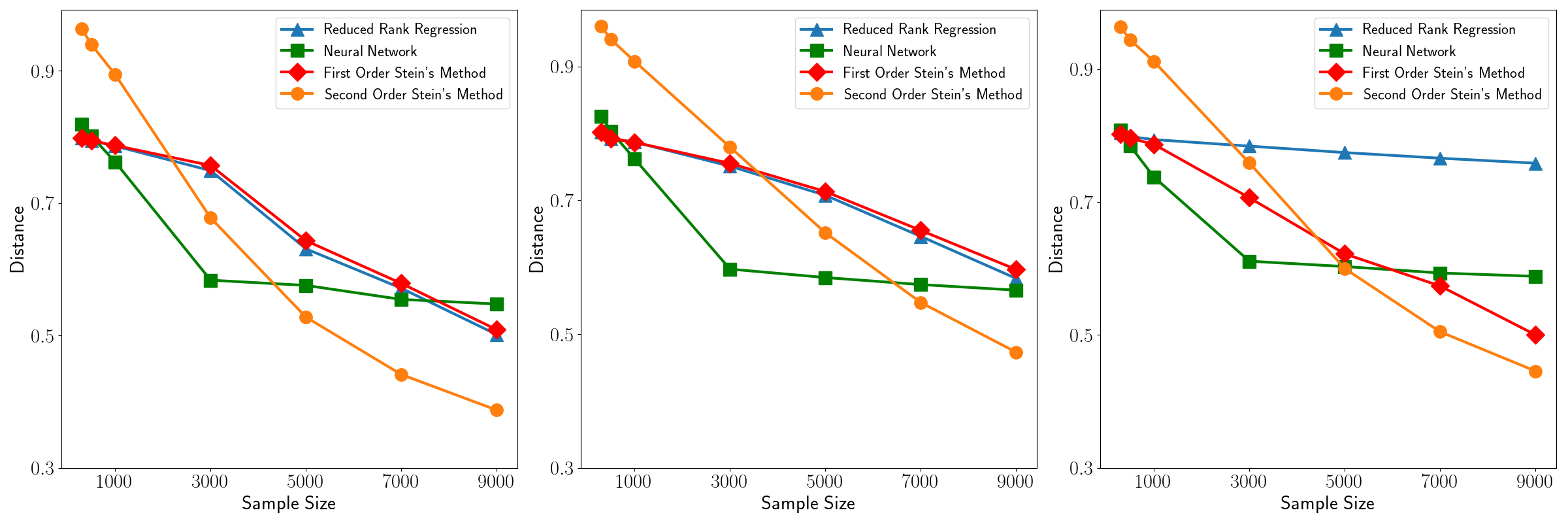}
\caption{the first mechanism of generating nonlinear link functions}
\end{subfigure}
\begin{subfigure}{\textwidth}
\centering    \includegraphics[width=\textwidth,height=0.3\linewidth]{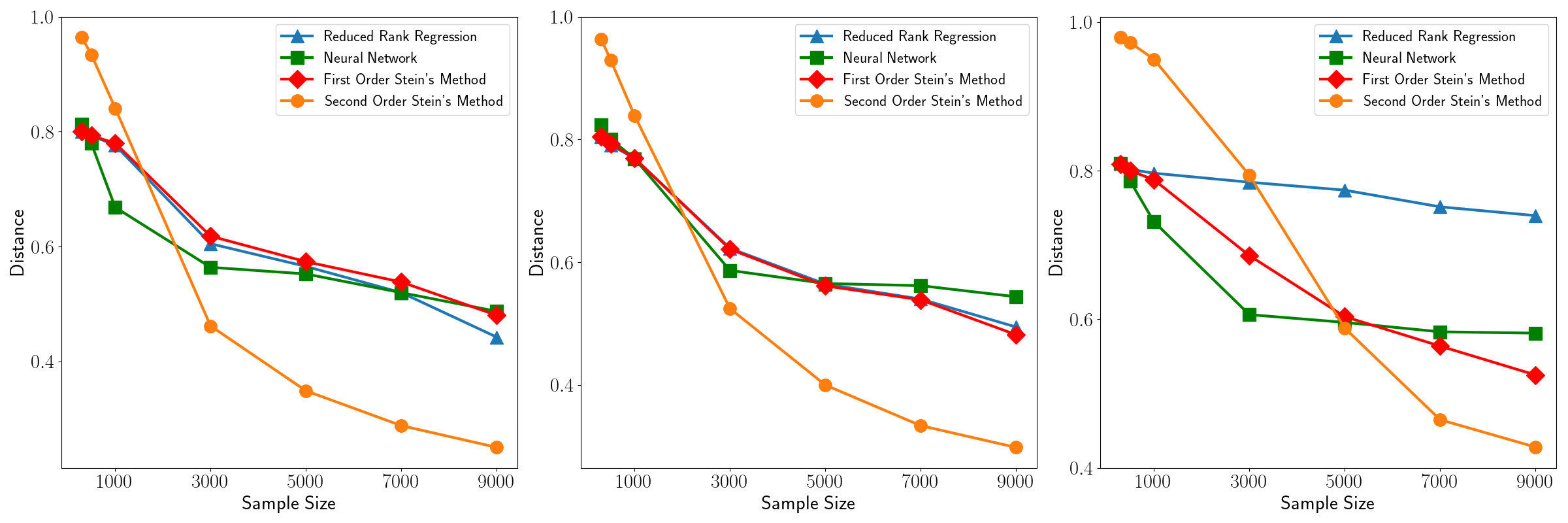}
\caption{the second mechanism of generating nonlinear link functions}
\end{subfigure}
\caption{The figure demonstrates finite sample performances of competing methods when $p = 80$, from left to right, $\bx \sim \mathcal{N}(0, \bSigma_{\mathcal{N}})$, $ \mathcal{H}_{\chi,\psi}(0, \bSigma_{\mathcal{H}})$ and $t_{\nu}(0, \bSigma_{t})$, respectively.}\label{fig:app-2}
\end{figure}

\begin{figure}[p]
\begin{subfigure}{\textwidth}
\centering    \includegraphics[width=\textwidth,height=0.3\linewidth]{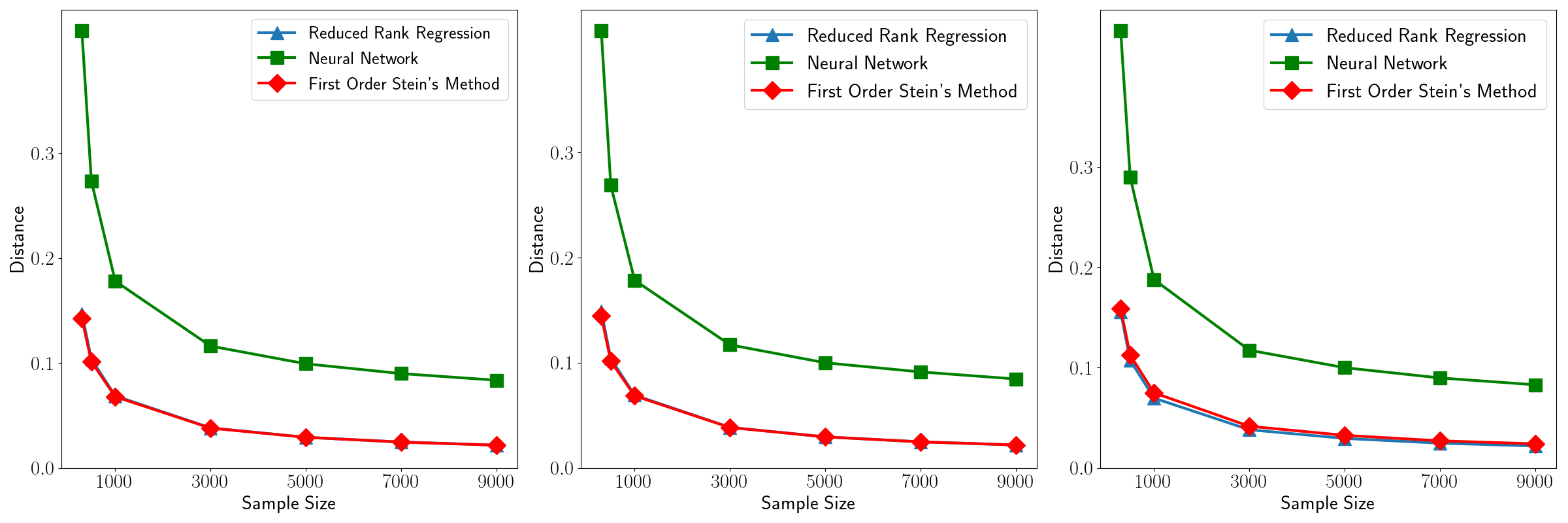}
\caption{linear link functions}
\end{subfigure}
\begin{subfigure}{\textwidth}
\centering    \includegraphics[width=\textwidth,height=0.3\linewidth]{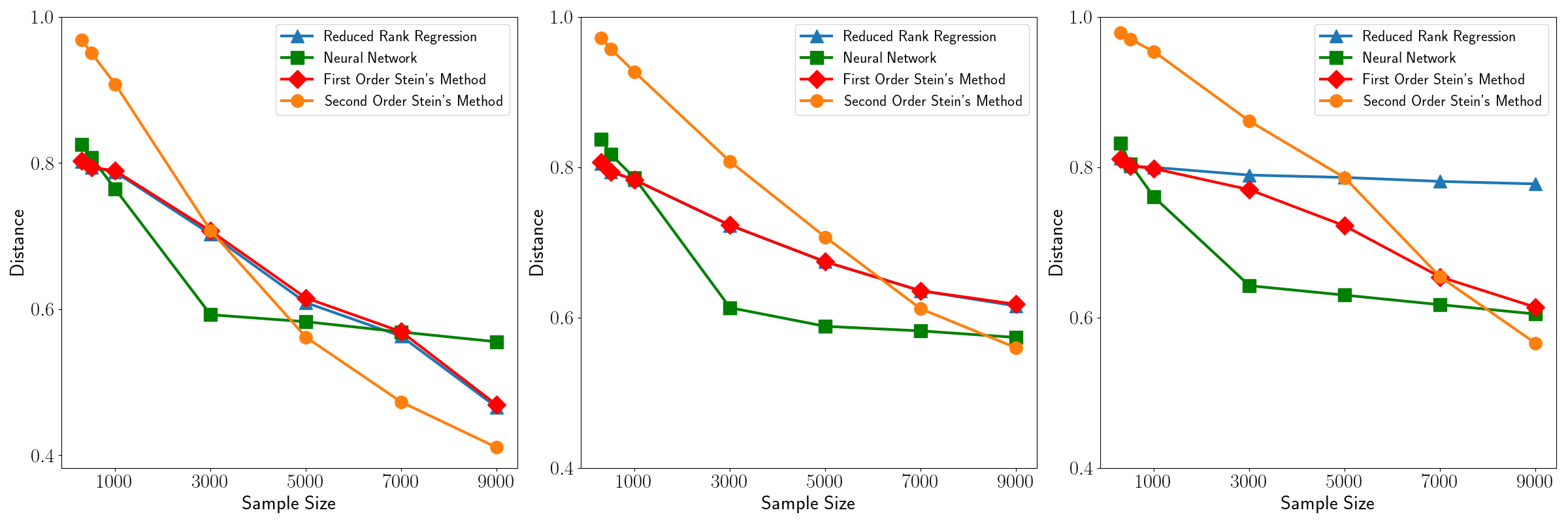}
\caption{the first mechanism of generating nonlinear link functions}
\end{subfigure}
\begin{subfigure}{\textwidth}
\centering    \includegraphics[width=\textwidth,height=0.3\linewidth]{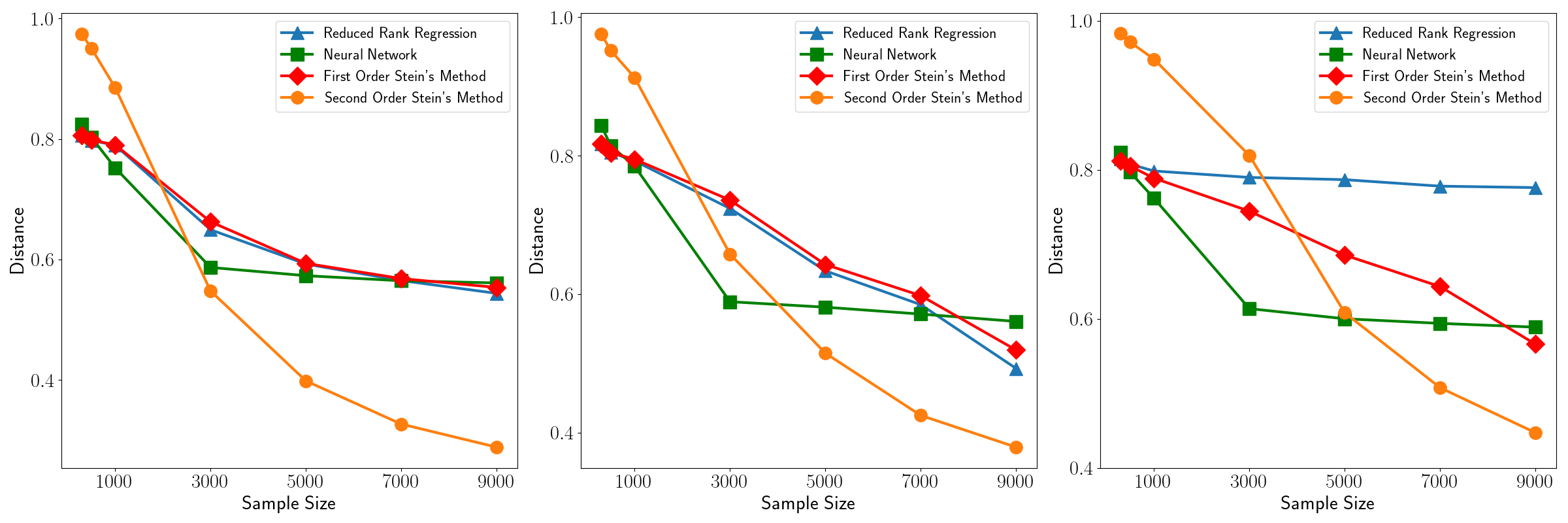}
\caption{the second mechanism of generating nonlinear link functions}
\end{subfigure}
\caption{The figure demonstrates finite sample performances of competing methods when $p = 100$, from left to right, $\bx \sim \mathcal{N}(0, \bSigma_{\mathcal{N}})$, $ \mathcal{H}_{\chi,\psi}(0, \bSigma_{\mathcal{H}})$ and $t_{\nu}(0, \bSigma_{t})$, respectively. }\label{fig:app-3}
\end{figure}

\subsection{Measures of Qualities of Estimates of Latent Spaces in Section~\ref{sec:rda} }\label{sec:app-measures}
Suppose image data $\bX \in \RR^{c \times c} \sim \PP(\bX)$, we have an encoder $\cE(\bZ):\mathbb{R}^{c \times c} \rightarrow \mathbb{R}^{h}$, and a decoder $\cD(\bz):\mathbb{R}^{h} \rightarrow \mathbb{R}^{c \times c}$, where $h$ denotes the embedding dimension. 
We first consider two metrics measuring (dis--)similarities between the original image $\bX$ and the recovered image $\hat{\bX} = \cD \circ \cE(\bX) $.  
\begin{itemize}
    \item [1. ] We consider the normalized root squared error (NRSE), i.e., $\EE_{\bX}(\|\hat{\bX} - \bX\|_{F}/\| \bX \|_{F})$
    \item [2. ] We also consider the structural similarity index measure (SSIM) between $\hat{\bX}$ and $\bX$, i.e., $\EE_{\bX}\{\text{SSIM}(\hat{\bX}, \bX)\}$. SSIM calculates the similarity score between two images by comparing their luminance, contrast, and structure; it ranges from -1 to 1 and the larger the SSIM is, the more similar the images are. For the definition of SSIM, please refer to Section \ref{sec-app-ssim} of the Supplement for details. 

    \item [3. ] Specifically, we consider the classification accuracy defined by: 
    \begin{align*} 
        \EE[ \II \{\text{argmax} \circ \; 
 \cC_{\cE} \circ \cE(\bX) = y\}],
    \end{align*}
where $\II(\cdot)$ is the indicator function and $ \cC_{\cE}(\cdot):\RR^{h} \rightarrow \RR^{10}$ is a multinomial logistic regression model trained on the set $[\{ \cE(\bX_{i}), y_{i} \}]^{n}_{i=1}$. For details of the model $\cC_{\cE}(\cdot)$, please refer to Section \ref{sec:app-cl} of the Supplement.
\end{itemize}

\subsection{Application Details of Competing Methods in Section~\ref{sec:rda}}\label{sec:app-app-rda}
Let $\widehat{\bB}_{\text{PCA}} = \text{Eigen}_{h}[\text{var}\{\text{vec}(\bX)\}]$.
For PCA method, we let the encoder to be $\cE_{\text{PCA}}(\bX) = \widehat{\bB}_{\text{PCA}}^{\top} \text{vec}(\bX)$ and the decoder to be $\cD_{\text{PCA}}(\bx) = \text{vec}^{-1}\{\widehat{\bB}_{\text{PCA}} \bx\}$.\footnote{Since pixel values of original images are in [0,1], we use the clip function $\text{clip}(x, \alpha, \beta) =\min\{\max(x, \alpha), \beta\}$ to scale the output of the decoder. For fair comparison, we handle the range problem in the same way for all methods, instead of constraining the range of output in the structure of the decoder.}
We employ fully connected neural networks for AE, encoder  $\cE_{\text{AE}, \btheta_{e}}(\bX)$ and decoder $\cD_{\text{AE}, \btheta_{d}}(\bx)$, are parametrized by $\btheta_{e}$ and $\btheta_{d}$ respectively, which can be estimated by minimizing the reconstruction squared error loss. Empirically,  $(\hat{\btheta}_{e}, \hat{\btheta}_{d})$ is the minimizer of the following loss function: 
\begin{align}\label{eq:loss-ae}
 \ell(\btheta_{e}, \btheta_{d}) = \frac{1}{n} \sum_{i=1}^{n}\left \{\|\cD_{\text{AE}, \btheta_{d}} \circ \cE_{\text{AE}, \btheta_{e}}(\bX_{i})  - \bX_{i}\|^{2}_{F}\right \}. 
\end{align}
We also use batch--training methods to solve the optimization problem. For details of the structure of the AE and training procedures, please refer to Section \ref{sec:app-st} of the Supplement. 

For our first-order estimator, we use the pre-trained score model for MNIST from \citet{song2019generative}  as our first-order score estimator denoted as $\hat{\bs}(\cdot)$\footnote{The pre-trained score model from \citet{song2019generative} takes the matrix-valued image data and output the score with the same shape as the input, to make it coincide with our setup, we flat the score into a vector.}. Then the plug-in estimator $\check{\bB}$ equals $\text{SVD}_{l,h}[(1/n)\sum^{n}_{i=1}\text{vec}\{\hat{\bs}(\bX_{i})\}\{\text{vec}(\bX_{i})\}^{\T}]$, and our first-order encoder is defined as $\cE_{S_{f}}(\bX) = \check{\bB}^{\top} \text{vec}(\bX)$. 
Our first-order decoder has the same structure as that of the AE, parametrized by $\btheta_{S_{f}}$ and denoted as $\cD_{S_{f}, \btheta_{S_{f}}}(\cdot)$; $\btheta_{S_{f}}$
can also be estimated by minimizing the following empirical mean reconstruction squared error losses on the training set:
\begin{align}\label{eq:def-sdoc}
 \ell(\btheta_{S_{f}}) = \frac{1}{n}\sum^{n}_{i=1}\{\|\cD_{S_{f}, \btheta_{S_{f}}}  \circ \cE_{S_{f}}(\bX_{i})  - \bX_{i}\|^{2}_{F}\} 
\end{align}    
The decoder is trained in the way similar to that of the AE, please see Section \ref{sec:app-st} of the Supplement for details. 

For our second-order estimator, since there is a lack of trustworthy second-order score models as discussed in Section~\ref{sec:app-sm}, we assume pixel values in an vectorized image data follow a multivariate normal distribution and use the estimator of second-order stein's score of multivariate normal distribution introduced in Section~\ref{sec:app-sf} as the second-order score estimator $\hat{\bT}(\cdot)$. We still use $\Tilde{\bB}$ to denote the second-order plug-in estimator, which equals  $\text{SVD}_{l,h}[(1/n/c^{2}\sum^{n}_{i=1}\sum^{c^{2}}_{j = 1}\text{vec}(\bX_{i})_{j}\hat{\bT}\{\text{vec}(\bX_{i})\}]$, and  the second-order encoder is defined as $\cE_{S_{s}}(\bX) = \Tilde{\bB}^{\top} \text{vec}(\bX)$. The second order decoder $\cD_{S_{s}, \btheta_{S_{s}}}(\bX)$ has the same structure as the first-order decoder and is trained in the same way.

\section{Details of Simulations}
\subsection{Parameters and Link Functions }\label{sec:app-pln}
Throughout the experiment, we let $q = 20$; $p \in \{30,50,80,100\}$; $n \in \{300, 500, 1000, 3000, 5000, 7000, 9000\}$; $r = 3$;
$\mu_{o}=0$,
$\sigma_{o}=1$;
$\sigma_{\epsilon}=0.5$; $\nu = 10$; $\chi=2p+1$, $\psi = p$.\\
In the first case, we let $\ba_{1}, \ldots, \ba_{q}\overset{\text{i.i.d.}}{\sim}\mathcal{N}(0, 0.5^{2}\bI_{r})$; in the second and third case, we let $\ba_{1}, \ldots, \ba_{q}\overset{\text{i.i.d.}}{\sim} \ba$ and elements of $\ba$ are i.i.d. samples from $|z| + 3$, where $z \sim \mathcal{N}(0,1)$. 
 
We generate $\bSigma_{\mathcal{N}}$, $\bSigma_{t}$ and $\bSigma_{\cH}$ in the following three steps:
\begin{enumerate}
    \item draw a random orthogonal matrix $\bQ$ from the $O(p)$ Haar distribution;
    \item generate a diagonal matrix $\bLambda_{(\cdot)}$, whose diagonal elements $\bLambda_{i,i} \overset{\text{i.i.d.}}{\sim} |z| + b_{(\cdot)}$, where $z \sim \mathcal{N}(0,\sigma^{2}_{(\cdot)})$;
    \item let $\bSigma_{(\cdot)} = \bQ \bLambda_{(\cdot)} \bQ^{\top}$.
\end{enumerate}
We choose $b_{\mathcal{N}} = b_{t} = b_{\cH} = 1$, $\sigma_{\mathcal{N}} = \sigma_{t} = \sigma_{\cH} = 1$.

We use the following elementary nonlinear functions:
\begin{enumerate}
    \item $m_{1} \coloneqq \sin(x-1)$
    \item $m_{2} \coloneqq \text{cosh}(x-1)$
    \item $m_{3} \coloneqq \text{cos}(x-1)$
    \item $m_{4} \coloneqq \text{tanh}(x-1)$
    \item $m_{5} \coloneqq \text{arctan}(x-1)$    
    \item $m_{6} \coloneqq (x - 1)^{3}$
    \item $m_{7} \coloneqq (x - 1)^{5}$
    \item $m_{8} \coloneqq 1/\{1 + \exp(-x)\}$
    \item $m_{9} \coloneqq\sqrt{(x-1)^{2} + 1}$
    \item $m_{10} \coloneqq \exp(x)$
\end{enumerate}

\subsection{Multivariate Hyperbolic Distribution}\label{sec:app-mhyp}
The random vector $\bx$ is said to have a multivariate  generalized hyperbolic distribution if
\begin{align*}
\bx \overset{\text{d}}{=} \bu + w \bgamma + \sqrt{w} \bA\bz,
\end{align*}
where $\bz \sim \cN(0, \bI_{o})$, $\bA \in \RR^{p \times o}$, $\bmu, \bgamma \in \RR^{p}$, $w \geq 0$ is a random variable, independent of $\bz$ and has a Generalized Inverse Gaussian distribution, written as $\textbf{GIG}(\lambda, \chi, \psi)$. The density is 
\begin{align}
    f(\bx) = \frac{(\sqrt{\psi/\chi})^{\lambda}(\psi+\bgamma^{\T}\bSigma^{-1}\bgamma)^{p/2-\lambda}}{(2\pi)^{p/2}|\bSigma|^{1/2}K_{\lambda}(\sqrt{\chi\psi})} \times \frac{K_{\lambda-p/2}(\sqrt{\{ \chi + Q(\bx) \}(\psi + \bgamma^{\T}\bSigma^{-1}\bgamma)})\exp\{ (\bc - \bmu)^{\T}\bSigma^{-1} \bgamma\}}{\{ \sqrt{\{ \chi + Q(\bx) \}(\psi + \bgamma^{\T}\bSigma^{-1}\bgamma)} \}^{p/2 - \lambda}},
\end{align}
where $Q(\bx) = (\bx - \bmu)^{\T}\bSigma^{-1}(\bx - \bmu)$, $K_{\lambda}(\cdot)$ is  the modified Bessel function of the third kind. In the experiment, we choose $\lambda = \frac{p+1}{2}$, and fix 
$\bgamma = 0$, $\bmu=0$, which is named multivariate hyperbolic 
distribution.

\subsection{Structures and Training Details of Neural Networks}\label{sec:app-nn}

For the neural network, we let $\cF_{\btheta}(\cdot) = \bO^{\top} \phi\{\bA^{\top}\phi(\cdot)\}$, where $\bA \in \mathbb{R}^{r \times h}$, $\bO \in \mathbb{R}^{h \times q}$, $\phi(\cdot)$ is the point-wise ReLU function and we let $h = 32$.
We implement neural networks in Pytorch. For neural network estimator of $\bB$ discussed in Section \ref{sec:app-rno}, we minimize the loss function 
\begin{align*}
\ell(\btheta, \bB) = \frac{1}{n}\sum_{i=1}^{n}\| \by_{i} - \cF_{\btheta}(\bB^{\top}\bx_{i}) \|^{2}_{2},   
\end{align*}
We use the batch-training strategy to train the neural networks. We use Adam optimizer with Pytorch default parameters. We choose the batch size to be 0.5\% of the sample size and train the neural networks for 200 epochs.
\subsection{Score Functions}\label{sec:app-sf}
We assume that the distributions are all non-degenerated, i.e., dispersion matrices are all positive definite. Let $Q_{(\cdot)}(\bx)$ denote $(\bx - \mu_{(\cdot)})^{\T} \bSigma^{-1}_{(\cdot)}(\bx - \mu_{(\cdot)})$.
\begin{itemize}
    \item [1. ] For $\bx \sim \mathcal{N} (\bmu_{\mathcal{N}}, \bSigma_{\mathcal{N}})$, $P(\bx) = (2\pi)^{-p/2}|\bSigma_{\cN}|^{-1/2}\exp\{ -1/2 \cdot Q_{\cN}(\bx)
 \}$.
     \begin{align*}
        &\bs(\bx) = -\nabla_{\bx} \ln\{P(\bx)\} = \frac{1}{2} \nabla_{\bx}\{Q_{\cN}(\bx) + p\ln{(2\pi)} + \ln(|\bSigma_{\mathcal{N}}|)\} = \bSigma^{-1}_{\mathcal{N}}(\bx-\bmu_{\mathcal{N}}).\\   
        & \bT(\bx) = \bs(\bx)\bs(\bx)^{\top} - \nabla_{\bx}\bs(\bx) = \bSigma^{-1}_{\mathcal{N}}(\bx-\mu_{\mathcal{N}})(\bx-\mu_{\mathcal{N}})^{\top}\bSigma^{-1}_{\mathcal{N}} - \bSigma^{-1}_{\mathcal{N}}.
 \end{align*} 
    \item [2. ] For $\bx \sim  t_{\nu}(\bmu_{t}, \bSigma_{t})$, $\nu \geq 2$, $P(\bx) = (\nu-2)^{\nu/2}\Gamma\{ (\nu+p) /2\}/\pi^{p/2}/|\bSigma_{t}|^{1/2}/\Gamma(\nu/2)/\{\nu-2 + Q_{t}(\bx)\}^{(\nu+p)/2}$.
     \begin{align*}
    \bs(\bx) = &  -\nabla_{\bx} \ln\{P(\bx)\}\\
     = & \nabla_{\bx} \left(\frac{\nu + p}{2}\ln \left \{ \nu - 2 + Q_{t}(\bx)\right\}  - \ln \left[ \frac{(\nu-2)^{\nu/2}\Gamma\{ (\nu + p)/2\}}{\Gamma(\nu/2)\pi^{p/2}|\bSigma_{t}|^{1/2}} \right ]\right )\\
    = & \frac{ (p + \nu)\bSigma^{-1}_{t}(\bx - \bmu_{t})}{ \nu - 2 + Q_{t}(\bx)}.\\
     \bT(\bx) = & \bs(\bx)\bs(\bx)^{\top} - \nabla_{\bx}\bs(\bx) \\
    = & \frac{ (p + \nu)(p + \nu + 2)\bSigma^{-1}_{t}(\bx - \mu_{t})(\bx - \mu_{t})^{\top}\bSigma^{-1}_{t} - (p+\nu)\{ \nu -2  +Q_{t}(\bx)\}\bSigma^{-1}_{t}}{ \{ \nu - 2 + Q_{t}(\bx)\}^{2}}.
 \end{align*}
 \item [3. ] For $\bx \sim \cH_{\chi,\psi}$, 
 \begin{align*}
    P(\bx) = \frac{(\sqrt{\psi/\chi})^{(p+1)/2}}{(2\pi)^{p/2}|\bSigma|^{1/2}(\psi)^{1/2}K_{1/2}(\sqrt{\chi\psi})} \times K_{1/2}(\sqrt{\{ \chi + Q(\bx) \}\psi}[ \sqrt{\{ \chi + Q(\bx) \}\psi} ]^{1/2},
\end{align*}
where $K_{1/2}(x) = \sqrt{\pi/(2x)}\exp(-x)$.
\begin{align*}
    \bs(\bx) = & -\nabla_{\bx} \ln\{P(\bx)\} = \nabla_{\bx} \sqrt{\{\chi + Q_{\cH}(\bx)\}\psi} = \frac{\psi^{1/2}\bSigma^{-1}_{\cH}(\bx - \mu_{\cH})}{\sqrt{\{ \chi +Q_{\cH}(\bx)\}}}.\\
   \bT(\bx) = & \bs(\bx)\bs(\bx)^{\top} - \nabla_{\bx}\bs(\bx) \\
    = & \frac{\{\psi + \psi^{1/2}/\sqrt{\chi + Q_{\cH}(\bx)}\}\bSigma^{-1}_{\cH}(\bx - \mu_{\cH})(\bx - \mu_{\cH})^{\top}\bSigma^{-1}_{\cH} - \psi^{1/2}\sqrt{\chi + Q_{\cH}(\bx)}\}\bSigma^{-1}_{\cH}}{\{ \chi + Q_{\cH}(\bx)\}}
\end{align*}
\end{itemize}

\section{Details of Real Data Analysis}
\subsection{Details on MNIST Dataset}
\subsubsection{Structural Similarity Index Measure}\label{sec-app-ssim}
Given two images $\bX$ and $\bY$, 
    \begin{align}
        \text{SSIM} \coloneqq \frac{(2\mu_{x}\mu_{y} + c_{1})(2\sigma_{xy} +c_{2})}{(\mu^{2}_{x}+\mu^{2}_{y} + c_{2})(\sigma^{2}_{x} + \sigma^{2}_{y} + c_{2})},
    \end{align}
    where $(\mu_{x}, \mu_{y})$ and $(\sigma_{x}, \sigma_{y})$ are means and variances of pixel values of $\bX$ and $\bY$ respectively, $\sigma_{xy}$ is the covariance of pixel values of $\bX$ and $\bY$, $c_{1}=(0.01R)^{2}$, $c_{2} = (0.03R)^{2}$, where $R$ denotes the range of pixel values in
the image. 

\subsubsection{Structure and Training Details of 
  $\cC_{\cE}(\cdot)$}\label{sec:app-cl}

$\cC_{\cE}(\cdot)$ is a one layer neural network composed with softmax function, i.e., $\cC_{\cE}(\bz) = \text{softmax}( \bW^{\top}_{\cE}\bz + \bb_{\cE})$, where $\bW_{\cE} \in \RR^{h\times 10}$ and $\bb_{\cE} \in \RR^{10}$ are the weight matrix and bias vector corresponding to the encoder, respectively, which are estimated by minimizing the following loss function:
\begin{align*}
    \ell(\bW_{\cE}, \bb_{\cE}) = \EE[ \|\text{softmax}\{\bW^{\top}_{\cE}\cE(\bX) + \bb_{\cE}\} - \by^{\ddag} \|^{2}_{2}],
\end{align*}
where $\by^{\ddag}$ is the one-hot encoding of $\by$.  We use the batch-training strategy to train the neural network. We use Adam optimizer with
Pytorch default parameters. We choose the batch size to be 128 and we train the neural network for 50 epochs.

\subsubsection{Structure and Training Details of Autoencoder 
 }\label{sec:app-st}
Table \ref{tab:ae} demonstrates the structure of the AE we use. The same decoder structure is also used for our method. To estimate the parameters of AE $(\btheta_{e}, \btheta_{d})$, we minimize the loss function \eqref{eq:loss-ae}; and to estimate the parameters of decoders of our methods, we minimize the loss functions \eqref{eq:def-sdoc}. For both AE and our methods, we use the batch-training strategy to train the neural
network. We use Adam optimizer with Pytorch default parameters. We choose the batch size to
be 128 and we train the neural network for 50 epochs.

\begin{table}[H]
\caption{Structure of the Autoencoder}
\vskip 0.15in
\centering
\begin{tabular}{clllll}
\hline
& Layer & \multicolumn{3}{l}{}                   & Type                                                 \\ \hline
\multirow{5}{*}{Encoder}    & 1     & \multicolumn{3}{l}{\multirow{10}{*}{}} & fully connected layer (in dims = 784, out dims = 4h) \\ \cline{2-2} \cline{6-6} 
                            & 2     & \multicolumn{3}{l}{}                   & ReLU     \\ \cline{2-2} \cline{6-6} 
                            & 3     & \multicolumn{3}{l}{}                   & fully connected layer (in dims = 4h, out dims = 2h)  \\ \cline{2-2} \cline{6-6} 
                            & 4     & \multicolumn{3}{l}{}                   & ReLU                                                 \\ \cline{2-2} \cline{6-6} 
                            & 5     & \multicolumn{3}{l}{}                   & fully connected layer (in dims = 2h, out dims = h)   \\ \cline{1-2} \cline{6-6} 
\multicolumn{1}{l}{\multirow{5}{*}{Decoder}} & 1     & \multicolumn{3}{l}{}         & fully connected layer (in dims = h, out dims = 2h)   \\ \cline{2-2} \cline{6-6} 
\multicolumn{1}{l}{}        & 2     & \multicolumn{3}{l}{}                   & ReLU                                                 \\ \cline{2-2} \cline{6-6} 
\multicolumn{1}{l}{}        & 3     & \multicolumn{3}{l}{}                   & fully connected layer (in dims = 2h, out dims = 4h)  \\ \cline{2-2} \cline{6-6} 
\multicolumn{1}{l}{}        & 4     & \multicolumn{3}{l}{}                   & ReLU                                                 \\ \cline{2-2} \cline{6-6} 
\multicolumn{1}{l}{}        & 5     & \multicolumn{3}{l}{}                   & fully connected layer (in dims = 4h, out dims = 784) \\ \hline
\end{tabular}
\label{tab:ae}
\vskip -0.1in
\end{table}
\subsubsection{Comparison of Performances of Competing Methods}
We provide Figure~\ref{fig:app-mes}, the enlarged version of Figure~\ref{fig:mes} in Section~\ref{sec:rda}, in the main paper. 
\begin{figure}[h]
\centering    \includegraphics[width=\textwidth]{figures/mnist_10000.png}
\caption{Comparison of qualities of learned latent spaces of competing methods based on three different metrics. The values are calculated empirically on the testing set and medians over 100 repetitions are reported. From Left to right, the metrics are NMSE, SSIM and classification accuracy, respectively.}\label{fig:app-mes}
\end{figure}

\subsection{Details on M1 Patch-seq Dataset}\label{sec:app-rda2}
We choose $n_{1} = 213$, $n_{2} \in \{ 50, 75, 100\}$, to make fair comparison for different $n_{2}$, we let $n_{3} = 900$.  

To imply our first-order method, we use the $\nu$-method introduced in \citet{zhou2020} as our score estimator $\hat{\bs}(\cdot)$. We let $\lambda = e^{-5}$ and use the curl-free version of IMQ kernel. 

We calculate the PCA estimator on $(\bx_{n_{1}+1}, \ldots, \bx_{n})$.  

We provide Figure~\ref{fig:app-rd2}, the enlarged version of Figure~\ref{fig:rda2} in Section~\ref{sec:app-rda2}, in the main paper. 
\begin{figure}[h]
\centering    \includegraphics[width=\textwidth]{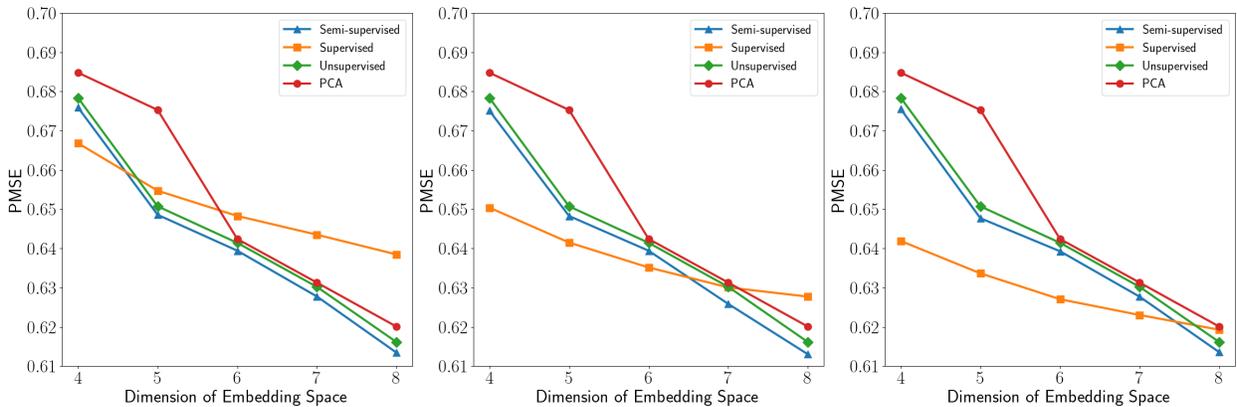}
\caption{Comparison of qualities of learned latent spaces of competing methods based on three different metrics. The values are calculated empirically on the testing set and medians over 200 repetitions are reported. From left to right, the label size $n_{2}$ are 50, 75, 100, respectively.  }\label{fig:app-rd2}
\end{figure}

\section{Proofs of Main Results}  

\subsection{Proofs of Stein's Lemmas}

\subsubsection{Proof of Lemma~\ref{lem:fs}}
\begin{proof}
For $\forall i \in [p], j \in [q]$, 
\begin{align*}
   & \EE\{\by_{j} \bs_{i}(\bx)\} = \EE[ \{f_{j}(\bB^{\T}\bx) +\epsilon_{j}\}\bs_{i}(\bx)] = \EE\{f_{j}(\bB^{\T}\bx)\bs_{i}(\bx)\}  + \EE\{\epsilon_{j}\bs_{i}(\bx) \} =  \EE\{ f_{j}(\bB^{\T}\bx)\bs_{i}(\bx)\}\\
    = & -\int_{\RR^{p}}  f_{j}(\bB^{\T}\bx) \bs_{i}(\bx)P(\bx)\dx = -\int_{\RR^{p}} f_{j}(\bB^{\T}\bx) \nabla_{\bx_{i}}P(\bx)\dx\\
    = & \int_{\RR^{p}} \nabla_{\bx_{i}} f_{j}(\bB^{\T}\bx) P(\bx)\dx - \Delta\\
    = & \bb^{\T}_{i} \EE [\nabla_{\bz} \{ f_{j}(\bB^{\T}\bx)\}],
\end{align*}   
where $\bb_{i}$ is the $i$-th column of $\bB^{\T}$ and
\begin{align*}
   \Delta = &\int_{\RR^{p-1}}  \{\lim_{a \rightarrow \infty} f_{j}(\bB^{\T}\bx) P(\bx)|_{\bx = (x_{1}, \ldots,x_{i-1},a,  x_{i+1}, \ldots, x_{p}) } - \lim_{b \rightarrow -\infty} f_{j}(\bB^{\T}\bx) P(\bx)|_{\bx = (x_{1}, \ldots,x_{i-1},b,  x_{i+1}, \ldots, x_{p}) }\}\\
  & \text{d}(x_{1}, \ldots,x_{i-1}, x_{i+1}, \ldots, x_{p}) =  0.
\end{align*}

Stacking $\EE\{ \bs_{i}(\bx) \by_{j} \}, i \in [p]$ together, we have $\EE\{ y_{j}\bs(\bx) \} =  \bB \EE [\nabla_{\bz} \{ f_{j}(\bB^{\T}\bx)\}]$. 
\end{proof}

\subsubsection{Proof of Lemma~\ref{lem:ss}}
\begin{proof}
For $\forall j \in [q]; \; k,l \in [p]$, 
\begin{align*}
& \EE\{ y_{j}  \bT_{k,l}(\bx)\} =  \EE\{ f_{j}(\bB^{\T} \bx) \bT_{k,l}(\bx)\} = \EE \left \{ \frac{f_{j}(\bB^{\T} \bx)}{P(\bx)} \nabla^{2}_{\bx_{k},\bx_{l}}P(\bx)\right \} \\
= & \int_{\RR^{p}} \left \{ f_{j}(\bB^{\T} \bx) \nabla^{2}_{\bx_{k},\bx_{l}}P(\bx)\right \} \dx = -\int_{\RR^{p}} \left \{\nabla_{\bx_{k}} f_{j}(\bB^{\T} \bx)  \nabla_{\bx_{l}}P(\bx)\right \} \dx + \Delta_{1}\\
= &   -\int_{\RR^{p}} \left \{ \nabla_{\bx_{k}}f_{j}(\bB^{\T} \bx) \nabla_{\bx_{l}} P(\bx)\right \} \dx = - \bb^{\T}_{k} \int_{\RR^{p}} \left \{ \nabla_{\bz} f_{j}(\bB^{\T} \bx)  \nabla_{\bx_{k}}f_{j}(\bB^{\T} \bx)\right \} \dx \\
= & -\bb_{k}^{\T} \Delta_{2} + \bb_{k}^{\T} \int_{\RR^{p}} \left [P(\bx)  \nabla_{\bx_{l}} \{\nabla_{\bz} f_{j}(\bB^{\T} \bx)\}\right ] \dx\\
= & \bb_{k}^{\T} \int_{\RR^{p}} \left [P(\bx)  \nabla_{\bx_{l}} \{\nabla_{\bz} f_{j}(\bB^{\T} \bx)\}\right ] \dx = \bb_{k}^{\T}\int_{\RR^{p}} \left  \{P(\bx) \nabla^{2}_{\bz} f_{j}(\bB^{\T} \bx)\right \} \dx\bb_{l} \\
= & \bb_{k}^{\T} \EE \{ \nabla^{2}_{\bz} f_{j}(\bB^{\T} \bx)\}\bb_{l}, 
\end{align*}
where $\bb_{k}$ and $\bb_{l}$ are the $k$-th and $l$-th columns of $\bB^{\T}$.
\begin{align*}
    \Delta_{1} = &  \int_{\RR^{p-1}} \left [ \lim_{a \rightarrow +\infty} \{ f_{j}(\bB^{\T} \bx) \nabla_{\bx_{l}} P(\bx)\}|_{x_{1}, \ldots, x_{k-1},a,x_{k+1}, \ldots, x_{p} } - \lim_{b \rightarrow -\infty} \{ f_{j}(\bB^{\T} \bx) \nabla_{\bx_{l}} P(\bx)\}|_{x_{1}, \ldots, x_{k-1},b,x_{k+1}, \ldots, x_{p} }\right ] \\
    &\text{d}(x_{1}, \ldots, x_{k-1},x_{k+1}, \ldots, x_{p} )=0,
\end{align*}
and $\Delta_{2}$ is an $r$-dimensional vector, the $i$-th element of which is 
\begin{align*}
&\int_{\RR^{p-1}} \left [ \lim_{a \rightarrow +\infty} \{ \nabla_{\bz_{i}} f_{j}(\bB^{\T} \bx) P(\bx)\}|_{x_{1}, \ldots, x_{l-1},a,x_{l+1}, \ldots, x_{p} } - \lim_{b \rightarrow -\infty} \{ \nabla_{\bz_{i}} f_{j}(\bB^{\T} \bx) P(\bx)\}|_{x_{1}, \ldots, x_{l-1},b,x_{l+1}, \ldots, x_{p} }\right ] \\
    &\text{d}(x_{1}, \ldots, x_{l-1},x_{l+1}, \ldots, x_{p} )=0.
\end{align*}

Stacking all $\EE\{ y_{j}  \bT_{k,l}(\bx)\}$, $k,l \in [p]$ together, we have $\EE\{ y_{j}  \bT(\bx)\} = \bB \EE \{ \nabla^{2}_{\bz} f_{j}(\bB^{\T} \bx)\}\bB^{\T}$. 
\end{proof}

\subsection{Proofs of the First-order Method}
In the following proofs, absolute constants $C_{1}$ to $C_4$ are from Assumptions~\ref{assum:1},~\ref{assum:2},~\ref{assum:4}, and~\ref{assum:taylor}, respectively.

We will use the following simple lemma without proof. 
\begin{lemma}\label{lem:gaussian-score}
    Suppose $\bx \sim \cN(0, \bSigma)$, the first-order score function has the form $\bs(\bx) = \bSigma^{\dagger}\bx$. 
\end{lemma}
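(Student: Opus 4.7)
The plan is a direct computation of the gradient of the log density, split into the invertible and degenerate cases.

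For the non-degenerate case where $\bSigma \succ 0$, I would simply write down the standard Gaussian density
\begin{equation*}
P(\bx) = (2\pi)^{-p/2}|\bSigma|^{-1/2}\exp\!\left(-\tfrac{1}{2}\bx^{\T}\bSigma^{-1}\bx\right),
\end{equation*}
take the logarithm to obtain $\ln P(\bx) = -\tfrac{1}{2}\bx^{\T}\bSigma^{-1}\bx + \text{const}$, and differentiate using the symmetry of $\bSigma^{-1}$ to get $\bs(\bx) = -\nabla_\bx \ln P(\bx) = \bSigma^{-1}\bx$. Since $\bSigma^{\dagger} = \bSigma^{-1}$ whenever $\bSigma$ is invertible, this yields the stated formula.

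For the degenerate case I would reduce to the invertible case via a spectral decomposition $\bSigma = \bU_+\bLambda_+\bU_+^{\T}$, where $\bU_+ \in \RR^{p\times k}$ collects eigenvectors corresponding to the nonzero eigenvalues arranged on the diagonal of $\bLambda_+ \succ 0$. Almost surely $\bx$ lies in the column space of $\bSigma$, so $\bx = \bU_+\bz$ with $\bz \sim \cN(\boldsymbol{0},\bLambda_+)$. Applying the invertible case to $\bz$ gives an intrinsic score $\bLambda_+^{-1}\bz$ on the support, which pulls back to $\bU_+\bLambda_+^{-1}\bU_+^{\T}\bx$; by the definition of the Moore-Penrose pseudo-inverse this equals $\bSigma^{\dagger}\bx$.

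The only conceptual obstacle is what ``score function'' means in the degenerate case, since the Lebesgue density on $\RR^p$ does not exist. I would address this either by working with the density on the affine support (as above) or by noting that the paper only ever invokes the lemma with invertible $\bSigma$ (the non-degenerate case), so the reader can safely read the statement as the standard identity $\bs(\bx)=\bSigma^{-1}\bx$ and the pseudo-inverse notation is just for uniform bookkeeping. Either way the computation is a one-line differentiation.
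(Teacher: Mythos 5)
Your computation is correct. Note, however, that the paper offers no proof to compare against: it introduces this lemma with the words ``We will use the following simple lemma without proof,'' so your direct differentiation of $\ln P(\bx)$ in the non-degenerate case is exactly the standard argument the authors are implicitly relying on, and it matches their definition $\bs(\bx) = -\nabla_{\bx}\ln\{P(\bx)\}$. Your treatment of the degenerate case is a genuine addition rather than a reconstruction of anything in the paper; it is sensible, and your closing observation is the right reading of the notation --- the population covariance is always assumed nonsingular where the lemma is invoked (e.g.\ $\lambda_{\min}(\bSigma)>0$ in Theorem~\ref{thm:2}), and the pseudo-inverse is really there for the plug-in estimator $\hat{\bs}(\bx_i)=\widehat{\bSigma}^{\dagger}\bx_i$, since the sample covariance can be rank-deficient. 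One could quibble that in the truly degenerate case the score as defined in the paper (via a Lebesgue density on $\RR^p$) does not exist, so your ``intrinsic score on the support'' is a reinterpretation of the statement rather than a proof of it as literally written; flagging that, as you do, is the honest resolution.
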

Empirically, if we have observations $\bx_1, \ldots, \bx_{n} \overset{\text{i.i.d.}}{\sim} \cN(0, \bSigma)$, denote the sample variance matrix $\widehat{\bSigma} = 1/n\sum_{i=1}^{n}\bx_i \bx_i^\T$, then, the plug-in estimator of $\bs(\bx_{i})$ can be $\hat{\bs}(\bx_{i}) = \widehat{\bSigma}^{\dagger}\bx_{i}$.

\subsubsection{Proof of Theorem~\ref{thm:pca}}
Based on Lemma~\ref{lem:gaussian-score}, we prove Theorem~\ref{thm:pca}.    
\begin{proof}
Let $\widehat{\bSigma} = (1/n) \sum_{i=1}^{n}  \bx \bx_{i}^{\top}$.
Notice that in the statement of the theorem, by `` the column space learned
from equation~\eqref{eq:def-fe}'', we refer to the case where $\bSigma$ is unknown and replaced by $\widehat{\bSigma}$, so that $\hat{\bs}(\bx) = \widehat{\bSigma}^{\dagger} \bx$. 
We assume that $\widehat{\bSigma}$ has the eigenvalue decomposition $\widehat{\bSigma} = \bU \Tilde{\bLambda} \bU^{\T}$, where
\begin{align*}
\Tilde{\bLambda} =  \begin{pmatrix}
 \Lambda&  \\
 & 0 \\
\end{pmatrix}, 
\end{align*}
$\bLambda$ is a diagonal and invertible matrix. By 
 equation~\eqref{eq:def-fe} and Lemma~\ref{lem:gaussian-score}, we have
\begin{align*}
\widehat{\bB} = \text{SVD}_{l,r}\left \{ \frac{1}{n} \sum_{i=1}^{n}  \bs(\bx_{i}) \by_{i}^{\top} \right \} =  \text{SVD}_{l,r}\left (\widehat{\bSigma}^{\dagger} \widehat{\bSigma} \right ) =  \text{SVD}_{l,r}\left \{ \bU \begin{pmatrix}
 \bI_{p}&  \\
 & 0 \\
\end{pmatrix} \bU^\T \right \} = \bU[:,:r],
\end{align*}
which means that columns of $\widehat{\bB}$ consist of first r leading
principal components of $\widehat{\bSigma}$. 
   
\end{proof}

\subsubsection{Proof of Theorem \ref{thm:1}}
We first prove a lemma that establishes a bound on the moment of a Gaussian random variable.
\begin{lemma}\label{lemma:GaussianMoment}
    If random variable $z \sim \cN(0, \sigma^2)$, then $[\EE|z|^\ell]^{\frac{1}{\ell}} \leq \sigma \sqrt{\ell} $ for $\ell \geq 1$.
\end{lemma}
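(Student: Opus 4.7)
The plan is to reduce to the standard normal case and then control the resulting Gamma function. First I would write $z = \sigma w$ with $w \sim \cN(0,1)$, so that $\EE|z|^\ell = \sigma^\ell \EE|w|^\ell$. The claim then reduces to showing that $(\EE|w|^\ell)^{1/\ell} \leq \sqrt{\ell}$, i.e., that the scale $\sigma$ factors out cleanly.

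Second, I would evaluate $\EE|w|^\ell$ in closed form by folding the standard Gaussian density at $0$, writing
\begin{equation*}
\EE|w|^\ell = \frac{2}{\sqrt{2\pi}} \int_0^\infty t^\ell e^{-t^2/2}\,\dd t,
\end{equation*}
and applying the substitution $u = t^2/2$ to obtain $\EE|w|^\ell = \frac{2^{\ell/2}}{\sqrt{\pi}}\,\Gamma\!\left(\frac{\ell+1}{2}\right)$. It therefore suffices to show $2^{\ell/2}\Gamma((\ell+1)/2)/\sqrt{\pi} \leq \ell^{\ell/2}$.

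Third, I would bound the Gamma factor by an elementary Stirling-type inequality. The convenient form is $\Gamma(x+1) \leq x^x$ for $x \geq 1$, which is immediate from $\log\Gamma(x+1) \leq x\log x$ (a consequence of log-convexity of $\Gamma$ together with $\Gamma(1)=\Gamma(2)=1$, or directly from Stirling). Applying this with $x = (\ell-1)/2$ gives $\Gamma((\ell+1)/2) \leq ((\ell-1)/2)^{(\ell-1)/2}$ for $\ell \geq 3$, so
\begin{equation*}
\frac{2^{\ell/2}\Gamma((\ell+1)/2)}{\sqrt{\pi}} \leq \sqrt{2/\pi}\,(\ell-1)^{(\ell-1)/2} \leq \ell^{\ell/2},
\end{equation*}
where the last step uses that $x \mapsto x^{x/2}$ is increasing for $x \geq 1$ (together with $\sqrt{2/\pi}<1$). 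Taking the $\ell$-th root of both sides finishes the argument for $\ell \geq 3$, and the cases $\ell = 1, 2$ are verified directly from $\EE|w| = \sqrt{2/\pi} \leq 1$ and $\EE w^2 = 1 \leq 2$.

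The only delicate point is the Gamma function inequality: one must pick a form of Stirling sharp enough to absorb the $2^{\ell/2}/\sqrt{\pi}$ prefactor but crude enough to avoid a constants chase. I expect this to be the main (mild) obstacle, and the bound $\Gamma(x+1)\leq x^x$ is the cleanest choice because it eliminates the $\sqrt{2\pi x}/e^x$ factors in Stirling and leaves an inequality that is already slack by a factor of $\sqrt{2/\pi}$.
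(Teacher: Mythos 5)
Your proposal is correct, and it follows the same main line as the paper's proof: both reduce to the exact closed form $\EE|z|^\ell = \frac{(\sqrt{2}\sigma)^\ell}{\sqrt{\pi}}\,\Gamma\!\left(\frac{\ell+1}{2}\right)$ via the substitution $u=t^2/2$, and then bound the Gamma factor by an elementary estimate. The only real difference is that final estimate. The paper splits into even and odd $\ell$, unwinds $\Gamma\!\left(\frac{\ell+1}{2}\right)$ as a product of half-integers, bounds the resulting double factorial by $\sqrt{\ell!}$ to get the intermediate inequality $\EE|z|^\ell \le \sigma^\ell\sqrt{\ell!}$, and finishes with $(\ell!)^{1/(2\ell)}\le\sqrt{\ell}$. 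You instead invoke $\Gamma(x+1)\le x^x$ for $x\ge1$ with $x=(\ell-1)/2$, which avoids the parity split entirely, extends to non-integer $\ell\ge 3$ (though the lemma is only ever applied with integer $\ell$), and lands on the target after absorbing the prefactor $\sqrt{2/\pi}<1$; your direct checks at $\ell=1,2$ cover the remaining cases. The one spot that deserves a line of justification is the inequality $\Gamma(x+1)\le x^x$ itself: the phrase ``log-convexity together with $\Gamma(1)=\Gamma(2)=1$'' only directly gives $\Gamma(x+1)\le 1$ for $x\in[0,1]$, and one then needs a short induction via $\Gamma(x+1)=x\Gamma(x)\le x\,(x-1)^{x-1}\le x^x$ to reach all $x\ge1$ (or a slightly sharpened Stirling bound, since the crude form $\sqrt{2\pi x}\,(x/e)^x e^{1/(12x)}$ is marginally too large exactly at $x=1$). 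With that detail filled in, the argument is complete; the paper's $\sqrt{\ell!}$ route buys a reusable intermediate moment bound, while yours buys a shorter, parity-free computation.
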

\begin{proof}
The $\ell$-th moment of $z$ can be expressed as follows
\begin{align*}
\EE|z|^\ell &= \int_\RR |z|^\ell\frac{1}{\sqrt{2\pi}\sigma}\mathrm{exp}\left(-\frac{z^2}{2\sigma^2}\right)dz\\
&= 2\int_{0}^{\infty} z^\ell\frac{1}{\sqrt{2\pi}\sigma}\mathrm{exp}\left(-\frac{z^2}{2\sigma^2}\right)dz\\
&\overset{(i)}{=} 2\int_{0}^{\infty} \frac{1}{\sqrt{2\pi}\sigma} (\sqrt{2t}\sigma)^{\ell} e^{-t} \cdot \frac{\sqrt{2}\sigma}{2\sqrt{t}}dt\\
&= \frac{(\sqrt{2}\sigma)^\ell}{\sqrt{\pi}} \cdot \int_{0}^{\infty} t^\frac{\ell-1}{2}e^{-t}dt\\
&\overset{(ii)}{=} \frac{(\sqrt{2}\sigma)^\ell}{\sqrt{\pi}} \cdot \Gamma\left(\frac{\ell+1}{2}\right),
\end{align*}
where $(i)$ is by the transformation $z=\sqrt{2t}\sigma$, $(ii)$ is by $\Gamma(x)=\int_0^{\infty}t^{x-1}e^{-t}dt$. If $\ell$ is even, then we have, 
\begin{align*}
\frac{(\sqrt{2}\sigma)^\ell}{\sqrt{\pi}} \cdot \Gamma\left(\frac{\ell+1}{2}\right) &= \frac{(\sqrt{2}\sigma)^\ell}{\sqrt{\pi}} \cdot \frac{\ell-1}{2}\cdot\frac{\ell-3}{2},\ldots, \cdot\frac{1}{2}\cdot\Gamma\left(\frac{1}{2}\right)\leq \frac{(\sqrt{2}\sigma)^\ell}{\sqrt{\pi}} \cdot \frac{\sqrt{\ell!}}{2^{\ell/2}}\cdot\sqrt{\pi}= \sigma^\ell\sqrt{\ell!}.
\end{align*}
On the other hand, if $\ell$ is odd, then we get
\begin{align*}
\frac{(\sqrt{2}\sigma)^\ell}{\sqrt{\pi}} \cdot \Gamma\left(\frac{\ell+1}{2}\right) &= \frac{(\sqrt{2}\sigma)^\ell}{\sqrt{\pi}} \cdot \frac{\ell-1}{2}\cdot\frac{\ell-3}{2},\ldots, \cdot1\cdot\Gamma(1)\leq \frac{(\sqrt{2}\sigma)^\ell}{\sqrt{\pi}} \cdot \frac{\sqrt{\ell!}}{2^{(\ell-1)/2}}\leq \sigma^\ell\sqrt{\ell!}.
\end{align*}
In summary, we arrive at
\begin{align*}
[\EE|z|^\ell]^\frac{1}{\ell} \leq (\sigma^\ell\sqrt{\ell!})^\frac{1}{\ell} \leq \sigma\sqrt{\ell}.    
\end{align*}
\end{proof}
Then, we can obtain the following concentration property of $\bs(\bx) \by^{\T}$.  
\begin{lemma}\label{lemma:1} Suppose model (\ref{model:1.2}) holds, under Assumptions~\ref{assum:1} to \ref{assum:3}, with probability at least $1 - \delta$,
    \begin{align*}        
        \left \|\frac{1}{n}\sum_{i=1}^{n} \bs(\bx_i)\by^{\T}_i - \EE[\bs(\bx)\by^\T] \right \|_F = \cO\left(\sqrt{\frac{pq}{n}}\sqrt{\ln\frac{pq}{\delta}}\right).
    \end{align*}
\end{lemma}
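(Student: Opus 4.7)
The plan is an entry-wise concentration argument followed by a union bound. Let $A$ denote the $p \times q$ matrix $(1/n)\sum_{i=1}^n \bs(\bx_i)\by_i^\T - \EE[\bs(\bx)\by^\T]$, so that $\|A\|_F^2 = \sum_{k=1}^p \sum_{j=1}^q A_{k,j}^2$. If I can show that each entry satisfies $|A_{k,j}| \le t$ with probability at least $1 - \delta/(pq)$ for $t = \cO(\sqrt{\ln(pq/\delta)/n})$, then a union bound across the $pq$ entries together with the crude bound $\|A\|_F \le \sqrt{pq}\,\max_{k,j}|A_{k,j}|$ delivers the claimed rate $\cO(\sqrt{pq\ln(pq/\delta)/n})$.

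Fix $(k,j)$ and consider the iid summands $Z_i := \bs_k(\bx_i)y_{i,j}$. Writing $y_{i,j} = f_j(\bB^\T \bx_i) + \epsilon_{i,j}$ splits $Z_i$ into $\bs_k(\bx_i)f_j(\bB^\T\bx_i) + \bs_k(\bx_i)\epsilon_{i,j}$. Assumption~\ref{assum:1} provides the sub-Gaussian moment growth $[\EE|\bs_k(\bx)|^\ell]^{1/\ell} \le C_1\sqrt{\ell}$ uniformly in $k$, while Assumption~\ref{assum:2} yields the essential boundedness $[\EE|f_j(\bz)|^{2\ell}]^{1/(2\ell)} \le C_2$ uniformly in $j$. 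Combining these through Cauchy--Schwarz at the moment level (the same moment-growth identification that underlies Lemma~\ref{lemma:GaussianMoment}) shows that $\bs_k(\bx_i)f_j(\bB^\T\bx_i)$ is sub-exponential with $\psi_1$-norm bounded by an absolute constant $K$ independent of $(k,j)$; the noise term $\bs_k(\bx_i)\epsilon_{i,j}$ is handled identically under a standard moment control on $\epsilon$. Hence each centered summand $Z_i - \EE[Z_i]$ is sub-exponential with constant at most $K$.

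Applying Bernstein's inequality for centered sub-exponential sums yields
\[
\P\!\left(|A_{k,j}| > t\right) \;\le\; 2\exp\!\left(-c\,n\min\{t^2/K^2,\, t/K\}\right).
\]
Choosing $t \asymp K\sqrt{\ln(pq/\delta)/n}$ places one in the sub-Gaussian regime whenever $n \gtrsim \ln(pq/\delta)$, which produces the per-entry probability $\delta/(pq)$. A union bound over all $pq$ entries and the sup-norm-to-Frobenius inequality then close the argument. The principal obstacle is verifying the uniform sub-exponential norm bound across $(k,j)$: Assumption~\ref{assum:1} supplies uniformity in $k$ and Assumption~\ref{assum:2} supplies it in $j$, but one must track the Orlicz/Hölder constants carefully through the multiplication of moments so that no spurious $\log(pq)$ factor leaks into $K$ itself; everything else is routine.
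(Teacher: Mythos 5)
Your proposal is correct and follows essentially the same route as the paper's proof: entry-wise sub-exponential control of $\bs_k(\bx)y_j$ via the decomposition $y_j=f_j(\bB^\T\bx)+\epsilon_j$ together with Assumptions~\ref{assum:1}--\ref{assum:2} and Cauchy--Schwarz, then Bernstein's inequality, a union bound over the $pq$ entries, and the $\sqrt{pq}$ max-to-Frobenius conversion with $t\asymp\sqrt{\ln(pq/\delta)/n}$. The only detail worth noting is that the "standard moment control on $\epsilon$" you invoke is supplied in the paper by treating $\epsilon$ as Gaussian (via Lemma~\ref{lemma:GaussianMoment}), which matches your intent.
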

\begin{proof}
    We first prove that $\by_j\bs_k(\bx) - \EE \by_j\bs_k(\bx)$ follows the sub-exponential distribution. For simplicity, we denote $y := \by_j$, $s = \bs_k(\bx)$, $\epsilon := \bepsilon_j$ and $f := f_j(\bB^\T \bx)$. Then for $\ell \geq 1$, we have 
    \begin{align*}
    [\EE |ys - \EE(ys)|^\ell]^{\frac{1}{\ell}} 	& \overset{(i)}{=}   [\EE |(f + \epsilon)s - \EE(fs)|^\ell]^{\frac{1}{\ell}}\\ 
   &  \overset{(ii)}{\leq} [\EE |fs|^\ell]^{\frac{1}{\ell}} + [\EE |\epsilon s|^\ell]^{\frac{1}{\ell}} + |\EE (fs)|\\
   &  \overset{(iii)}{\leq}  [\EE |f|^{2\ell}]^{\frac{1}{2\ell}} \cdot [\EE |s|^{2\ell}]^{\frac{1}{2\ell}} + [\EE |\epsilon|^{\ell}]^{\frac{1}{\ell}} \cdot [\EE |s|^{\ell}]^{\frac{1}{\ell}} + [\EE |f|^{2}]^{\frac{1}{2}} \cdot [\EE |s|^{2}]^{\frac{1}{2}}\\
   & \overset{(iv)}{\leq} C_2 \cdot C_1 \sqrt{2\ell} + \sigma_{\epsilon}\sqrt{\ell} \cdot C_1\sqrt{\ell} + C_2\cdot C_1\sqrt{2}\\
   & \overset{(v)}{\leq} 3 C_1C_2\ell+ \sigma_\epsilon C_1\ell\\
   & = (3C_2 + \sigma_\epsilon) C_1\ell
   \end{align*}
    where $(i)$ is by independent, $(ii)$ is by Minkowski's inequality. $(iii)$ holds by Cauchy-Schwarz inequality and $(iv)$ is by the Assumption \ref{assum:1}, \ref{assum:2} and Lemma \ref{lemma:GaussianMoment}. In addition, $(v)$ holds since that $\ell \geq 1$.

    Therefore, each element of $ \bs(\bx)\by^{\T}$ follows the sub-exponential distribution. Hence, by the properties of sub-exponential, we have 
    \begin{align*}
   \|ys - \EE(ys)\|_{\psi_1} &= \sup_{\ell \geq 1} \frac{1}{\ell} [\EE |ys - \EE(ys)|^\ell]^{\frac{1}{\ell}}\\
   & \leq (3C_2 + \sigma_\epsilon) C_1
\end{align*}
On the other hand, according to Bernstein's inequality, for any $t >0$, we have
\begin{align*}
\PP \left(|\frac{1}{n}\sum_{i = 1}^{n} y_i s_i - \EE (ys)|\geq t \right) \leq 2\cdot \exp\left\{ -cn \min \left( \frac{t^2}{K^2}, \frac{t}{K}\right)\right\}
\end{align*}
where $K \leq (3C_2 + \sigma_\epsilon) C_1$ and $c>0$ is an absolute constant. Recall that the above provides a tail bound for the $(j,r)$-th element of $ \bs(\bx)\by^{\T}$. Next, we consider the tail bound for the entire matrix $ \bs(\bx)\by^{\T}$ as follows
\begin{align*}
& \PP \left[\|\frac{1}{n}\sum_{i = 1}^{n} \bs(\bx_i)\by_i^{\T} - \EE \{\bs(\bx)\by^{\T}\}\|_F /\sqrt{pq} \geq t \right]\\
& \leq \sum_{j = 1}^q \sum_{r = 1}^p  \PP \left\{|\frac{1}{n}\sum_{i = 1}^{n} y_i s_i - \EE (ys)|\geq t \right\}\\ 
& \leq 2 pq \cdot \exp\left\{ -cn \min \left( \frac{t^2}{K^2}, \frac{t}{K}\right)\right\}
\end{align*}
Letting $t=(3C_2 + \sigma_\epsilon) C_1\cdot\sqrt{\ln(2pq/\delta)}/\sqrt{cn}$, we get
\begin{align*}
\PP \left[\left\|\frac{1}{n}\sum_{i = 1}^{n} \bs(\bx_i)\by_i^\T - \EE \{ \bs(\bx)\by^\T\}\right\|_F \geq \sqrt{\frac{pq}{cn}} (3C_2+\sigma_\varepsilon)C_1\sqrt{\ln\frac{2pq}{\delta}}\right] \leq \delta
\end{align*}
Therefore, with probability at least $1-\delta$, we have
\begin{align*}
\left\|\frac{1}{n}\sum_{i = 1}^{n} \bs(\bx_i)\by_i^\T  - \EE\{ \bs(\bx)\by^\T\}\right\|_F = \cO\left(\sqrt{\frac{pq}{n}}\sqrt{\ln\frac{pq}{\delta}}\right).
\end{align*}
\end{proof}
We are now prepared to prove the main theory on the first order estimator $\widehat{\bB}$, as defined in equation (\ref{eq:def-fe}).
\begin{proof}[\textbf{Proof of Theorem~\ref{thm:1}}]
According to the result of Lemma \ref{lemma:1}, with probability at least $1 - \delta$, we have
\begin{align*}
\left\|\frac{1}{n}\sum_{i = 1}^{n} \bs(\bx_i) \by_i^\T- \EE \{\bs(\bx)\by^\T\}\right\|_F = \cO\left(\sqrt{\frac{pq}{n}}\sqrt{\ln\frac{pq}{\delta}}\right).
\end{align*}
Next, by the Theorem 2 in  \cite{yu2015useful} and Theorem 19 in \cite{o2018random}, there exists an orthogonal matrix $\hat{\bV} \in \RR^{r\times r}$, such that  
\begin{align*}
\|\bB-\widehat{\bB} \hat{\bV}\|_F & \leq  2^{\frac{3}{2}} \cdot \frac{\left\|\frac{1}{n}\sum_{i = 1}^{n} \bs(\bx_i) \by_i^\T  - \EE \{\bs(\bx)\by^\T \}\right\|_F}{\sigma_r(\bM_1)}\\
& = \cO\left(\sqrt{\frac{rp}{n}}\sqrt{\ln\frac{pq}{\delta}}\right).
\end{align*}
where the last equality holds by Assumption \ref{assum:3}.
\end{proof}

\subsubsection{Proof of Theorem~\ref{thm:2}}
{To begin, we present three lemmas that will be used in the subsequent proofs.}
\begin{lemma}\label{lemma:covariance} ( \citealt{vershynin2018high}, Covariance estimation).
	Let $\bX$ be an $n\times p$ matrix whose rows $\bx_i$ are independent, mean zero, sub-Gaussian random
	vectors in $\mathbb{R}^p$ with the same covariance matrix $\bSigma$. Denote $\widehat{\bSigma} = (1/n)\bX^\T\bX$ as the sample covariance matrix. Then for any $t\ge 0$, with probability at least $1 - 2\exp(-t^2p)$, we have
	\begin{align*}
		\left\|\widehat{\bSigma} - \bSigma\right\|_{2} \le \max(\delta, \delta^2),
	\end{align*}
where $\delta=CLt\sqrt{p/n}$, $L=\max(K, K^2)$ and $K = \max_i\|\bx_i\|_{\psi_2}$.
\end{lemma}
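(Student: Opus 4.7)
The plan is to follow the standard $\varepsilon$-net argument, which is the canonical route to spectral-norm concentration for sample covariance matrices of sub-Gaussian vectors. Since $\widehat{\bSigma} - \bSigma$ is symmetric, its spectral norm admits the variational representation
\begin{align*}
\| \widehat{\bSigma} - \bSigma \|_2 = \sup_{\bu \in S^{p-1}} \bigl| \bu^\T (\widehat{\bSigma} - \bSigma) \bu \bigr| = \sup_{\bu \in S^{p-1}} \Bigl| \tfrac{1}{n} \sum_{i=1}^n \bigl\{ (\bu^\T \bx_i)^2 - \EE (\bu^\T \bx_i)^2 \bigr\} \Bigr|,
\end{align*}
reducing the task to controlling a centered empirical process of quadratic forms uniformly over the unit sphere.

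For a fixed $\bu \in S^{p-1}$, the random variable $\bu^\T \bx_i$ is sub-Gaussian with $\psi_2$-norm at most $K$, so $(\bu^\T \bx_i)^2$ is sub-exponential with $\psi_1$-norm of order $K^2$, and the centered summand inherits the same $\psi_1$ scale. Bernstein's inequality for sub-exponential sums then delivers, for any $s \geq 0$, a pointwise tail of the form $2 \exp\{ -c n \min(s^2 / K^4, \; s / K^2) \}$ for an absolute constant $c > 0$.

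To promote this pointwise bound to a uniform one over $S^{p-1}$, I would take a $(1/4)$-net $\mathcal{N}$ of the sphere with $|\mathcal{N}| \leq 9^p$ (by the standard volumetric bound) and invoke the approximation lemma $\| M \|_2 \leq 2 \max_{\bu \in \mathcal{N}} |\bu^\T M \bu|$ for symmetric $M$. A union bound over $\mathcal{N}$, applied to the Bernstein estimate at level $s = \tfrac{1}{2} \max(\delta, \delta^2)$ with $\delta = C L t \sqrt{p/n}$, absorbs the entropy factor $9^p = \exp(p \ln 9)$ into the Bernstein exponent for $C$ chosen large enough relative to $c$ and $\ln 9$, and yields the claimed probability $1 - 2 \exp(-t^2 p)$.

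The delicate point will be producing the sharp $\max(\delta, \delta^2)$ bound rather than a looser $\delta + \delta^2$. This dichotomy is intrinsic to Bernstein's inequality, whose exponent is the minimum of a Gaussian-like term (small deviations, variance scale $K^4/n$) and a Poisson-like term (large deviations, Orlicz scale $K^2/n$); inverting this minimum produces the two regimes for $s$, and the packaging $L = \max(K, K^2)$ collapses them into a single clean statement. The remaining bookkeeping — verifying centering, checking that the $\psi_1$-norm of the summand is bounded uniformly in $\bu \in S^{p-1}$, and tracking constants to defeat $\ln 9$ — is routine and I would not carry it out in detail.
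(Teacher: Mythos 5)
The paper does not prove this lemma at all: it is imported verbatim from \citet{vershynin2018high} (the covariance-estimation theorem there) and used as a black box in the proof of Theorem~\ref{thm:2}. Your $\varepsilon$-net plus sub-exponential Bernstein argument is exactly the standard proof given in that reference, and the sketch is correct, including the origin of the $\max(\delta,\delta^2)$ dichotomy from inverting the two regimes of the Bernstein exponent.
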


\begin{lemma}\label{lemma:GaussianSingularbound} (\citealt{vershynin2018high}, Random matrices with general rows)
	Let $\bX$ be an $n\times p$ matrix whose rows $\bx_i$ are independent isotropic random vectors in $\mathbb{R}^p$. Assume that for some $L\ge 0$, it holds almost surely for every $i$ that $\|\bx_i\|_2 \le L\sqrt{p}$. Then for every $t\ge 0$, one has with probability at least $1 - 2p\cdot\exp(-ct^2)$,
    \begin{align*}
        \sqrt{n} - tL\sqrt{p} \le \sigma_{\min}(\bX) \le \sigma_{\max}(\bX) \le \sqrt{n} + tL\sqrt{p},
    \end{align*}
     where $\sigma_{\min}(\bX)$ and $ \sigma_{\max}(\bX)$ are the smallest and largest singular values of $\bX$, respectively. 
\end{lemma}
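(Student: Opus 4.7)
The plan is to reduce control of the extreme singular values of $\bX$ to an operator-norm deviation bound on the Gram matrix $\bX^\T\bX$ from its mean $n \bI_p$, and then invoke a matrix Chernoff inequality, whose hypotheses are met precisely because the rows are isotropic and almost surely bounded.

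First I would write $\sigma_{\min}(\bX)^2 = \lambda_{\min}(\bX^\T\bX)$ and $\sigma_{\max}(\bX)^2 = \lambda_{\max}(\bX^\T\bX)$, and decompose $\bX^\T\bX = \sum_{i=1}^n \bx_i \bx_i^\T$ as a sum of independent, rank-one, positive semidefinite matrices. Isotropy yields $\EE[\bx_i \bx_i^\T] = \bI_p$, so $\EE[\bX^\T \bX] = n\bI_p$; the almost sure bound $\|\bx_i\|_2 \le L\sqrt{p}$ gives the operator-norm bound $\|\bx_i\bx_i^\T\|_2 = \|\bx_i\|_2^2 \le L^2 p$ on each summand.

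Next I would apply the matrix Chernoff bound (in Tropp's form) to this PSD sum. With summand bound $R = L^2 p$ and $\lambda_{\min}(\EE[\bX^\T\bX]) = \lambda_{\max}(\EE[\bX^\T\bX]) = n$, this yields, for any $\delta \in (0,1)$,
\begin{align*}
\PP\bigl(\lambda_{\max}(\bX^\T \bX) \ge (1+\delta) n\bigr) &\le p\, \exp\!\left(-\frac{c\, \delta^2\, n}{L^2 p}\right), \\
\PP\bigl(\lambda_{\min}(\bX^\T \bX) \le (1-\delta) n\bigr) &\le p\, \exp\!\left(-\frac{c\, \delta^2\, n}{L^2 p}\right),
\end{align*}
for an absolute constant $c>0$. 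Setting $\delta = t L\sqrt{p/n}$ turns each exponent into $-ct^2$, and a union bound over the two tails yields the claimed failure probability $2p\exp(-ct^2)$. On the intersection event the Gram-matrix eigenvalues satisfy $(1-\delta) n \le \lambda_{\min}(\bX^\T\bX) \le \lambda_{\max}(\bX^\T\bX) \le (1+\delta) n$. Using the elementary inequality $|\sqrt{x} - 1| = |x-1|/(\sqrt{x}+1) \le |x-1|$ valid for $x \ge 0$, this yields $|\sigma_i(\bX)/\sqrt{n} - 1| \le \delta$, hence
\begin{align*}
\sqrt{n} - tL\sqrt{p} \;\le\; \sigma_{\min}(\bX) \;\le\; \sigma_{\max}(\bX) \;\le\; \sqrt{n} + tL\sqrt{p}.
\end{align*}
The case $\delta \ge 1$ makes the lower bound vacuous (it is then nonpositive), so it need not be treated separately.

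The main obstacle is the invocation of matrix Chernoff itself: its proof relies on nontrivial operator-convex tools (Lieb's concavity theorem, Tropp's matrix Laplace transform argument), and some care is needed to match the absolute constant $c$ in the exponent with the one in the statement. A slightly more elementary alternative is an $\epsilon$-net argument: control $\|\bX \bu\|_2^2 = \sum_i \langle \bx_i, \bu\rangle^2$ for a fixed unit $\bu$ via scalar Bernstein (noting $|\langle\bx_i,\bu\rangle| \le L\sqrt{p}$ and $\EE \langle\bx_i,\bu\rangle^2 = 1$), then union-bound over a $1/4$-net of $S^{p-1}$ and convert to operator norm by the standard doubling trick; this replaces the $2p$ prefactor by $9^p$ absorbed into the exponent, at the cost of slightly worse absolute constants but with the same qualitative bound.
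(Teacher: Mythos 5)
This lemma is not proved in the paper at all: it is imported verbatim as a known result of Vershynin (2018) (Theorem~5.41 in the survey version, on matrices with independent isotropic rows of bounded norm) and used as a black box in the proof of Theorem~\ref{thm:2}. Your route --- pass to the Gram matrix $\bX^\T\bX=\sum_i\bx_i\bx_i^\T$, note $\EE[\bx_i\bx_i^\T]=\bI_p$ and $\|\bx_i\bx_i^\T\|_2\le L^2p$, apply a matrix Chernoff/Bernstein bound, and convert eigenvalue deviations back to singular-value deviations --- is in fact the standard proof of that cited theorem, so in spirit you are reconstructing exactly the argument the paper is leaning on rather than finding a new one.

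There is one concrete loose end in your write-up. You correctly observe that for $\delta=tL\sqrt{p/n}\ge1$ the \emph{lower} bound is vacuous, but the \emph{upper} bound $\sigma_{\max}(\bX)\le\sqrt{n}+tL\sqrt{p}$ is not, and the subgaussian form $p\exp(-c\delta^2 n/(L^2p))$ of the matrix Chernoff upper tail that you invoke is only valid for $\delta\in(0,1)$; for $\delta\ge1$ the tail degrades to a subexponential $p\exp(-c\,\delta\ln(1+\delta)\,n/(L^2p))$, which is not $p\exp(-ct^2)$ on its face. The repair is standard but must be said: the event $\{\sigma_{\max}(\bX)\ge\sqrt{n}(1+\delta)\}$ is the event $\{\lambda_{\max}(\bX^\T\bX)\ge n(1+\delta)^2\}=\{\lambda_{\max}\ge n(1+\delta')\}$ with $\delta'=2\delta+\delta^2\ge\delta^2$, and feeding $\delta'$ into the subexponential tail recovers an exponent $\lesssim -c\delta^2 n/(L^2p)=-ct^2$; equivalently, one uses Vershynin's $\max(\delta,\delta^2)$ device relating $\|n^{-1}\bX^\T\bX-\bI_p\|_2\le\max(\delta,\delta^2)$ to $|\sigma(\bX)/\sqrt{n}-1|\le\delta$. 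Separately, your $\epsilon$-net alternative does \emph{not} give ``the same qualitative bound'': replacing the prefactor $2p$ by $9^p$ is only harmless when $t\gtrsim\sqrt{p}$, whereas the paper applies the lemma with $t\asymp\sqrt{\ln p}$, a regime in which the net-based version is vacuous. The matrix-concentration route is therefore not merely cleaner but necessary for the dimension dependence actually used.
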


\begin{lemma}\label{lemma:inverse} (\citealt{xu2020perturbation}, Perturbation of inverse)
	For two matrices $\bW, \bQ\in \mathbb{R}^{p\times p}$, we have
	\begin{align*}
		\left\|\bW^{-1} - \bQ^{-1}\right\|_2 \le\|\bW\|_2\|\bQ\|_2\left\|\bW - \bQ\right\|_2.
	\end{align*}
\end{lemma}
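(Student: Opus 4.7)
The plan is to derive the bound from the classical matrix identity
\begin{equation*}
\bW^{-1} - \bQ^{-1} \;=\; \bW^{-1}(\bQ - \bW)\bQ^{-1},
\end{equation*}
which is valid whenever $\bW$ and $\bQ$ are both invertible. I would verify this in one line by left-multiplying by $\bW$ and right-multiplying by $\bQ$: both sides collapse to $\bQ - \bW$, confirming the equality. Taking spectral norms on the displayed identity and applying submultiplicativity $\|\bA\bB\|_2 \le \|\bA\|_2\|\bB\|_2$ twice immediately produces
\begin{equation*}
\|\bW^{-1}-\bQ^{-1}\|_2 \;\le\; \|\bW^{-1}\|_2\,\|\bQ-\bW\|_2\,\|\bQ^{-1}\|_2,
\end{equation*}
and $\|\bQ - \bW\|_2 = \|\bW - \bQ\|_2$ delivers the quoted form.

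I should flag a concern about the statement as printed: the right-hand side shows $\|\bW\|_2\|\bQ\|_2\|\bW - \bQ\|_2$, but such an inequality is not generally true. A one-line counterexample is $\bW = \varepsilon \bI$ and $\bQ = 2\varepsilon \bI$ for small $\varepsilon>0$, where the left side equals $1/(2\varepsilon)$ while the putative right side scales as $\varepsilon^3$. The factor is almost certainly meant to be $\|\bW^{-1}\|_2\|\bQ^{-1}\|_2$, which is exactly what the derivation above produces and which is the form used downstream when controlling $\|\widehat{\bSigma}^{-1} - \bSigma^{-1}\|_2$ in the proof of Theorem~\ref{thm:2} (combined with Lemma~\ref{lemma:covariance} and lower bounds on $\lambda_{\min}(\bSigma)$ and $\lambda_{\min}(\widehat{\bSigma})$).

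There is no real technical obstacle: the argument is a single algebraic identity followed by submultiplicativity of the operator norm. The only implicit hypothesis is that both $\bW$ and $\bQ$ are invertible, which is already assumed by the appearance of the inverses in the statement; no spectral or perturbative assumption on $\|\bW - \bQ\|_2$ versus $\|\bW^{-1}\|_2$ is needed for the inequality itself, although of course such control is what makes the lemma useful for the plug-in score estimator in Section~\ref{sec:fs}.
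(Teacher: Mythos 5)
Your derivation is correct and is the standard argument: the resolvent-type identity $\bW^{-1}-\bQ^{-1}=\bW^{-1}(\bQ-\bW)\bQ^{-1}$ followed by two applications of submultiplicativity of the spectral norm. The paper itself offers no proof to compare against --- the lemma is imported by citation --- so the only substantive point of comparison is your observation about the statement, and you are right: as printed, with $\|\bW\|_2\|\bQ\|_2$ on the right-hand side, the inequality is false, and your counterexample $\bW=\varepsilon\bI$, $\bQ=2\varepsilon\bI$ (left side $1/(2\varepsilon)$, right side $2\varepsilon^{3}$) settles it. The intended bound must be $\|\bW^{-1}-\bQ^{-1}\|_2\le\|\bW^{-1}\|_2\|\bQ^{-1}\|_2\|\bW-\bQ\|_2$, which is exactly what your identity yields, and this reading is confirmed by how the lemma is invoked in the proof of Theorem~\ref{thm:2}: there the bound on $\|\bSigma^{\dagger}-\widehat{\bSigma}^{\dagger}\|_2$ is written with the factors $\|\bSigma^{\dagger}\|_2\|\widehat{\bSigma}^{\dagger}\|_2$ (that display also contains its own typo, repeating $\|\bSigma^{\dagger}-\widehat{\bSigma}^{\dagger}\|_2$ where $\|\bSigma-\widehat{\bSigma}\|_2$ is clearly meant), and is then combined with Lemma~\ref{lemma:covariance} and a lower bound on $\sigma_{\min}(\widehat{\bSigma})$ precisely as your corrected form requires. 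Your remark that invertibility of both matrices is the only hypothesis needed is also accurate. No gaps.
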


\begin{lemma}\label{lemma:GaussianNorm}
	Let $\bx \in \mathbb{R}^p$ be a sub-Gaussian random vector with parameter $\sigma$, then with probability at least $1 - t$ for $t \in (0, 1)$,
	\begin{align*}
		\|\bx\|_2 \le 4\sigma\sqrt{p} + 2\sigma \sqrt{\ln(1/t)}.
	\end{align*}
\end{lemma}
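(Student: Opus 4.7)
The plan is to reduce the tail bound on $\|\bx\|_2$ to a one-dimensional sub-Gaussian tail via the standard covering argument on the unit sphere. Specifically, I would use the duality $\|\bx\|_2 = \sup_{u \in S^{p-1}} \langle u, \bx\rangle$, where $S^{p-1}$ is the Euclidean unit sphere in $\mathbb{R}^p$, and pick a $1/2$-net $\mathcal{N} \subset S^{p-1}$. A routine volumetric estimate gives $|\mathcal{N}| \le 5^p$, and a short triangle-inequality argument yields the discretization bound $\|\bx\|_2 \le 2 \max_{u \in \mathcal{N}} |\langle u, \bx\rangle|$.

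By the definition of the sub-Gaussian parameter, for every fixed $u \in S^{p-1}$ the scalar random variable $\langle u, \bx\rangle$ is sub-Gaussian with parameter at most $\sigma$, so
\[
\mathbb{P}\bigl(|\langle u, \bx\rangle| \ge s\bigr) \le 2\exp\!\left(-\frac{s^2}{2\sigma^2}\right).
\]
A union bound over $\mathcal{N}$ then gives
\[
\mathbb{P}\bigl(\|\bx\|_2 \ge 2s\bigr) \le 2 \cdot 5^p \exp\!\left(-\frac{s^2}{2\sigma^2}\right).
\]
Setting the right-hand side equal to $t$ and solving yields $2s = 2\sigma\sqrt{2\bigl(p\ln 5 + \ln(2/t)\bigr)}$, which by the subadditivity $\sqrt{a+b}\le\sqrt{a}+\sqrt{b}$ separates cleanly into a $\sqrt{p}$ part and a $\sqrt{\ln(1/t)}$ part, producing a bound of exactly the form $C_1\sigma\sqrt{p}+C_2\sigma\sqrt{\ln(1/t)}$ as stated.

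The only delicate point is matching the explicit constants $4$ and $2$ in the claim. The naive calculation above yields leading coefficients $2\sqrt{2\ln 5} \approx 3.58$ and $2\sqrt{2}\approx 2.83$; the first comfortably respects $4\sqrt{p}$, but the second slightly overshoots $2\sqrt{\ln(1/t)}$. To close this gap I would either replace the $1/2$-net by a finer $\varepsilon$-net and re-optimize the trade-off between the covering number $(1+2/\varepsilon)^p$ and the discretization factor $(1-\varepsilon)^{-1}$ (an $\varepsilon=1/4$ choice already brings the coefficient on $\sqrt{\ln(1/t)}$ below $2$), or apply the sharper one-sided Chernoff form of the sub-Gaussian tail which removes the leading factor $2$ in front of the exponential. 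I expect this constant-chasing to be the main, but purely routine, obstacle; the conceptual substance of the proof is the net-plus-union-bound architecture, which is standard.
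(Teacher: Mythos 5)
Your net-plus-union-bound argument is correct, and you rightly flag the only delicate point: the naive $1/2$-net calculation gives a coefficient $2\sqrt{2}\approx 2.83$ on the $\sqrt{\ln(1/t)}$ term, exceeding the stated $2$, and your proposed fix (a $1/4$-net, which yields $\tfrac{4}{3}\sqrt{2}\approx 1.89$ on that term while keeping the $\sqrt{p}$ coefficient, roughly $\tfrac{4}{3}\sqrt{2\ln 9}\approx 2.8$, comfortably below $4$) does close the gap. There is nothing to compare against on the paper's side: the lemma is stated in the supplement as a standard auxiliary fact without proof or citation, so your derivation is a legitimate self-contained justification of it.
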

\begin{proof}[\textbf{Proof of Theorem~\ref{thm:2}}] According to Taylor's theorem, $f_j(\bB^\T \bx)$ can be expanded at the point $\ba = \boldsymbol{0}_r$ as follows with a function $h_j:\mathbb{R}^{r} \rightarrow \mathbb{R}$
\begin{align*}
    f_j(\bB^\T \bx) &= \langle\nabla f_j(\boldsymbol{0}_{r}), \bB^\T \bx\rangle + h_j(\bB^\T \bx)\|\bB^\T \bx \|_2\\
    &= \bx^\T \bB \nabla f_j(\boldsymbol{0}_{r})^\T  + h_j(\bB^\T\bx)\|\bB^\T\bx\|_2,
\end{align*}
with $\lim_{\bB^\T \bx\rightarrow \boldsymbol{0}_r} h_j(\bB\bx)=0$ for all $j \in [q]$. Hence, we arrive at
\begin{align*}
\bF(\bB^\T \bx)^\T  = \bx^\T \bB \cdot  \nabla\bF(\boldsymbol{0}_r)^\T + \bH(\bB^\T \bx)^\T \|\bB^\T\bx\|_2,
\end{align*}
where $\nabla\bF (\boldsymbol{0}_r) := \left\{\nabla f_1(\boldsymbol{0}_{r}), \ldots, \nabla f_q(\boldsymbol{0}_{r})\right\}^\T \in \RR^{q\times r}$ and $\bH\{\bB^\T \bx) := \left(h_1(\bB\bx), \ldots, h_q(\bB\bx) \right\}^\T \in \RR^{q}$. Therefore, we have 
\begin{align*}
 \frac{1}{n}\sum_{i = 1}^{n} \bs(\bx_i) \by_i^\T =  \frac{1}{n}\sum_{i = 1}^{n} \bs(\bx_i)\bx_i^\T \bB \cdot  \nabla\bF (\boldsymbol{0}_r)^\T +  \frac{1}{n}\sum_{i = 1}^{n} \bs(\bx_i) \left( \bepsilon_i^\T + \bH(\bB^\T \bx_i)^\T \|\bB^\T\bx_i\|_2 \right).
\end{align*}
For Gaussian distribution with zero mean and covariance matrix $\bSigma$, we have the score of $\bx$ is $\bs(\bx) = \bSigma^{\dagger}\bx$. Denote $\widehat{\bSigma} := (1/n) \sum_{i=1}^{n}\bx_i \bx_i^\T$. Thus, if the covariance matrix is unknown, i.e., the $\bSigma$ replaced by $\widehat{\bSigma}$, 
we then have 
\begin{align}\label{pf-eq-terms}
& \left\|\frac{1}{n}\sum_{i = 1}^{n} \bs(\bx_i) \by_i^\T - \frac{1}{n}\sum_{i = 1}^{n} \hat{\bs}(\bx_i) \by_i^\T \right\|_F \nonumber\\
 &= \left\| \frac{1}{n}\sum_{i = 1}^{n} (\bSigma^{\dagger} - \widehat{\bSigma}^{\dagger})\bx_i\bx_i^\T \bB \cdot  \nabla\bF (\boldsymbol{0}_r)^\T +  \frac{1}{n}\sum_{i = 1}^{n} (\bSigma^{\dagger} - \widehat{\bSigma}^{\dagger})\bx_i \left( \bepsilon_i^\T + \bH(\bB^\T \bx_i)^\T \|\bB^\T\bx_i\|_2 \right)\right\|_F\nonumber\\
& \leq {\|(\bSigma^{\dagger}\widehat{\bSigma} - \bI_{p}) \bB \cdot  \nabla\bF (\boldsymbol{0}_r)^\T\|_F} + {\frac{1}{n} \|(\bSigma^{\dagger} -  \widehat{\bSigma}^{\dagger}) \sum_{i = 1}^{n}\left( \bx_i \epsilon_i^\T +  \bx_i\bH(\bB^\T \bx_i)^\T \|\bB^\T\bx_i\|_2  \right)\|_F}\nonumber\\
& \leq \underbrace{\|(\bSigma^{\dagger}\widehat{\bSigma} - \bI_{p})\|_F \cdot \| \bB \cdot  \nabla\bF (\boldsymbol{0}_r)^\T\|_F}_{T_1} + \underbrace{\frac{1}{n} \|(\bSigma^{\dagger} -  \widehat{\bSigma}^{\dagger})\|_F \cdot \| \sum_{i = 1}^{n}\left( \bx_i \epsilon_i^\T +  \bx_i\bH(\bB^\T \bx_i)^\T \|\bB^\T\bx_i\|_2  \right)\|_F}_{T_2} 
\end{align}
For term $T_1$ of (\ref{pf-eq-terms}), we have
\begin{align}\label{pf-cov-1}
\|(\bSigma^{\dagger}\widehat{\bSigma} - \bI_{p})\|_F & =  \| \bSigma^{\dagger} (\widehat{\bSigma} - \bSigma)\|_F\nonumber\\
& \leq \|\bSigma^{\dagger}\|_F \|\widehat{\bSigma} - \bSigma\|_F\nonumber\\
& \leq  \frac{\sqrt{r}}{\sigma_r} \cdot \sqrt{r} \|\widehat{\bSigma} - \bSigma\|_2\nonumber\\
& = \frac{r}{\sigma_r} \cdot \|\widehat{\bSigma} - \bSigma\|_2
\end{align}
Then, by Lemma \ref{lemma:covariance}, 
$\|\widehat{\bSigma} - \bSigma\|_2  \leq \max{(\delta, \delta^2)}$
with probability at least $ 1- 2 \exp(-p t^2)$,  where $\delta = CLt\sqrt{p/n}
$, $L = \max{(K,K^2)}$ and  $K = \max_{i}\|\bx_i\|_{\psi_2}$. Let $t = \sqrt{\{\ln(2/\alpha)\}/pd}$, we get
\begin{align*}
\PP\left( \|\widehat{\bSigma} - \bSigma\|_2 \geq CL\sqrt{\frac{\ln(2/\alpha)}{pd}} \right) \leq \alpha,
\end{align*}
it follows that $
\|\widehat{\bSigma} - \bSigma\|_2 = \cO_p(\sqrt{1/n})
$.
Moreover, by Assumption \ref{assum:3}, we have 
\begin{align*}
\| \bB \cdot  \nabla\bF (\boldsymbol{0}_r)^\T\|_F = \cO(\sqrt{pq/r}).
\end{align*}
Thus, the order of term $T_1$ of (\ref{pf-eq-terms}) is $\sqrt{rpq/n}$. For the second term $T_2$ of (\ref{pf-eq-terms}), according to Lemma \ref{lemma:inverse}, we have 
\begin{align*}
\|\bSigma^{\dagger} -  \widehat{\bSigma}^{\dagger}\|_2 \leq \|\bSigma^{\dagger}\|_2 \|\widehat{\bSigma}^{\dagger}\|_2 \|\bSigma^{\dagger} -  \widehat{\bSigma}^{\dagger}\|_2.
\end{align*}
Note that $\|\bSigma^{\dagger}\|_2 = 1/\sigma_r(\widehat{\bSigma}) = 1/\sigma_r^2(\bX/\sqrt{n})$. In addition, by Lemma \ref{lemma:GaussianSingularbound}, we get
\begin{align*}
\sigma_r(\bX/\sqrt{n}) \geq 1 - tL \sqrt{p/n},
\end{align*}
with probability at least $1 - 2p \cdot \exp(-ct^2)$ for some $L >0$. Let $t = \sqrt{\frac{1}{c}\ln(\frac{2p}{\alpha})}$ and we have
\begin{align*}
\PP\left(\sigma_r(\bX/\sqrt{n}) \geq 1- L\sqrt{\frac{p}{cn}\ln(\frac{2p}{\alpha}}) \right) \leq \alpha.
\end{align*}
Hence, we obtain that $\sigma_r(\bX/\sqrt{n})$ is of the order $\cO_p(1)$. Consequently, 
\begin{align}\label{pf-T2-1}
\|\bSigma^{\dagger} -  \widehat{\bSigma}^{\dagger}\|_2 = \cO_p(1/\sqrt{n}) .
\end{align}
Furthermore, 
\begin{align*}
\frac{1}{n}\left\| \sum_{i=1}^n\left( \bx_i \epsilon_i^\T +  \bx_i\bH(\bB^\T \bx_i)^\T \|\bB^\T\bx_i\|_2  \right)\right\|_F \leq \frac{1}{n}  \left(\|\sum_{i=1}^n 
 \bx_i \bepsilon_i^\T\|_F + \|\sum_{i=1}^n  \bx_i\bH(\bB^\T \bx_i)^\T \|\bB^\T\bx_i\|_2  \|_F\right)
\end{align*}
where $\bE := (\bepsilon_1,\ldots,\bepsilon_n)^\T \in \RR^{n\times q}$.
According to Lemma \ref{lemma:GaussianNorm}, we have 
\begin{align}\label{pf-T2-2}
\frac{1}{n}\|\sum_{i=1}^n \bx_i \bepsilon_i^\T\|_F \leq \max_i \|\bx_i\|_2 \|\bepsilon_i\|_2 = \cO_p(\sqrt{pq})
\end{align}
Similarly, 
\begin{align}\label{pf-T2-3}
\frac{1}{n}\|\sum_{i=1}^n  \bx_i\bH(\bB^\T \bx_i)^\T \|\bB^\T\bx_i\|_2  \|_F = \cO_p(\sqrt{pq})
\end{align}
Combining (\ref{pf-T2-1}), (\ref{pf-T2-2}) and (\ref{pf-T2-3}), we achieve the term $T_2$ is of the order $\cO_p(\sqrt{rpq/n})$. In summary, we have
\begin{align}\label{pf-T2-4}
\left\|\frac{1}{n}\sum_{i = 1}^{n} \bs(\bx_i) \by_i^\T - \frac{1}{n}\sum_{i = 1}^{n} \hat{\bs}(\bx_i)  \by_i^\T\right\|_F = \cO_p(\sqrt{\frac{rpq}{n}})
\end{align}
Therefore, by the Theorem 2 in  \cite{yu2015useful} and Theorem 19 in \cite{o2018random}, there exists an orthogonal matrix $\check{\bV} \in \RR^{r\times r}$, such that  
\begin{align}\label{pf-T2-5}
\|\widehat{\bB} -\check{\bB}\check{\bV}\|_F & \leq  2^{\frac{3}{2}} \cdot \frac{\left\| \frac{1}{n}\sum_{i = 1}^{n} \hat{\bs}(\bx_i)\by_i^\T  - \frac{1}{n}\sum_{i = 1}^{n} \bs(\bx_i)  \by_i^\T \right\|_F}{\sigma_r\{\frac{1}{n}\sum_{i = 1}^{n} \bs(\bx_i)\by_i^\T \}} 
\end{align}
Furthermore, let us investigate the order of $\sigma_r\{\frac{1}{n}\sum_{i = 1}^{n} \bs(\bx_i)\by_i^\T\}$. According to Lemma \ref{lemma:1}, with probability at least $1- \delta$, we have
\begin{align*}
    \left \|\frac{1}{n}\sum_{i=1}^{n} \bs(\bx_i) \by_i - \EE\{\bs(\bx) \by^\T\} \right \|_F = \cO\left(\sqrt{\frac{pq}{n}}\sqrt{\ln\frac{pq}{\delta}}\right),
\end{align*}
%
In addition, according to Wely's inequality for singular values, we have 
\begin{align}\label{pf-T2-6}
\left|\sigma_r \left \{\frac{1}{n}\sum_{i=1}^{n}  \bs(\bx_i) \by^{\T}_i \right\} - \sigma_r \left[\EE\{\bs(\bx)\by^\T \}\right] \right| \leq 
\sigma_1 \left [\frac{1}{n}\sum_{i=1}^{n} \bs(\bx_i)  \by^{\T}_i - \EE\{\bs(\bx) \by^\T\} \right] = \cO_p\left(\sqrt{\frac{pq}{n}}\sqrt{\ln\frac{pq}{\delta}}\right)
\end{align}
In summary, with the Assumption \ref{assum:4}, the (\ref{pf-T2-5}) becomes
\begin{align}\label{pf-T2-7}
\|\widehat{\bB}-\check{\bB}\check{\bV} \|_F & \leq  2^{\frac{3}{2}} \cdot \frac{\left\| \frac{1}{n}\sum_{i = 1}^{n} \hat{\bs}(\bx_i) \by_i^\T - \frac{1}{n}\sum_{i = 1}^{n} \bs(\bx_i)\by_i^\T \right\|_F}{\sigma_r\{\frac{1}{n}\sum_{i = 1}^{n} \bs(\bx_i)\by_i^\T\}}\\
& = \cO\left(\sqrt{\frac{r^2p}{n}}\right).
\end{align}
with probability approaching to 1.
\end{proof}

\subsection{Proofs of the Second-order Method}
In the following proofs, absolute constants $C_{5}$ to $C_8$ come from Assumptions~\ref{ass:second-order-subexp-1} to~\ref{ass:second-order-subexp-3}, and $C_{2}$ in Assumption~\ref{ass:second-order-subexp-4}, respectively. 
\subsubsection{Proof of Lemma~\ref{lem:second-order-concentrate}}
\begin{lemma}\label{lem:second-order-concentrate} 
Suppose model~\eqref{model:1.2} holds, under Assumptions~\ref{ass:second-order-subexp-1} to~\ref{ass:second-order-subexp-3}, with probability at least $1 - \delta$, the following holds:
\begin{align*}
   \left \| \frac{1}{n q}\sum^{n}_{i=1}\sum^{q}_{j=1}[\by_{i,j}\bT(\bx_{i}) - \EE\{\by_{j} \bT(x)\} ]\right\|_{F}
   =  \cO \left\{\frac{p}{\sqrt{n}}\sqrt{\ln\left(\frac{p}{\delta}\right)}\right\}.    
\end{align*}
\end{lemma}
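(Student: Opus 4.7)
The plan is to reduce the matrix-valued concentration to an entrywise sub-exponential Bernstein bound and then assemble the entries via a union bound together with a trivial norm inequality. The shape of the argument mirrors that of Lemma~\ref{lemma:1}, but now the driving randomness comes from the second-order score, which is sub-exponential (rather than sub-Gaussian) by Assumption~\ref{ass:second-order-subexp-1}; this is what ultimately produces the $p/\sqrt{n}$ rate instead of a $\sqrt{pq/n}$ rate.

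First, I would introduce $\bar y_i := (1/q)\sum_{j=1}^q y_{i,j}$ so that the expression to bound equals $\bigl\|(1/n)\sum_{i=1}^n\{\bar y_i \bT(\bx_i)-\EE[\bar y\,\bT(\bx)]\}\bigr\|_F$. Fix an entry $(k,l)\in[p]\times[p]$ and consider the scalar $Z_i^{(k,l)} := \bar y_i T_{k,l}(\bx_i) - \EE[\bar y\,T_{k,l}(\bx)]$. The next step is to show that $Z_i^{(k,l)}$ is sub-exponential with an absolute-constant Orlicz-$\psi_1$ norm. Indeed, since $y_j = f_j(\bz) + \epsilon_j$ is essentially bounded by Assumptions~\ref{ass:second-order-subexp-2} and~\ref{ass:second-order-subexp-3}, so is $\bar y$, with $\|\bar y\|_{\infty}\leq C_6+C_7$. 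Combined with Assumption~\ref{ass:second-order-subexp-1}, Cauchy--Schwarz at the level of moments gives, for every $\ell\geq 1$,
\[
[\EE|\bar y\,T_{k,l}|^\ell]^{1/\ell} \leq [\EE|\bar y|^{2\ell}]^{1/(2\ell)}\,[\EE|T_{k,l}|^{2\ell}]^{1/(2\ell)} \leq (C_6+C_7)\cdot 2C_5\,\ell,
\]
so $\|\bar y\,T_{k,l}\|_{\psi_1}\leq K$ for some absolute constant $K$; centering only doubles this, so each $Z_i^{(k,l)}$ is sub-exponential with parameter $O(K)$.

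Second, I would invoke Bernstein's inequality for i.i.d.\ centered sub-exponential variables to obtain, for every $t>0$,
\[
\PP\!\left(\Bigl|\frac{1}{n}\sum_{i=1}^n Z_i^{(k,l)}\Bigr|\geq t\right)\leq 2\exp\!\left\{-c n\min\!\left(\frac{t^2}{K^2},\,\frac{t}{K}\right)\right\}.
\]
Taking a union bound over the $p^2$ entries and choosing $t=K\sqrt{\ln(2p^2/\delta)/(cn)}$ places us in the Gaussian regime of Bernstein (valid provided $n\gtrsim \ln(p/\delta)$, which is implied by the sample-size requirement $n\gtrsim p^2$ used in Theorem~\ref{thm:second-order}). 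With probability at least $1-\delta$, every entry of the centered average is therefore bounded in magnitude by $t$. The Frobenius norm is then controlled by $\sqrt{p^2\cdot t^2}=p\cdot t = \cO\{p\sqrt{\ln(p/\delta)/n}\}$, matching the claimed rate.

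The main obstacle is nothing deep but purely the careful bookkeeping of Orlicz norms under a product of an essentially bounded variable and a sub-exponential variable, and then checking that centering preserves the sub-exponential parameter up to an absolute constant. Once that is done, the rest is a textbook Bernstein-plus-union-bound argument, and the factor $p$ out front (rather than $\sqrt{p}$) is the unavoidable consequence of bounding the Frobenius norm entrywise, which is exactly the source of the weaker $n\gtrsim p^2$ sample-size requirement in the second-order theory.
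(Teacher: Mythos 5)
Your proposal is correct and follows essentially the same route as the paper's proof: an entrywise moment bound combining the essential boundedness of $(1/q)\sum_j y_j$ with the sub-exponential second-order score via Cauchy--Schwarz, followed by Bernstein's inequality, a union bound over the $O(p^2)$ entries, and the crude Frobenius bound $p\cdot t$. The only cosmetic difference is that the paper splits the $f_j$ and $\epsilon_j$ contributions before applying Cauchy--Schwarz and unions over the upper triangle of the symmetric matrix $\bT$, neither of which changes the rate.
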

\begin{proof}
We first consider $(1/q)\sum_{j=1}^{q}\by_{j} T_{k,l}(\bx)$.
For notational convenience, let $f_{j}$ denote $f_{j}(\bB^{\T}\bx)$. 

\begin{align*}
& \left\{\EE \left |\frac{1}{q}\sum_{j=1}^{q}\by_{j}T_{k,l}(\bx) - \EE \left (\frac{1}{q}\sum_{j=1}^{q}\by_{j}T_{k,l}(\bx) \right ) \right|^{\ell} \right \}^{1/\ell}\\
= & \frac{1}{q} \left\{\EE \left | \sum_{j=1}^{q} 
(f_{j} + \epsilon_{j})  T_{k,l}(\bx)  - \sum_{j=1}^{q}  \EE \left \{(f_{j} + \epsilon_{j}) T_{k,l}(\bx) \right \} \right|^{\ell} \right \}^{1/\ell}\\
\leq & \frac{1}{q} \left \{ \left(\EE\left |\sum_{j=1}^{q} f_{j} T_{k,l}(\bx) \right |^{\ell}\right)^{1/\ell} + \left(\EE\left |\sum_{j=1}^{q} 
\epsilon_{j} T_{k,l}(\bx)\right |^{\ell}\right )^{1/\ell}  \right \} +  \frac{1}{q}    \EE \left [\left|\sum^{q}_{j=1}f_{j} T_{k,l}(\bx) \right |\right ] \\
\leq &\frac{1}{q} \left [ \left \{ \EE\left ( \left |\sum_{j=1}^{q}f_{j}\right |^{2\ell}\right )\EE\left (|T_{k,l}(\bx)|^{2\ell}\right ) \right \}^{1/2\ell} + \left \{\EE\left( \left |\sum_{j=1}^{q}\epsilon_{j}\right|^{2\ell}\right ) \EE\left (|T_{k,l}(\bx)|^{2\ell}\right ) \right \}^{1/2\ell} \right]\\
& + \EE \left (\left|\sum^{q}_{j=1} f_{j} \right |^{2} \right )^{1/2} [\EE \{|T_{k,l}(\bx) |^{2}\}]^{1/2}\\
\leq & \frac{1}{q}\left\{ 2 \left \|\sum^{q}_{j=1} f_j\right \|_{\infty} C_{5}\ell + 2 \left \|\sum^{q}_{j = 1} \epsilon_j \right \|_{\infty} C_{5}\ell + 2 \left\|\sum^{q}_{j=1} f_j \right \|_{2} C_{5}\right \}\\
\leq & \frac{4 + 2\frac{C_{7}}{C_{8}} }{q} \|\sum^{q}_{j=1} f_j \|_{\infty} C_{5}\ell\\
\leq & (4 + 2\frac{C_{7}}{C_{8}}) C_{5} C_{6}\ell. 
\end{align*}

Therefore, for $\forall k,l \in [p]$, $1/q\sum_{j=1}^{q}[\by_{j}T_{k,l}(\bx) - \EE\{\by_{j}T_{k,l}(\bx)\}]$ follows a centered sub-exponential distribution, and $\|1/q\sum_{j=1}^{q}[\by_{j}T_{k,l}(\bx) - \EE\{\by_{j}T_{k,l}(\bx)\}]\|_{\psi_{1}} = \sup_{\ell \geq 1}\frac{1}{\ell} \EE ( | 1/q\sum_{j=1}^{q}[\by_{j}T_{k,l}(\bx) - \EE\{\by_{j}T_{k,l}(\bx)\}]|^{\ell})^{1/\ell} \leq  (4 + 2C_{7}/C_{8}) C_{5} C_{6}$. By Bernstein's inequality (Corollary 2.8.3 in~\citet{vershynin2018}), for $\forall t > 0, k,l \in [p]$,
\begin{align*}
   \PP \left\{\left|\frac{1}{n}\sum_{i = 1}^{n} \left(\frac{1}{q}\sum_{j=1}^{q} [\by_{i,j} T_{k,l}(\bx_i) - \EE \{ \by_{j} T_{k,l}(\bx)\}]\right)\right|\geq t \right\} \leq 2\cdot \exp\left[ -cn \min \left( \frac{t^2}{K^2}, \frac{t}{K}\right)\right],
\end{align*}
where $K \leq  (4 + 2C_{7}/C_{8}) C_{5} C_{6}$ and $c>0$ is an absolute constant. Then, 
\begin{align*}
&\PP \left(\left\|\frac{1}{nq}\sum_{i = 1}^{n} \sum_{j=1}^{q} [\by_{i,j} \bT(\bx_i) - \EE \{ \by_{j} \bT(\bx)\}]\right\|_{F}\geq pt \right) \\
\leq & \sum_{1\leq K \leq l\leq p} \PP \left\{\left|\frac{1}{n}\sum_{i = 1}^{n} \left(\frac{1}{q}\sum_{j=1}^{q} [\by_{i,j} T_{k,l}(\bx_i) - \EE \{ \by_{j} T_{k,l}(\bx)\}]\right)\right|\geq t \right\} \\ 
\leq & p(1 + p) \cdot \exp\left[ -cn \min \left( \frac{t^2}{K^2}, \frac{t}{K}\right)\right] 
\end{align*}

For $\forall \delta >0$, $\exists n$ large enough so that $[\ln\{p(1 + p)/\delta\}]/c/n < 1$. Let $t = K \sqrt{[\ln\{p(1 + p)/\delta\}]/c/n}$, with probability at least $1 - \delta$, 
\begin{align*}
\left\|\frac{1}{nq}\sum_{i = 1}^{n} \sum_{j=1}^{q} [\by_{i,j} \bT(\bx_i) - \EE \{ \by_{j} \bT(\bx)\}]\right\|_{F}\leq pt \leq pK \sqrt{\frac{\ln\left\{\frac{p(p+1)}{\delta}\right\}}{cn}},  
\end{align*}
i.e., with probability at least $1 - \delta$, 
\begin{align*}
\left\|\frac{1}{nq}\sum_{i = 1}^{n} \sum_{j=1}^{q} [\by_{i,j} \bT(\bx_i) - \EE \{ \by_{j} \bT(\bx)\}]\right\|_{F} = \cO \left(p\sqrt{\frac{\ln\left(\frac{p}{\delta}\right)}{n}}\right).  
\end{align*}

\end{proof}

\subsubsection{Proof of Theorem~\ref{thm:second-order}}
\begin{proof}
By Lemma~\ref{lem:second-order-concentrate}, with probability at least $1 - \delta$,
\begin{align*}
\left\|\frac{1}{nq}\sum_{i = 1}^{n} \sum_{j=1}^{q} [\by_{i,j} \bT(\bx_i) - \EE \{ \by_{j} \bT(\bx)\}]\right\|_{F} = \cO \left(p\sqrt{\frac{\ln\left(\frac{p}{\delta}\right)}{n}}\right).  
\end{align*}

Moreover, by Lemma~\ref{lem:ss} and Assumption~\ref{ass:second-order-subexp-4}, $\lambda_{r}[1/q \sum_{j=1}^{q}  \EE \{ \by_{j} \bT(\bx)\}] = \lambda_{r}[1/q\sum_{j=1}^{q} \EE \{ \nabla^{2} f_{j}(\bB^{\T}\bx) \}] = \Omega(1/r)$. By Theorem 2 in  \cite{yu2015useful} there exists an orthogonal matrix $\hat{\bV} \in \RR^{r\times r}$, such that 
\begin{align*}
\|\bB-\Tilde{\bB} \hat{\bV}\|_F \leq 2^{\frac{3}{2}}\frac{
\left\|\frac{1}{nq}\sum_{i = 1}^{n} \sum_{j=1}^{q} [\by_{i,j} \bT(\bx_i) - \EE \{ \by_{j} \bT(\bx)\}]\right\|_{F}}{\lambda_{r}[1/q \sum_{j=1}^{q}  \EE \{ \by_{j} \bT(\bx)\}]}.
\end{align*}
Therefore, with probability at least $1 - \delta$, 
\begin{align*}
\inf_{\bV\in\mathcal{V}_r}\|\bB-\Tilde{\bB} \bV\|_F =   \cO \left(p r\sqrt{\frac{\ln\left(\frac{p}{\delta}\right)}{n}}\right). 
\end{align*}
\end{proof}


\end{document}